\renewcommand{\cite}{\citep}
\begin{document}

%

%

\twocolumn[

\aistatstitle{Global Convergence and Generalization Bound of\\ Gradient-Based Meta-Learning with Deep Neural Nets}

\aistatsauthor{ Haoxiang Wang \And Ruoyu Sun \And  Bo Li }

\aistatsaddress{ University of Illinois, Urbana-Champaign } ]

\begin{abstract}

Gradient-based meta-learning (GBML) with deep neural nets (DNNs) has become a popular approach for few-shot learning. However, due to the non-convexity of DNNs and the bi-level optimization in GBML, the theoretical properties of GBML with DNNs remain largely unknown. In this paper, we first aim to answer the following question: \emph{Does GBML with DNNs have global convergence guarantees?} We provide a positive answer to this question by proving that GBML with over-parameterized DNNs is guaranteed to converge to \emph{global optima} at a linear rate. The second question we aim to address is: \emph{How does GBML achieve fast adaption to new tasks with prior experience on past tasks?} To answer it, we theoretically show that GBML is equivalent to a \emph{functional gradient descent} operation that explicitly propagates experience from the past tasks to new ones, and then we prove a generalization error bound of GBML with over-parameterized DNNs.

\end{abstract}

\section{Introduction}\label{sec:intro}

Meta-learning, or learning-to-learn (LTL) \cite{learningtolearn}, has received much attention due to its applicability in few-shot image classification \cite{few-shot-survey,hospedales2020metalearning}, meta reinforcement learning \cite{vanschoren2018meta,Finn:EECS-2018-105,hospedales2020metalearning}, and other domains such as natural language processing \cite{yu-etal-2018-diverse,bansal2019learning} and computational biology \cite{luo2019mitigating}. The primary motivation for meta-learning is to fast learn a new task from a small amount of data, with prior experience on similar but different tasks.
Gradient-based meta-learning (GBML) is a popular
meta-learning approach, due to its simplicity and good performance in many meta-learning tasks \cite{Finn:EECS-2018-105}. Also, GBML represents a family of meta-learning methods that originate from the model-agnostic meta-learning (MAML) algorithm \cite{maml}. MAML formulates a bi-level optimization problem, where the inner-level objective represents the adaption to a given task, and the outer-level objective is the meta-training loss.
Most GBML methods can be viewed as variants of MAML \citep{reptile,adaptive-GBML,prob-maml,finn2019online,imaml},
and they are almost always applied together with deep neural networks (DNNs) in practice. Even though GBML with DNNs is empirically successful, this approach still lacks a thorough theoretical understanding.

\textbf{\underline{Motivations}}

To theoretically understand why GBML works well in practice, we shall comprehend the \textit{optimization} properties of GBML with DNNs. 
Several recent works theoretically analyze GBML in the case of \textit{convex} objectives \cite{finn2019online,provable-gbml,adaptive-GBML,hu2020biased,xu2020meta}.
However, DNNs are always \textit{non-convex}, so these works 
do not directly apply to GBML with DNNs.
On the other hand, \cite{maml_nonconvex,ji2020multistep,imaml,zhou2019metalearning} consider the non-convex setting, but they only provide convergence to \textit{stationary points}.
Since \textit{stationary points} can have high training/test error, the convergence to them is not very satisfactory. Hence, a crucial question remains unknown for GBML optimization is: \emph{Does GBML with DNNs have global convergence?}\footnote{See Appendix \ref{supp:related-works:meta-learning} for a discussion on the concurrent work \citet{wang2020global}, which is released \textit{after} the first submission of our paper to another conference.} This question motivates us to analyze the optimization properties of GBML with DNNs.

The original intuition behind the design of GBML is that for non-linear DNNs, the meta-learned parameter initializations can encode experience and error from past tasks to achieve fast adaption to new tasks, given past and new tasks are similar \cite{maml}. However, there is no rigorous theory to confirm this intuition. Hence, an important question that remains theoretically unclear is, for non-linear DNNs, \emph{how does GBML achieve fast adaption to a new task with prior experience on past tasks?} In other words, it is mysterious why DNNs trained under GBML enjoy \textit{low generalization error} to unseen tasks.

\textbf{\underline{Technical Challenges.}}
In this paper, our primary goal is to provide theoretical guarantees on the \textit{optimization} and \textit{generalization} properties of GBML with DNNs. 
There exist two main challenges: (1) the non-convexity of DNNs, (2) the bi-level formulation of GBML. In fact, both challenges are entangled together, which makes the theoretical analysis more challenging. To tackle them, we make use of the over-parameterization property of DNNs to ameliorate the non-convexity issue,
and develop a novel analysis to handle the bi-level GBML that is more complex than supervised learning.

{\bf \underline{Main Contributions.}}
\vspace{-0.5em}
\begin{itemize}[leftmargin=*,align=left]
    \item \textbf{Global Convergence of GBML with DNNs}: 
    We prove that with over-parameterized DNNs (i.e., DNNs with a large number of neurons in each layer), GBML is guaranteed to \textit{converge to global optima with zero training loss} at a linear rate. The key to our proof is to develop bounds on the gradient of the GBML objective, and then analyze the optimization trajectory of DNN parameters trained under GBML. 
    Furthermore, we show that for infinitely wide DNNs, GBML becomes equivalent to a kernel regression with a new class of kernels, which we name as Meta Neural Kernels (MNKs).

  \item \textbf{Generalization Bound for GBML with DNNs}: 
Based on the global convergence analysis, we theoretically demonstrate that with over-parameterized DNNs, GBML is equivalent to a \textit{functional gradient descent} that explicitly propagates the prior knowledge about past tasks to new tasks. Motivated by the functional gradient descent formulation, we initiate a theoretical analysis on the generalization ability of GBML. Finally, we prove a \textit{generalization error bound for GBML with over-parameterized neural nets}, by leveraging the Meta Neural Kernel we derived. Besides, we perform an empirical validation for the generalization bound on both synthetic and real-world datasets.
To the best of our knowledge, it is the first generalization bound for GBML with non-linear DNNs, and we believe it provides an in-depth theoretical understanding of GBML.
\end{itemize}
Apart from meta-learning, our analysis might be of independent interest to researchers in other areas such as hyper-parameter optimization, which contains a popular class of algorithms that shares similar formulation with GBML \cite{franceschi2018bilevel}.

{\bf \underline{Related Works.}} Except for papers discussed in this section, more works are related to this paper in the literature of supervised learning and meta-learning, and we discuss these works detailedly in Appendix \ref{supp:related-works}.

\section{Preliminaries} \label{sec:preliminaries}

In this section, we start by introducing the typical setup for few-shot learning. Then we review MAML, the seed of most GBML methods. Notations defined in this section will be adopted in the entire paper.

\subsection{Few-Shot Learning}
\label{sec:few-shot-learning}

Consider a few-shot learning problem with a set of \textit{training tasks} that contains $N$ \textit{supervised-learning tasks} $\{\task_i\}_{i=1}^{N}$. Each task is represented as
$$\task_i= (\xyxyi)\in \mathbb R^{n\times d} \times \mathbb R^{n  k} \times \mathbb R^{m\times d} \times \mathbb R^{m  k},$$ where $(\sX_i,\sY_i)$ represents
$n$ \textit{query} samples (i.e. test samples of $\task_i$) with corresponding labels, while $(\sX_i',\sY_i')$ represents $m$ \textit{support} samples (i.e. training samples of $\task_i$) with labels. For convenience, we denote $$\X=(\sX_i)_{i=1}^N,\Y=(\sY_i)_{i=1}^N,\X'=(\sX_i')_{i=1}^N,\Y'=(\sY_i')_{i=1}^N.$$
In few-shot learning, $\{\task_i\}_{i=1}^N$ are training tasks for meta-learners to train on (i.e., for meta-training). 
In the inference stage, an arbitrary test task $\task = (\xyxy)$ is picked, and the labelled support samples $(X',Y')$ are given to the trained meta-learner as input, then the meta-learner is asked predict the labels of the query samples $X$ from $\task$.

\textbf{Remarks.} This few-shot learning problem above can also be called a $n$-shot $k$-way learning problem. See Fig. \ref{fig:domain-gen} for an illustration of this problem setting.

\subsection{Gradient-Based Meta-Learning}\label{sec:GBML}

\begin{algorithm}[t]
\caption{MAML for Few-Shot Learning}
\label{alg:mamlsup}
\begin{algorithmic}[1]
{\footnotesize
\REQUIRE $\{\task_i\}_{i=1}^N$: Training Tasks
\REQUIRE $\eta$, $\lambda$: Learning rate hyperparameters
\STATE Randomly initialize $\theta$
\WHILE{not done}
  \FORALL{$\task_i$} 
    \STATE Evaluate the loss of $f_\theta$ on support samples of $\task_i$: $ \ell(f_\theta(X_i'),Y_i')$
    \STATE Compute adapted parameters $\theta_i'$ with gradient descent: $\theta_i'=\theta-\lambda \nabla_\theta \ell(f_\theta(X_i'),Y_i')$
    \STATE Evaluate the loss of $f_{\theta_i'}$ on query samples of $\task_i$: $\ell(f_{\theta_i'}(X_i), Y_i)$
 \ENDFOR
 \STATE Update parameters with gradient descent:\\
    $\theta \leftarrow \theta - \eta \nabla_\theta \sum_{i = 1}^N  \ell ( f_{\theta_i'}(X_i),Y_i)$ 
\ENDWHILE
}
\end{algorithmic}
\end{algorithm}
GBML is equipped with parametric models, which are almost always neural networks in practice. Consider a parametric model $f$ with parameters $\theta$ such that $f_\theta: \mathbb{R}^d \mapsto \mathbb{R}^k$, and its output on arbitrary sample $x\in \mathbb{R}^d$ is $f_\theta(x)$. Suppose an arbitrary task is given as $\task = (\xyxy)$. In GBML, it is helpful to define the \textit{meta-output}, $F: (X',Y') \mapsto f_{\theta'}$, a mapping depending on support samples and labels such that $F_\theta(\cdot, X',Y') = f_{\theta'}(\cdot)$, where $\theta'$ is the \textit{adapted parameters} depending on $\theta$ and $(X',Y')$. Specifically, we define $F$ as the vectorized output of the model $f$ with adapted parameters,
\begin{align}
\label{eq:meta-output}
F_\theta(\xxy) = f_{\theta'}\left(\sX\right) = \operatorname{vec} \pp{ \left [f_{\theta'} (x)\right]_{x \in X}} \in \mathbb R^{n k}
\end{align}
where the adapted parameters $\theta'$ is obtained as follows: 
use $\theta$ as the initial parameter and update it by $\tau$ steps of gradient descent on support samples and labels $(\sX',\sY')$, with learning rate $\lambda$ and loss function $\ell$. Mathematically, $\forall j=0,...,\tau-1$, we have
\begin{align}\label{eq:meta-adaption-descrete}
    &\theta=\theta_0, ~~\theta'=\theta_\tau, \theta_{j+1} = \theta_{j} - \lambda \nabla_{\theta_{j}} \ell(f_{\theta_j}(\sX'),\sY')
\end{align}
With the square loss function $\ell (\hat y, y)=\frac{1}{2}\|\hat y - y\|_2^2$, the training objective of MAML\footnote{\label{footnote:maml-variants}Although we only demonstrate the MAML objective in (\ref{eq:MAML-obj}), slight modifications to (\ref{eq:MAML-obj}) can convert it to many other GBML objectives, including 1st-order MAML \cite{maml,reptile}, Adaptive GBML \cite{adaptive-GBML}, WrapGrad \cite{WrapGrad} and Meta-Curvature \cite{meta-curvature}, etc.}
is
\begin{align} \label{eq:MAML-obj}
\mathcal L(\theta) &= \sum_{i=1}^N \ell (F_\theta(\xxyi), Y_i)
\nonumber\\
&=\frac{1}{2}\sum_{i=1}^N\|F_\theta(\xxyi)-Y_i\|_2^2\nonumber\\ &=\frac{1}{2}\|F_\theta(\XXY)-\Y\|_2^2 
\end{align}
where $F_\theta(\XXY)\equiv \left(F_\theta(\xxyi)\right)_{i=1}^N = \operatorname{vec}\pp{\left[ F_\theta(\xxyi)\right]_{i\in[N]}}$ is the 
concatenation of meta-outputs on \textit{all} training tasks. 

\textbf{Remarks.} We provide the algorithm of MAML for few-shot learning in Algorithm \ref{alg:mamlsup}. For simplicity, Algorithm \ref{alg:mamlsup} shows MAML with \textit{one-step} meta-adaptation, which is equivalent to the case that $\tau = 1$ in \eqref{eq:meta-adaption-descrete}. 
In this paper, we focus on MAML, but our results could also be extended to variants of MAML\footref{footnote:maml-variants}.

\section{Global Convergence of Gradient-Based Meta-Learning with Deep Neural Nets}
\label{sec:global-convergence}

In this section, we will show for sufficiently over-parameterized neural networks, GBML is guaranteed to convergence to global optima under gradient descent at a linear rate. This convergence analysis also gives rise to an analytical expression of GBML output. Besides, in the infinite width limit (i.e., neural nets are extremely over-parameterized), we prove that GBML is equivalent to a kernel regression with a new class of kernels, which we name as Meta Neural Kernels.
\paragraph{Notation and Setup}Consider a neural network $f_\theta$ with $L$ hidden layers,
where $\theta \in \mathbb R^D$ 
is a vector containing all the parameters of the network.
For $i\in[L]$, we use $l_i$ to denote the width of the $i$-th hidden layer. In this paper, we consider all hidden layers have the same width $l$ for simplicity\footnote{This same-width assumption is not a necessary condition. One can also define $l=\min{\{l_i\}_{i=1}^L}$ instead and all theoretical results in this paper still hold true.}, i.e., $l_1=l_2=\dots=l_L=l$.
For notational convenience, we denote the Jacobian of the meta-output on training data as $J(\theta)=\nabla_\theta F_{\theta}(\XXY)$, and define a kernel function as $\metantk_{\theta}(\cdot, \ast)\coloneqq \frac{1}{l}\nabla_\theta F_\theta(\cdot) \nabla_\theta F_\theta(\ast)^\top$. The notation $\theta_t$ represents the parameters at the training time $t$ (i.e., number of gradient descent steps). For convenience, we denote $F_t(\cdot) \equiv F_{\theta_t}(\cdot)$, $f_t(\cdot) \equiv f_{\theta_t}(\cdot)$ and $\metantk_t(\cdot,\ast) \equiv \metantk_{\theta_t}(\cdot,\ast)$. Besides, we define $\eta$ as the learning rate for gradient descent on the GBML objective, (\ref{eq:MAML-obj}); for any diagonalizable matrix $M$, we use $\lev(M)$ and $\Lev(M)$ to denote the least and largest eigenvalues of $M$. These notations are adopted in the entire paper.
\subsection{Global Convergence Theorem}
To prove the global convergence of GBML with DNNs, we need to first prove the Jacobian of the meta-output, $J$, changes locally in a small region under perturbations on network parameters, $\theta$. Because of the non-convexity of DNNs and the bi-level formulation of GBML, it is non-trivial to obtain such a result. However, we manage to prove this by developing a novel analysis to bound the change of Jacobian under parameter perturbations, shown below as a lemma, with detailed proof in Appendix \ref{supp:global-convergence}.
\begin{lemma}[Local Lipschitzness of Jacobian]
\label{lemma:local-Liphschitzness} 
Suppose\footnote{This assumption is realistic in practice. For example, the official implementation of MAML \cite{maml} for few-shot classification benchmarks adopts (i) $\tau=1, \lambda = 0.4$ and (ii) $\tau=5,\lambda=0.1$, which both satisfy our assumption.} $\tau = \cO(\frac{1}{\lambda})$, then there exists $K>0$ and $l^*>0$ such that: $\forall ~C>0$ and $l > l^*$, the following inequalities hold true with high probability over random initialization,
\begin{align}\label{eq:jacobian-lip}
 \forall \theta, \, \Bar \theta \in B(\theta_0, C l^{-\frac 1 2}),  \begin{cases}  
    \frac 1 {\sqrt l}\|J(\theta) - J(\Bar \theta)\|_{F} \leq K\|\theta - \Bar \theta\|_2
    \\
    \\
    \frac 1 {\sqrt l} \|J(\theta)\|_{F}  \leq K 
    \end{cases}
\end{align}
where $B$ is a neighborhood defined as
\begin{align}
\label{def:B}
    B(\theta_0, R) := \{\theta: \|\theta-\theta_0\|_2 < R\}.   
\end{align}
\end{lemma}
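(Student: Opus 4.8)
The plan is to reduce both inequalities for the meta-output Jacobian $J(\theta)=\nabla_\theta F_\theta(\XXY)$ to the corresponding, already-understood bounds for the \emph{base} network $f$, by tracking how the $\tau$-step inner adaptation propagates to first and second derivatives. Write $\phi_j(\theta)=\theta_j$ for the $j$-th inner iterate of \eqref{eq:meta-adaption-descrete} (so $\phi_0=\mathrm{id}$ and $\phi_\tau(\theta)=\theta'$), and let $J_f(\psi)$ denote the base Jacobian $\nabla_\psi f_\psi(\cdot)$ evaluated on the query inputs of a task. Since $F_\theta(\cdot)=f_{\phi_\tau(\theta)}(\cdot)$ on each task, the chain rule gives the per-task factorization
\[
J_{\mathrm{task}}(\theta)=J_f(\phi_\tau(\theta))\,D\phi_\tau(\theta),
\]
and $J(\theta)$ is just the stack of the $N$ such blocks, so it suffices to bound a single block and then use $\|J(\theta)\|_F^2=\sum_{i=1}^N\|(\text{block }i)\|_F^2$, absorbing the constant $N$ into $K$.

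First I would record the standard over-parameterization ingredients for $f$ alone (the neural tangent kernel regime): there exist $K_0>0$ and $l^*>0$ such that, with high probability over the random initialization, on a ball around $\theta_0$ of the relevant radius the base Jacobian satisfies $\tfrac1{\sqrt l}\|J_f(\psi)\|_F\le K_0$ and $\tfrac1{\sqrt l}\|J_f(\psi)-J_f(\bar\psi)\|_F\le K_0\|\psi-\bar\psi\|_2$ (so the per-coordinate Hessian of $f$ has a controlled operator norm), and the initial outputs $f_{\theta_0}(X_i),f_{\theta_0}(X_i')$ — hence the initial support residuals $f_{\theta_0}(X_i')-Y_i'$ — are $\cO(1)$. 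This is the only place the randomness enters; everything afterward is deterministic.

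The core of the argument is to control $D\phi_\tau$ and its Lipschitz constant. \textbf{(i) The inner trajectory stays local.} By induction on $j\le\tau$: while $\theta_j$ lies in the good ball, the support gradient $\nabla_{\theta_j}\ell(f_{\theta_j}(X'),Y')=J_f(\theta_j)^\top(f_{\theta_j}(X')-Y')$ is bounded by (Jacobian bound)$\,\times\,$(residual bound), and the residual stays $\cO(1)$ because it changes by at most (Jacobian bound)$\,\times\,$(displacement so far); summing the $\tau$ steps and invoking $\tau=\cO(1/\lambda)$ keeps $\theta_\tau=\phi_\tau(\theta)$ inside a ball $B(\theta_0,\cO(l^{-1/2}))$ on which the Step-1 bounds remain valid. \textbf{(ii) $\|D\phi_\tau\|_{\mathrm{op}}$ is bounded.} Differentiating the composition, $D\phi_\tau(\theta)=\prod_{j=\tau-1}^{0}\big(I-\lambda\,\nabla^2_{\theta_j}\ell(f_{\theta_j}(X'),Y')\big)$ with $\nabla^2_{\theta_j}\ell=J_f(\theta_j)^\top J_f(\theta_j)+(\text{Hessian-of-}f\text{ remainder})$; the Gram term is positive semidefinite, so (using the boundedness of the normalized inner curvature on the good ball) each factor has operator norm at most $1+\cO(\lambda)$, and the product over the $\tau=\cO(1/\lambda)$ near-identity factors is $(1+\cO(\lambda))^{\tau}=\cO(1)$ — in particular $\phi_\tau$ is itself $\cO(1)$-Lipschitz. \textbf{(iii) $D\phi_\tau$ is Lipschitz.} Each factor depends on $\theta$ only through $\theta_j=\phi_j(\theta)$, which by (i)–(ii) stays in the good ball and is a Lipschitz function of $\theta$; combined with the Jacobian- and Hessian-Lipschitzness of $f$ from Step 1, each factor is Lipschitz in $\theta$, so a product rule across the $\tau$ factors gives $\|D\phi_\tau(\theta)-D\phi_\tau(\bar\theta)\|_{\mathrm{op}}\le K_1\|\theta-\bar\theta\|_2$ on the ball.

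Finally I would assemble both claims from the factorization. For the size bound, $\tfrac1{\sqrt l}\|J(\theta)\|_F\le\big(\tfrac1{\sqrt l}\|J_f(\phi_\tau(\theta))\|_F\big)\|D\phi_\tau(\theta)\|_{\mathrm{op}}\le K_0\cdot\cO(1)$ by Step 1 at $\phi_\tau(\theta)$ (in the good ball by (i)) and (ii). For the Lipschitz bound, split $J(\theta)-J(\bar\theta)=\big(J_f(\phi_\tau(\theta))-J_f(\phi_\tau(\bar\theta))\big)D\phi_\tau(\theta)+J_f(\phi_\tau(\bar\theta))\big(D\phi_\tau(\theta)-D\phi_\tau(\bar\theta)\big)$ and bound the first term by (base Jacobian-Lipschitz)$\,\times\,$($\phi_\tau$-Lipschitz)$\,\times\,\|D\phi_\tau\|_{\mathrm{op}}$ and the second by (base Jacobian bound)$\,\times\,$($D\phi_\tau$-Lipschitz); taking $K$ to be the maximum of the resulting constants and $l^*$ as in Step 1 finishes the proof. \textbf{Main obstacle.} Steps (ii)–(iii) are where the bi-level structure genuinely bites: we must differentiate a composition of $\tau$ gradient-descent maps — each already containing second derivatives of $f$ — and show that the compounding stays $\cO(1)$ rather than growing geometrically, and the hypothesis $\tau=\cO(1/\lambda)$ is exactly what controls the $\tau$-fold products of the near-identity factors $I-\lambda(\cdots)$.
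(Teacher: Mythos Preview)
Your chain-rule route through the discrete inner map $\phi_\tau$ is different from the paper's, and as written step (i) has a scaling gap. In the parametrization used here, $\tfrac1{\sqrt l}\|J_f\|_F\le K_0$ means $\|J_f\|_F=\cO(\sqrt l)$, so each inner step moves the parameters by $\lambda\cdot\cO(\sqrt l)\cdot\cO(1)$; summing $\tau$ of these and using only $\tau\lambda=\cO(1)$ yields a total inner displacement of order $\sqrt l$, not $l^{-1/2}$ --- the iterates leave the ball on which the base-network estimates hold by a factor of $l$. The same mismatch reappears in step (ii): $\lambda J_f^\top J_f$ has eigenvalues of order $\lambda l$, not $\lambda$, so the factors $I-\lambda\nabla^2\ell$ are not ``$1+\cO(\lambda)$'' near-identities without further input. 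To repair this you need either the additional scaling $\lambda=\cO(1/l)$ together with $\tau$ bounded (the global-convergence theorem does assume $\lambda<\lambda_0/l$, but the lemma itself does not), or a genuine contraction argument in step (i) showing the inner residuals decay geometrically so that $\sum_j\|r_j\|$ --- not merely $\max_j\|r_j\|$ --- is $\cO(1)$. Once the scaling is handled your factorization argument goes through, but as stated the ``naive sum plus $\tau\lambda=\cO(1)$'' step does not deliver the claimed $\cO(l^{-1/2})$ locality.

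The paper avoids tracking the discrete inner trajectory altogether. It invokes the NTK-regime closed form for the adapted output,
\[
F_\theta(X,X',Y') \;=\; f_\theta(X)\;-\;\hat\Theta_\theta(X,X')\,\hat\Theta_\theta^{-1}\bigl(I-e^{-\lambda\hat\Theta_\theta\tau}\bigr)\bigl(f_\theta(X')-Y'\bigr),
\]
with $\hat\Theta_\theta=\tfrac1l\nabla f_\theta(X')\nabla f_\theta(X')^\top$, and differentiates this expression directly. A helper lemma (terms of the type $\nabla_\theta\hat\Theta_\theta$ contracted against a bounded vector have vanishing Frobenius norm as $l\to\infty$) lets it drop the kernel-gradient contributions, and an SVD of $\tfrac1{\sqrt l}\nabla_\theta f_\theta(X')$ collapses what remains to the single compact form $\nabla_\theta F=\nabla_\theta f_\theta(X)\,e^{-\lambda\bar\Theta_\theta\tau}$, where $\bar\Theta_\theta=\tfrac1l\nabla f_\theta(X')^\top\nabla f_\theta(X')\succeq 0$. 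Both inequalities then reduce in one line to the known base-network bounds: the size bound because $\|e^{-\lambda\bar\Theta_\theta\tau}\|_{\mathrm{op}}\le 1$, and the Lipschitz bound via a matrix-exponential perturbation inequality applied to $\bar\Theta_\theta$, with $\lambda\tau=\cO(1)$ entering only as the Lipschitz constant of the map $\bar\Theta\mapsto e^{-\lambda\bar\Theta\tau}$. Your product-of-$\tau$-factors bound would give the same conclusion but with a looser constant; the paper's closed form trades your elementary induction for an SVD trick and a cleaner one-shot reduction.
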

Suppose the neural net is sufficiently over-parameterized, i.e., the width of hidden layers, $l$, is large enough. Then, we can prove GBML with this neural net is guaranteed to converge to global optima with zero training loss at a linear rate, under several mild assumptions. The detailed setup and proof can be found in Appendix \ref{supp:global-convergence}. We provide a simplified theorem with a proof sketch below.
\begin{theorem}[Global Convergence]\label{thm:global-convergence}
Define $\metaNTK = \lim_{l\rightarrow \infty} \frac{1}{l}J(\theta_0)J(\theta_0)^T$. For any $\delta_0 > 0$, $\eta_0 < \frac{2}{\Lev(\metaNTK) + \lev(\metaNTK)}$, and $\tau = \cO(\frac{1}{\lambda})$
there exist $\lss>0$, $\Lambda \in\mathbb N$, $K>1$, and $\lambda_0>0$, such that: for width $l\geq \Lambda$, running gradient descent with learning rates $\eta = \frac {\eta_0}{l}$ and $\lambda < \frac{\lambda_0}{l}$ over random initialization, the following upper bound on the training loss holds true with probability at least $(1 - \delta_0)$:
\begin{align}
    \loss(\theta_t) &= \frac{1}{2}\|F_{\theta_t}(\XXY) - \Y \|_2^2\nonumber\\ 
    &\leq  \left(1 - \frac {\eta_0 \lev(\metaNTK)}{3}\right)^{2t} \frac{\lss^2}{2}\, .  
    \label{eq:convergence-loss}
\end{align}
\end{theorem}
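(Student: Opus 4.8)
The plan is to adapt the standard neural-tangent-kernel global-convergence argument to the bi-level GBML dynamics, using Lemma~\ref{lemma:local-Liphschitzness} to control the meta-output Jacobian $J$ throughout training. Write the residual at step $t$ as $g_t \coloneqq F_{\theta_t}(\XXY) - \Y$, so that $\loss(\theta_t) = \frac12\|g_t\|_2^2$ and the gradient-descent update is $\theta_{t+1} = \theta_t - \eta\, J(\theta_t)^\top g_t$ with $\eta = \eta_0/l$. A first-order integral (mean-value) expansion of $F$ along the update direction gives $g_{t+1} = (I - \eta\, \widehat J_t J(\theta_t)^\top)\, g_t$, where $\widehat J_t \coloneqq \int_0^1 J\big(\theta_t + s(\theta_{t+1}-\theta_t)\big)\,ds$ is an averaged Jacobian. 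Hence $\|g_{t+1}\|_2 \le \|I - \eta\, \widehat J_t J(\theta_t)^\top\|_2\,\|g_t\|_2$, and the whole argument reduces to showing that $\frac1l\,\widehat J_t J(\theta_t)^\top$ stays close to the symmetric PSD matrix $\frac1l J(\theta_0) J(\theta_0)^\top$, hence close to $\metaNTK$, uniformly in $t$.

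The core is a self-consistent double induction on $t$ establishing simultaneously (i) $\theta_t \in B(\theta_0, C l^{-1/2})$ for a suitable constant $C$, and (ii) $\|g_t\|_2 \le \big(1 - \tfrac{\eta_0}{3}\lev(\metaNTK)\big)^t\|g_0\|_2$. Given (i) up to step $t$, Lemma~\ref{lemma:local-Liphschitzness} gives $\frac1{\sqrt l}\|J(\theta_s)\|_F \le K$ and $\frac1{\sqrt l}\|J(\theta_s) - J(\theta_0)\|_F \le K C l^{-1/2}$ for $s \le t$, and likewise for $\widehat J_t$; together with the fact that $\frac1l J(\theta_0)J(\theta_0)^\top$ concentrates on $\metaNTK$ for $l \ge \Lambda$ with probability at least $1-\delta_0$, this forces the eigenvalues of $\frac1l\,\widehat J_t J(\theta_t)^\top$ into an arbitrarily small neighborhood of $[\lev(\metaNTK), \Lev(\metaNTK)]$, the slack being driven down by enlarging $\Lambda$ and shrinking $Cl^{-1/2}$. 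Since $\eta_0 < 2/(\Lev(\metaNTK)+\lev(\metaNTK))$, the map $\mu \mapsto 1 - \eta_0\mu$ sends $[\lev(\metaNTK),\Lev(\metaNTK)]$ into $[-(1-\eta_0\lev(\metaNTK)),\,1-\eta_0\lev(\metaNTK)]$, so absorbing the perturbation into the constant yields $\|I - \eta\,\widehat J_t J(\theta_t)^\top\|_2 \le 1 - \tfrac{\eta_0}{3}\lev(\metaNTK)$ (the factor $3$ is a convenient cushion for the Jacobian perturbation and the concentration error), which propagates (ii). To close (i), bound the displacement: $\|\theta_{t+1}-\theta_0\|_2 \le \sum_{s=0}^{t}\eta\|J(\theta_s)^\top g_s\|_2 \le \frac{\eta_0 K}{\sqrt l}\sum_{s\ge 0}\|g_s\|_2 \le \frac{\eta_0 K}{\sqrt l}\cdot\frac{3\|g_0\|_2}{\eta_0\lev(\metaNTK)} = \frac{3K\|g_0\|_2}{\sqrt l\,\lev(\metaNTK)} \le C l^{-1/2}$ once $C \ge 3K\|g_0\|_2/\lev(\metaNTK)$. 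Taking $\lss$ to be a high-probability upper bound on $\|g_0\|_2 = \|F_{\theta_0}(\XXY)-\Y\|_2$ at initialization and squaring (ii) then gives the stated $\loss(\theta_t) \le \big(1 - \tfrac{\eta_0}{3}\lev(\metaNTK)\big)^{2t}\tfrac{\lss^2}{2}$.

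Two places are where the bi-level structure genuinely matters. First, $F_\theta$ is defined by differentiating through $\tau$ steps of inner gradient descent with step size $\lambda$, so $J(\theta)$ and its Lipschitz behaviour depend on $(\tau,\lambda)$; the hypotheses $\tau = \cO(1/\lambda)$ and $\lambda < \lambda_0/l$ are exactly what make the constants $K, l^*$ in Lemma~\ref{lemma:local-Liphschitzness} finite and $l$-independent, and one should invoke that lemma rather than re-derive it. Second, one needs existence and positive-definiteness of $\metaNTK = \lim_{l\to\infty}\frac1l J(\theta_0)J(\theta_0)^\top$ (so that $\lev(\metaNTK)>0$ and the contraction factor is strictly below $1$), together with the quantitative rate at which the finite-width kernel concentrates around it; this rate, plus $\delta_0$, fixes the width threshold $\Lambda$ and the scale $\lambda_0$.

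I expect the main obstacle to be the self-consistency of the induction under the bi-level dynamics: one must verify that the $\lambda < \lambda_0/l$ scaling keeps the inner-adaptation contribution to both $\|\theta_{t+1}-\theta_t\|_2$ and to the perturbation of $\frac1l\,\widehat J_t J(\theta_t)^\top$ away from $\metaNTK$ of the same small order $\cO(l^{-1/2})$ as in the single-level case, so that the trajectory provably never leaves the ball $B(\theta_0, Cl^{-1/2})$ on which Lemma~\ref{lemma:local-Liphschitzness} is valid — the step that, if it failed, would break the whole argument.
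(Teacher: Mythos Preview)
Your proposal is correct and follows essentially the same route as the paper's proof: a self-consistent induction combining (i) containment of $\theta_t$ in $B(\theta_0,Cl^{-1/2})$ with $C=3K\lss/\lev(\metaNTK)$ and (ii) geometric decay of the residual, driven by Lemma~\ref{lemma:local-Liphschitzness}, a bounded-initial-loss lemma for $\lss$, and a kernel-convergence lemma for the concentration of $\frac1l J(\theta_0)J(\theta_0)^\top$ on $\metaNTK$. The only cosmetic difference is that you use the integral mean-value Jacobian $\widehat J_t=\int_0^1 J(\theta_t+s(\theta_{t+1}-\theta_t))\,ds$, whereas the paper writes a single intermediate point $\theta_t^\mu$; your version is in fact the more rigorous formulation for vector-valued $F$.
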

\begin{proofsketch}
First, we consider the Jacobian of the meta-output, $J$, and prove a lemma showing $J$ has bounded norm. Then we prove another lemma showing $\metaNTK$ is a deterministic matrix over random initialization of $\theta_0$. By these lemmas and Lemma \ref{lemma:local-Liphschitzness}, we analyze the optimization trajectory of the neural net parameters, and prove that the parameters move locally during optimization, and the training loss exponentially decays as the number of optimization steps increases, indicating the training loss converges to zero at a linear rate, shown as (\ref{eq:convergence-loss}).
\end{proofsketch}

With this global convergence theorem, we can derive an analytical expression for GBML output at any training time, shown below as a corollary, with proof in Appendix \ref{supp:global-convergence}.

\begin{corollary}[Analytic Expression of Meta-Output]\label{corr:GBML-output}
In the setting of
Theorem \ref{thm:global-convergence}, the training dynamics of the GBML can be described by a differential equation
$$\frac{d F_t(\XXY)}{d t}=- \eta  \, \metantk_0   (F_t(\XXY) - \Y)$$
where we denote $F_t \equiv F_{\theta_t}$ and $\metantk_0 \equiv \metantk_{\theta_0}((\XXY),(\XXY))$ for convenience.

Solving this differential equation, we obtain the meta-output of GBML on training tasks at any training time as 
\begin{align}
    F_t(\XXY)=(I - e^{- \eta\metantk_0 t})\Y + e^{-\eta \metantk_0 t}F_{0}(\XXY) \,. \label{eq:ODE-sol-F}
\end{align}
Similarly, on arbitrary test task $\task=(\xyxy)$, the meta-output of GBML is
\begin{align}
&F_t(\xxy) =F_{0}(\xxy)  \label{eq:F_t:main_text}\\
&\quad \quad \qquad+  \metantk_0(\xxy) \T^{\eta}_{\metantk_0}(t)\left(\Y-F_0(\XXY)\right)\nonumber
\end{align}
where $\metantk_0(\cdot)\equiv \metantk_{\theta_0}(\cdot,(\XXY))$ and $\T^{\eta}_{\metantk_0}(t)=\metantk_0^{-1}\left(I- e^{-\eta\metantk_0 t}\right)$ are shorthand notations.

\end{corollary}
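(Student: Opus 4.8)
The plan is to descend from the discrete gradient descent iteration to a gradient flow, use the near-constancy of the meta-kernel along the optimization trajectory (already established while proving Theorem~\ref{thm:global-convergence}) to linearize the flow, and then integrate the resulting linear ODE in closed form. First I would note that $\nabla_\theta \loss(\theta) = J(\theta)^\top(F_\theta(\XXY) - \Y)$, so one gradient descent step on~\eqref{eq:MAML-obj} reads $\theta_{t+1} = \theta_t - \eta\,J(\theta_t)^\top(F_t(\XXY) - \Y)$; passing to the continuous-time regime gives the parameter flow $\dot\theta_t = -\eta\,J(\theta_t)^\top(F_t(\XXY) - \Y)$. Applying the chain rule to $t\mapsto F_{\theta_t}(\XXY)$ and to $t\mapsto F_{\theta_t}(\xxy)$ then yields
\begin{align}
\dot F_t(\XXY) = J(\theta_t)\dot\theta_t = -\eta\,J(\theta_t)J(\theta_t)^\top\!\left(F_t(\XXY) - \Y\right), \qquad \dot F_t(\xxy) = -\eta\,\nabla_\theta F_t(\xxy)\,J(\theta_t)^\top\!\left(F_t(\XXY) - \Y\right),
\end{align}
which, up to the $\tfrac1l$ normalization built into $\metantk$, are exactly $\dot F_t(\XXY) = -\eta\,\metantk_t(F_t(\XXY)-\Y)$ and $\dot F_t(\xxy) = -\eta\,\metantk_t(\xxy)(F_t(\XXY)-\Y)$, with $\metantk_t = \metantk_{\theta_t}((\XXY),(\XXY))$ and $\metantk_t(\xxy) = \metantk_{\theta_t}(\xxy,(\XXY))$.

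\textbf{Freezing the kernel.} The only step that truly uses over-parameterization is replacing the time-varying $\metantk_t$ by its initial value $\metantk_0$. This is a by-product of the proof of Theorem~\ref{thm:global-convergence}: that argument shows the trajectory never leaves $B(\theta_0, C l^{-1/2})$, so Lemma~\ref{lemma:local-Liphschitzness} bounds $\frac1{\sqrt l}\|J(\theta_t) - J(\theta_0)\|_F \le KC\,l^{-1/2}$ and $\frac1{\sqrt l}\|J(\theta_t)\|_F \le K$; writing $\metantk_t - \metantk_0 = \frac1l\big(J(\theta_t)(J(\theta_t)-J(\theta_0))^\top + (J(\theta_t)-J(\theta_0))J(\theta_0)^\top\big)$ these give $\|\metantk_t - \metantk_0\|_{\mathrm{op}} = O(l^{-1/2})$, and likewise for the test-task block $\metantk_t(\xxy)$. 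Combined with the lemma invoked in Theorem~\ref{thm:global-convergence} that $\metaNTK = \lim_{l\to\infty}\frac1l J(\theta_0)J(\theta_0)^\top$ is deterministic and strictly positive definite (needed already for $\lev(\metaNTK)>0$), the dynamics above converge as $l\to\infty$ to the linear ODE $\dot F_t(\XXY) = -\eta\,\metantk_0(F_t(\XXY) - \Y)$ stated in the corollary, with $\metantk_0$ invertible for $l$ large.

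\textbf{Solving the ODE.} Writing $u_t := F_t(\XXY) - \Y$, the training-task ODE becomes $\dot u_t = -\eta\,\metantk_0 u_t$, hence $u_t = e^{-\eta\metantk_0 t}u_0$, i.e. $F_t(\XXY) = \Y + e^{-\eta\metantk_0 t}(F_0(\XXY) - \Y) = (I - e^{-\eta\metantk_0 t})\Y + e^{-\eta\metantk_0 t}F_0(\XXY)$, which is~\eqref{eq:ODE-sol-F}. Substituting $u_t$ into the test-task ODE gives $\dot F_t(\xxy) = \eta\,\metantk_0(\xxy)\,e^{-\eta\metantk_0 t}(\Y - F_0(\XXY))$; integrating over $[0,t]$ and using $\int_0^t e^{-\eta\metantk_0 s}\,ds = \frac1\eta\metantk_0^{-1}(I - e^{-\eta\metantk_0 t}) = \frac1\eta\T^{\eta}_{\metantk_0}(t)$ (legitimate since $\metantk_0\succ 0$) produces $F_t(\xxy) = F_0(\xxy) + \metantk_0(\xxy)\,\T^{\eta}_{\metantk_0}(t)(\Y - F_0(\XXY))$, which is~\eqref{eq:F_t:main_text}.

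\textbf{Main obstacle.} The ODE integration is routine; the real work is making the two limiting passages — gradient descent $\to$ gradient flow, and $\metantk_t \to \metantk_0$ — quantitative and \emph{uniform in the training time $t$}, so errors do not accumulate over the unbounded horizon. This is exactly where the geometric decay of the loss from Theorem~\ref{thm:global-convergence} is used: it makes the cumulative drift of $\theta_t$ (hence of $\metantk_t$) a convergent geometric series, so the discretization and kernel-freezing errors stay $O(l^{-1/2})$ uniformly in $t$. I would therefore present this corollary as a short rider to Theorem~\ref{thm:global-convergence}, reusing its trajectory and Jacobian bounds rather than re-deriving them.
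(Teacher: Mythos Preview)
Your proposal is correct and follows essentially the same approach as the paper: pass from discrete gradient descent to gradient flow, use the chain rule to get $\dot F_t = -\eta\,\metantk_t(F_t-\Y)$, freeze $\metantk_t$ at $\metantk_0$ via the kernel-stability estimate $\sup_t\|\metantk_t-\metantk_0\|_F = O(l^{-1/2})$ already established in the proof of Theorem~\ref{thm:global-convergence}, and then integrate the linear ODE. The only cosmetic difference is that the paper cites that estimate directly as~\eqref{eq:convergence-metantk:supp} rather than re-deriving it from Lemma~\ref{lemma:local-Liphschitzness} and the trajectory bound, and it leaves the test-task integration implicit whereas you spell it out.
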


\textbf{Remarks.} This corollary implies for a sufficiently over-parameterized neural network, the training of GBML is \textit{determined} by the \textit{parameter initialization}, $\theta_0$. Given access to $\theta_0$, we can compute the functions $\metantk_0$ and $F_0$, and then the trained GBML output can be obtained by simple calculations, without the need for running gradient descent on $\theta_0$. This nice property enables us to perform deeper analysis on GBML with DNNs in the following sections.

\subsection{Gradient-Based Meta-Learning as Kernel Regression}
\label{sec:MetaNTK}

The following theorem shows that as the width of neural nets approaches infinity, GBML becomes equivalent to a kernel regression with a new class of kernels, which we name as Meta Neural Kernels (MNK). We also provide an analytical expression for the kernels. The proof of this theorem is in Appendix. \ref{supp:MetaNTK}.
\begin{theorem}[GBML as Kernel Regression]\label{thm:MNK}
Suppose learning rates $\eta$ and $\lambda$ are infinitesimal. As the network width $l$ approaches infinity, with high probability over random initialization of the neural net, the GBML output, (\ref{eq:F_t:main_text}), converges to a special kernel regression,
\begin{align}\label{eq:F_t-MetaNTK}
&F_t(\xxy)= G_\NTK^{\tau}(\xxy) \\
&\qquad +\metaNTK((\xx),(\XX)) \T^{\eta}_{\metaNTK}(t) \left(\Y-G_{\NTK}^{\tau}(\XXY)\right)\nonumber
\end{align}
where $G$ is a function defined below, $\NTK$ is the neural tangent kernel (NTK) function from \cite{ntk} that can be analytically calculated without constructing any neural net, and $\metaNTK$ is a new kernel, which we name as Meta Neural Kernel (MNK). The expression for $G$ is
\begin{align}\label{eq:G}
    G_\NTK^\tau(\xxy) =  \NTK(X,X')\widetilde{T}^{\lambda}_\NTK (X',\tau)   Y'.
\end{align}
where $\widetilde{T}^{\lambda}_\NTK (\cdot,\tau) \coloneqq \NTK(\cdot,\cdot)^{-1}(I-e^{-\lambda \NTK(\cdot,\cdot) \tau}) $. Besides, $G_\NTK^{\tau}(\XXY) = (G_\NTK^{\tau}(\xxyi))_{i=1}^N$.

The MNK is defined as $\metaNTK \equiv \metaNTK((\XX),(\XX)) \in \mathbb{R}^{knN \times knN}$, which is a block matrix that consists of $N \times N$ blocks of size $kn\times kn$. For $i,j \in [N]$, the $(i,j)$-th block of $\metaNTK$ is
\begin{align} \label{eq:MetaNTK_ij=kernel}
    [\metaNTK]_{ij}=\SingleTaskmetaNTK((\xxi),(\xxj)) \in \mathbb R^{kn \times kn} , 
\end{align}
where $\SingleTaskmetaNTK: (\mathbb{R}^{n \times k} \times \mathbb{R}^{m\times k}) \times  (\mathbb{R}^{n \times k} \times \mathbb{R}^{m\times k}) \rightarrow \mathbb{R}^{nk \times nk}$ is a kernel function defined as
\begin{align}\label{eq:MetaNTK_ij}
    &\qquad \SingleTaskmetaNTK((\cdot,\ast), (\bullet, \star)) \\
    &= \NTK(\cdot,\bullet) + \NTK(\cdot,\ast)\widetilde{\T}_{\NTK}^\lambda(\ast,\tau)\NTK(\ast,\star)\widetilde{\T}_{\NTK}^\lambda(\star,\tau)^\top \NTK(\star,\bullet) \nonumber\\
 &~ -\NTK(\cdot,\ast)\widetilde{\T}_{\NTK}^\lambda(\ast,\tau) \NTK(\ast,\bullet) - \NTK(\cdot,\star) \widetilde{\T}_{\NTK}^\lambda(\star,\tau)^\top \NTK(\star,\bullet) .\nonumber
    \end{align}
The $\metaNTK((\xx),(\XX)) \in \mathbb{R}^{kn \times knN}$ in (\ref{eq:F_t-MetaNTK}) is also a block matrix, which consists of $1 \times N$ blocks of size $k n \times k n$, with the $(1,j)$-th block as follows for $j \in [N]$, $$[\metaNTK((\xx),(\XX))]_{1,j}= \SingleTaskmetaNTK((X,X'),(X_j,X_j')).$$
\end{theorem}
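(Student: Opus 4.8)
The plan is to start from the exact meta-output of the finite-width network supplied by Corollary \ref{corr:GBML-output}, i.e.\ (\ref{eq:F_t:main_text}),
\[
F_t(\xxy) \;=\; F_0(\xxy)\;+\;\metantk_0(\xxy)\,\T^{\eta}_{\metantk_0}(t)\,\bigl(\Y-F_0(\XXY)\bigr),
\]
and to take the width limit $l\to\infty$ factor by factor. This reduces the theorem to two convergence statements, both in probability over the random initialization: \emph{(A) Initial meta-output:} $F_0(\cdot)\to G_{\NTK}^{\tau}(\cdot)$ with $G_{\NTK}^{\tau}$ as in (\ref{eq:G}), which applied task by task also gives $F_0(\XXY)\to G_{\NTK}^{\tau}(\XXY)$; and \emph{(B) Meta neural kernel:} $\metantk_{\theta_0}\bigl((\cdot,\ast,\ast'),(\bullet,\star,\star')\bigr)\to\SingleTaskmetaNTK\bigl((\cdot,\ast),(\bullet,\star)\bigr)$ of (\ref{eq:MetaNTK_ij}) for arbitrary single-task query/support pairs, which by the block structure of $\metantk_{\theta_0}$ yields $\metantk_0\to\metaNTK$ and $\metantk_0(\xxy)\to\metaNTK((\xx),(\XX))$. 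Granting (A)--(B), the theorem follows because $\T^{\eta}_M(t)=M^{-1}(I-e^{-\eta M t})$ is an analytic function of a positive definite $M$: since $\metaNTK$ is positive definite (the standard full-rank NTK condition carried over from Appendix \ref{supp:global-convergence}), the continuous mapping theorem pushes the limit through (\ref{eq:F_t:main_text}) and produces (\ref{eq:F_t-MetaNTK}).

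For (A), I would invoke the NTK linearization of \cite{ntk}: in the infinite-width limit, uniformly along the $\tau$-step inner trajectory (\ref{eq:meta-adaption-descrete}) started at $\theta_0$, the base network $f_\theta$ behaves as a linear model with Jacobian frozen at $\theta_0$, and its empirical kernel $\tfrac1l\nabla_\theta f_{\theta_0}(\cdot)\nabla_\theta f_{\theta_0}(\ast)^{\top}$ converges to the analytic NTK $\NTK(\cdot,\ast)$. Under the standard initialization convention of the appendix's setup (ensuring $f_{\theta_0}\equiv 0$), the inner gradient descent on $(X',Y')$ then reduces, in the infinitesimal-$\lambda$ (gradient-flow) regime, to the linear flow $\dot{h}_s(X')=-\NTK(X',X')\bigl(h_s(X')-Y'\bigr)$ with $h_0(X')=0$; propagating it to the query inputs $X$ via $\NTK(X,X')$ and integrating over the $\tau$ inner steps gives exactly $\NTK(X,X')\,\widetilde{T}^{\lambda}_{\NTK}(X',\tau)\,Y'=G_{\NTK}^{\tau}(\xxy)$.

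For (B), the key object is the meta-Jacobian $\nabla_{\theta_0}F_{\theta_0}(\xxy)=\nabla_{\theta_0}f_{\theta_0'}(X)$, where $\theta_0'$ is the $\tau$-step inner update of $\theta_0$. By the chain rule it equals $\bigl(\nabla_{\theta'}f_{\theta'}(X)\big|_{\theta'=\theta_0'}\bigr)\bigl(\partial\theta_0'/\partial\theta_0\bigr)$. Differentiating the inner recursion (\ref{eq:meta-adaption-descrete}) and using that the second derivatives of $f_\theta$ are asymptotically negligible along the trajectory (the same linearization, which is where Lemma \ref{lemma:local-Liphschitzness} and the scalings $\tau=\cO(1/\lambda)$, $\lambda=\cO(1/l)$ enter), the adaptation Jacobian telescopes to $\prod_{j=0}^{\tau-1}(I-\lambda\,\phi_j^{\top}\phi_j)\approx(I-\lambda\,\phi^{\top}\phi)^{\tau}$, which in the gradient-flow regime becomes $e^{-\lambda\tau\,\phi^{\top}\phi}$ with $\phi\coloneqq\nabla_\theta f_{\theta_0}(X')$. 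Using the push-through identity $e^{-\lambda\tau\,\phi^{\top}\phi}=I-\phi^{\top}(\phi\phi^{\top})^{-1}\bigl(I-e^{-\lambda\tau\,\phi\phi^{\top}}\bigr)\phi$ and $\nabla_{\theta'}f_{\theta'}(X)\big|_{\theta'=\theta_0'}\approx\psi\coloneqq\nabla_\theta f_{\theta_0}(X)$, one obtains
\[
\nabla_{\theta_0}F_{\theta_0}(\xxy)\;\approx\;\psi-\psi\,\phi^{\top}(\phi\phi^{\top})^{-1}\bigl(I-e^{-\lambda\tau\,\phi\phi^{\top}}\bigr)\phi .
\]
Substituting this and the analogous expression for a second query/support pair $(\bullet,\star,\star')$ into $\metantk_{\theta_0}(\cdot,\ast)=\tfrac1l\,\nabla_{\theta_0}F_{\theta_0}(\cdot)\,\nabla_{\theta_0}F_{\theta_0}(\ast)^{\top}$, expanding the product of the two bracketed expressions into its four terms, and letting $l\to\infty$ with $\tfrac1l\phi\phi^{\top}\to\NTK(X',X')$, $\tfrac1l\psi\phi^{\top}\to\NTK(X,X')$, $\tfrac1l\psi\psi^{\top}\to\NTK(X,X)$ and their cross-pair analogues, produces precisely the four terms of (\ref{eq:MetaNTK_ij}): the term with no $\widetilde{T}$ factor, the two cross terms with one factor, and the term with two factors. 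Running the same computation with query/support sets drawn from arbitrary tasks $i,j$ gives the blocks $[\metaNTK]_{ij}$ and $[\metaNTK((\xx),(\XX))]_{1,j}$, hence the full block matrices in (\ref{eq:F_t-MetaNTK}).

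The hard part will be (B), and inside it the control of $\partial\theta_0'/\partial\theta_0$ in the limit: one needs a version of the NTK linearization that is \emph{uniform over the whole $\tau$-step inner trajectory} --- both that $\phi_j=\nabla_\theta f_{\theta_j}(X')$ stays $o(1)$-close to $\phi_0$ after the $\tfrac1{\sqrt l}$ rescaling, and that the Hessian-type corrections from $\partial_{\theta_0}\bigl[\phi_j^{\top}(f_{\theta_j}(X')-Y')\bigr]$ are asymptotically negligible --- which is exactly the regime controlled by Lemma \ref{lemma:local-Liphschitzness} and the auxiliary estimates behind Theorem \ref{thm:global-convergence}. Two further points of care: the inverse $(\phi\phi^{\top})^{-1}$ and the push-through identity must be read on the row space of $\phi$ (so $\NTK(X',X')$ is taken invertible, the usual NTK full-rank assumption), and the joint $l\to\infty$, $\lambda\to 0$ scaling has to be arranged so that the rescaled per-step inner contractions have a finite limit, which is what turns $(I-\lambda\,\phi^{\top}\phi)^{\tau}$ into the operator generating $\widetilde{T}^{\lambda}_{\NTK}(X',\tau)$ in (\ref{eq:MetaNTK_ij}) and (\ref{eq:G}).
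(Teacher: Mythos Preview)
Your proposal is correct and follows essentially the same route as the paper: start from (\ref{eq:F_t:main_text}), pass to the width limit factor by factor, using (A) $F_0\to G_{\NTK}^{\tau}$ via the small-initialization assumption $f_{\theta_0}\approx 0$ together with $\ntk_0\to\NTK$, and (B) $\metantk_0\to\metaNTK$ by computing the meta-Jacobian, expanding the product into four terms, and taking the NTK limit; then conclude via continuity of $M\mapsto M^{-1}(I-e^{-\eta M t})$ on positive definite matrices. The only cosmetic difference is in how you obtain the meta-Jacobian expression: you differentiate the inner recursion directly and use the push-through identity $e^{-\lambda\tau\phi^{\top}\phi}=I-\phi^{\top}(\phi\phi^{\top})^{-1}(I-e^{-\lambda\tau\phi\phi^{\top}})\phi$, whereas the paper first writes $F_0$ explicitly in its linearized form and then differentiates that formula, invoking a helper lemma to drop the Hessian-type terms coming from $\nabla_\theta\ntk_\theta$; both derivations require exactly the same Hessian control and land on the same expression $\nabla_{\theta_0}F_0(\xxy)=\nabla_{\theta_0}f_0(X)-\ntk_0(X,X')\widetilde{T}^{\lambda}_{\ntk_0}(X',\tau)\nabla_{\theta_0}f_0(X')$.
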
 

\textbf{Remarks.} The kernel $\metaNTK$ is in fact what $\metantk_{0}$ converges to as the neural net width approaches infinity. However, $\metantk_{0} \equiv \metantk_0((\XXY),(\XXY))$ depends on $\Y$ and $\Y'$, while $\metaNTK \equiv \metaNTK((\XX),(\XX))$ does not, since the terms in $\metaNTK$ that depend on $\Y$ or $\Y'$ all vanish as the width approaches infinity. Besides, (\ref{eq:F_t-MetaNTK}) is a sum of two kernel regression terms, but it can be viewed as a single special kernel regression. Notably, the new kernel $\metaNTK$ can be seen as a \textit{composite} kernel built upon the \textit{base} kernel function $\NTK$.

\section{Generalization of Gradient-Based Meta-Learning with Neural Nets}

In this section, we still consider over-parameterized neural nets as models for training, and we first demonstrate that the effect of GBML can be viewed as a \textit{functional gradient descent} operation. Specifically, the outputs of meta-learners (i.e., models with meta-training) are equivalent to functions obtained by a \textit{functional gradient descent} operation on the outputs of base-learners (i.e., models without meta-training). Inspired by this observation, we focus on the \textit{functional gradient} term, and prove a generalization bound on GBML with over-parameterized DNNs, based on results of Theorem \ref{thm:MNK}.
\label{sec:generalization}
\subsection{A \textit{Functional Gradient Descent} View of Gradient-Based Meta-Learning}
\label{sec:FGD}
\begin{figure*}[ht!]
\centering
    \subfloat[Training Tasks]{\includegraphics[width=0.4\linewidth]{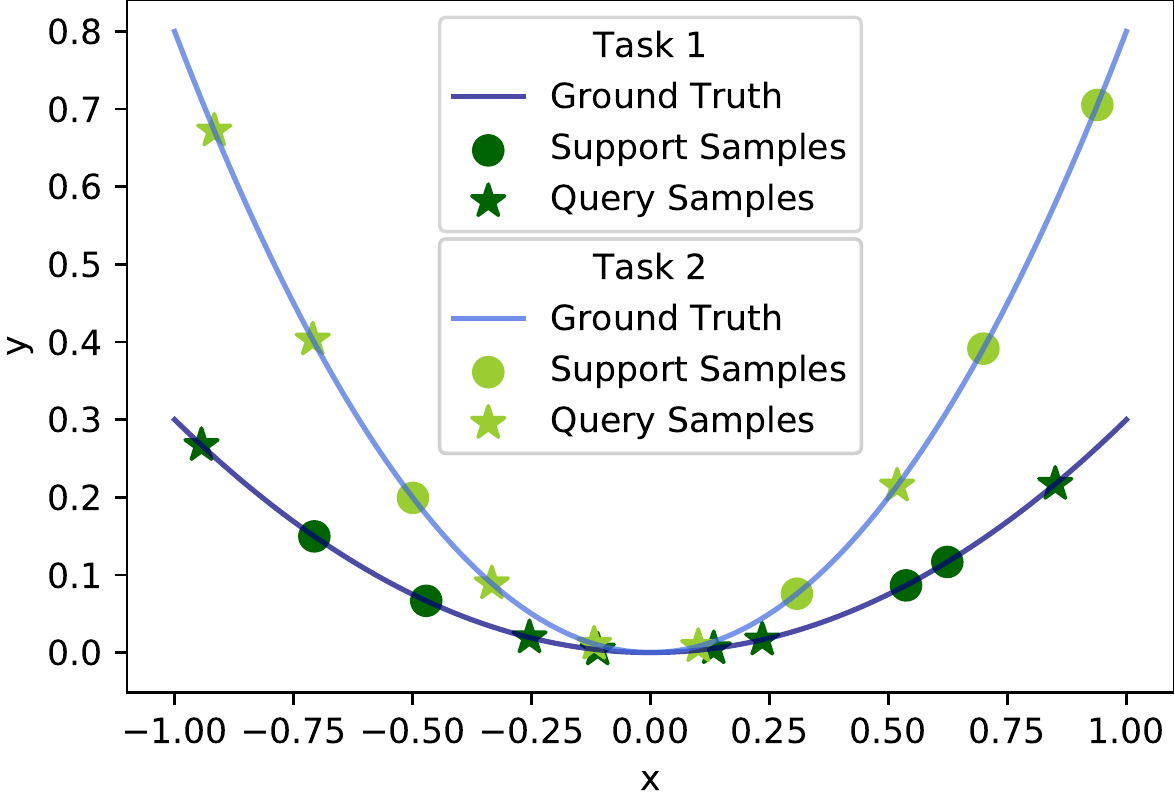}}\label{fig:quadratic:train}%
    \qquad
    \subfloat[A Test Task]{\includegraphics[width=0.4\linewidth]{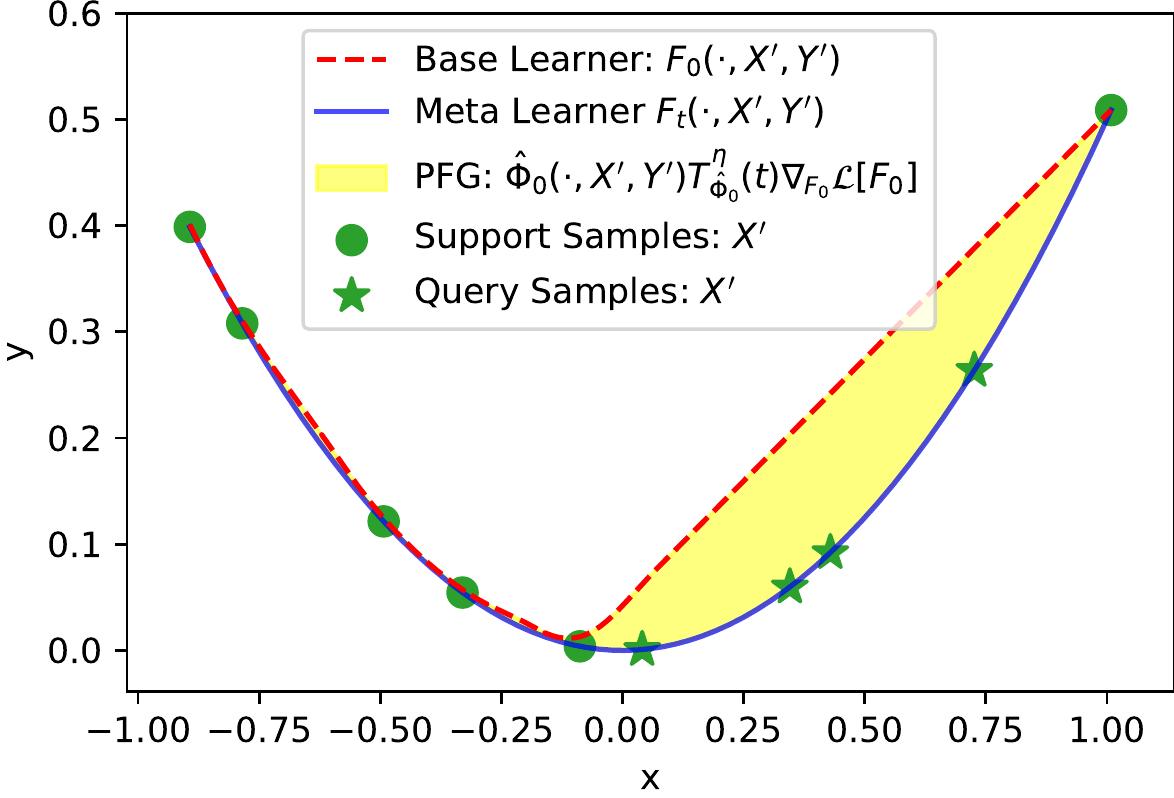}}%
    \caption{Illustration of the example in Sec. \ref{sec:quadratic-example}.
    (a) shows two training tasks with their support/query samples and ground-truth objective functions. (b) first shows a test task with support and query samples. Besides, it demonstrates several terms defined in (\ref{eq:MetaNTK:functional-GD}): the predictions of the meta learner and the base learner, and the projected functional gradient (PFG) term, which is equal to the difference between the first two terms. 
    }
    \label{fig:quadratic-functional-GD}
\end{figure*}
The empirical success of GBML methods is mainly due to their ability to learn good DNN initializations for adaption on new tasks \cite{Finn:EECS-2018-105}. However, it is not theoretically clear why these learned initializations are effective. 
Below, we provide some theoretical insight on this problem by demonstrating equivalence between GBML with DNNs and a functional gradient descent operation.

Note that the GBML output (\ref{eq:F_t:main_text}) can be rewritten as
\begin{align}\label{eq:MetaNTK:functional-GD}
     \underbrace{F_t(\xxy)}_{\textit{Meta Learner}}&=\underbrace{F_0(\xxy)}_{\textit{Base Learner}} \\ &-\underbrace{\overbrace{\metantk_0(\xxy)\T^{\eta}_{\metantk_0}(t)}^{\textit{Projection}} \overbrace{\nabla_{F_0}\Lf [F_0]}^{\textit{Functional Gradient}}}_{\textit{Projected Functional Gradient}}\nonumber
\end{align}
where $\Lf$ is a loss \textit{functional} (i.e., function with functions as input) such that for any function $h$,
$$\Lf[h]= \frac{1}{2}\|h(\XXY)-\Y)\|_2^2,$$
with the corresponding \textit{functional gradient} on the function $h$ as
$$\nabla_h \Lf[h]= \nabla_{h(\XXY)}\Lf[h]= h(\XXY)-\Y.$$
Obviously, (\ref{eq:MetaNTK:functional-GD}) can be seen as a \textit{projected functional gradient descent} operation on $F_0$, with learning rate equal to $1$ and $\metantk_0(\xx) \T^{\eta}_{\metantk}(t)$ as the \textit{projection}. 

The function $F_0$ can be viewed as the output of a purely \textit{supervised learning} model that has no thing to do with meta-learning, since \eqref{eq:meta-output} shows that $F_0(\xxy)=F_{\theta_0}(\xxy)=f_{\theta_0'}(X)$, where $\theta_0'$ is the adapted parameters on random initialized parameters $\theta_0$ trained under $\tau$ steps of gradient descent on $(X',Y')$. In other words, $F_0$ can be viewed as a \textit{base learner} (i.e., a supervised learning model), and the goal of GBML is to train a \textit{meta-learner} (i.e., a meta-learning model), $F_t$, to improve over the base learner on test tasks, by utilizing the prior knowledge on training tasks $\{\task_i\}_{i=1}^N$.

From \eqref{eq:MetaNTK:functional-GD}, we can observe that for over-parameterized DNNs, the effect of GBML is solely the \textit{projected functional gradient} term. This term can be viewed as an \textit{error correction} term to the base learner $F_0$, which propagates prior knowledge on training tasks $(\XYXY)$ to the base learner on the test task $\task = (\xyxy)$ to reduce its test error.

\subsubsection{An 1-d Example}\label{sec:quadratic-example}

To illustrate the equivalence between GBML and functional gradient descent derived in (\ref{eq:MetaNTK:functional-GD}) more intuitively, we present a simple but insightful example of few-shot learning, \textit{1-$d$ few-shot regression with quadratic objective functions}, in which all samples and labels are scalars.

Specifically, for arbitrary training/test task $\task = (\xyxy)$, we assume all samples in $X$ and $X'$ are drawn uniformly from $[0,1]$, and the relation between all samples and their labels is determined by a unique scalar variable, $\alpha \sim \text{Unif}(0,1)$, such that $Y = \alpha X^2 ~\textit{and} ~ Y' = \alpha {X'}^2$.
Each training task in $\{\task_i\}_{i=1}^N$ has its own $\alpha$, so does each test task. It is then natural to expect that a trained meta-learner can comprehend the quadratic nature of objective functions from training tasks and then utilize this prior knowledge to achieve fast adaption to any test task with only a few samples.

Fig. \ref{fig:quadratic-functional-GD} illustrates the training and test tasks. A trained meta-learner trained over training tasks, $F_t$, should predict well on the query samples of this test task, even with only a few unevenly distributed support samples, since it comprehends the quadratic nature of objective functions from the meta-training. However, a base learner, $F_0$, which has the same DNN structure and initial parameters as the meta-leaner, cannot accurately predict on query samples, since it does not know the quadratic nature of objective functions. The difference between the base-learner and the meta-learner is exactly the projected funtional gradient (PFG) term, indicating the PFG term is the reason for the improvement of meta-learner over the base-learner.

\subsection{Generalization Bound of Gradient-Based Meta-Learning}
\label{sec:generalization:gen-bound}

\textbf{Task Distrbution.} A fundamental assumption of meta-learning (i.e., learning to learn) is that all training and test tasks share some \textit{across-task knowledge} or \textit{meta-knowledge} \cite{learningtolearn,hospedales2020metalearning}. In this way, meta-learning algorithms aim to learn the across-task knowledge from training tasks, and utilize it to achieve fast adaptation to test tasks. For instance, in the example of Sec. \ref{sec:quadratic-example}, the across-task knowledge of training/test tasks is the quadratic nature of their objective functions, and a meta-learner which understands this knowledge can easily fit test tasks accurately with a few samples. In the literature of meta-learning and domain generalization \cite{baxter2000model,muandet2013domain,maurer2016benefit,albuquerque2020generalizing,saunshi2020sample,hospedales2020metalearning}, the \textit{across-task knowledge} is usually mathematically described by the definition of \textit{task distribution}: all training and test tasks are assumed to be drawn i.i.d. from a \textit{task distribution}, $\mathscr{P}$. Specifically, for the generation of any task $\task=(\xyxy)$, firstly, a data distribution $\mathbb{P}_\task$ is drawn from the task distribution, i.e., $\mathbb{P}_\task \sim \mathscr{P}$; then, the support and query samples with labels are drawn i.i.d. from this data distribution, i.e., $(x,y)\sim \mathbb{P}_\task \text{ for any } (x,y)\in(X,Y)$ and $(x',y')\sim \mathbb{P}_\task \text{ for any } (x',y')\in(X',Y')$. A schematic diagram is shown in Fig. \ref{fig:domain-gen} to illustrate the notion of \textit{task distribution}.

\textbf{Reformulation.} From \eqref{eq:MetaNTK:functional-GD}, we can see the term $F_0(\xxy)$ is fixed during meta-training, since it is independent of the training time $t$. Furthermore, as discussed in Sec. \ref{sec:FGD}, only the Projected Functional Gradient (PFG) term in \eqref{eq:MetaNTK:functional-GD} is related to the effect of GBML. Thus, we reformulate the test loss of GBML to isolate the PFG term, which could further help us study the generalization of GBML. Specifically, for any test task $\task=(\xyxy)$, we define
\begin{align}\label{eq:gen:def-terms}
\left\{
\begin{aligned}
\wt Y &= Y - F_0(\xxy)\\
\wt \Y &= \Y - F_0(\XXY)\\
\widetilde F_t(\cdot)&=F_t(\cdot) - F_0(\cdot)
\end{aligned}
\right.
\end{align}
Based on \eqref{eq:gen:def-terms} and \eqref{eq:MetaNTK:functional-GD}, we know
\begin{align}
\widetilde F_t(\xxy) &= \metantk_0(\xxy)\T^\eta_{\metantk_0}(t) \cdot \widetilde \Y
\end{align}

Then, the test loss on $\task$ at training time $t$ becomes
\begin{align}
    \loss_\task (t) 
    &= \frac{1}{2}\|F_t(\xxy)-\sY\|_2^2 \label{eq:MetaNTK:GBML-test-loss} \\
    &=\frac 1 2 \| F_0(\xxy) + \widetilde F_t(\xxy)-Y \|_2^2 \nonumber\\
    &=\frac 1 2 \|\widetilde F_t(\xxy) - \widetilde \sY\|_2^2 \label{eq:MetaNTK:supervised-learning} \\
    &= \frac 1 2 \|\metantk_0(\xxy)\T^\eta_{\metantk_0}(t) \cdot \widetilde \Y - \widetilde \sY  \|_2^2 \label{eq:gen:reform:test-loss}
\end{align}
It can be seen that \eqref{eq:gen:reform:test-loss} is in a similar form to the test loss of a kernel regression. This motivates us to study the generalization properties of GBML from a kernel view.

\textbf{Challenges.} We cannot directly apply a kernel generalization bound to GBML, since the equivalence between GBML and kernels only holds true in the \textit{infinite} width limit of DNNs (see Theorem \ref{thm:MNK}), while practical DNNs are \textit{finitely} wide for sure. As one studies the generalization properties of \textit{finitely} wide DNNs, even in the simple supervised learning setting, some non-trivial challenges emerge \cite{AllenZhu2019Generalization,fine-grained,cao2019generalization}. For instance, one need to deal with the initialization scheme, algorithm details, and optimization trajectories when studying DNN generalization. Moreover, the bi-level formulation of GBML and the few-shot learning setting makes our generalization analysis more challenging than the cases of supervised-learning.

\textbf{Theoretical Results.} We present a generalization bound in Theorem \ref{thm:gen-bound} below, which is related to the Meta Neural Kernel derived in Theorem \ref{thm:MNK}. The full proof of Theorem \ref{thm:gen-bound} along with several helper lemmas is attached in Appendix \ref{supp:generalization}, and we provide a brief sketch of the proof in this paragraph. Firstly, we consider an (finitely wide) over-parameterized DNN meta-trained by stochastic gradient descent (SGD) w.r.t. the GBML objective \eqref{eq:MAML-obj}. Secondly, we define a random feature function class by the gradient of the meta-output \eqref{eq:MetaNTK:functional-GD}, and then prove a cumulative loss bound on the meta-trained DNN, which leads to a generalization bound by an online-to-batch conversion \cite{cesa2004generalization}. Furthermore, we relate this bound with the Meta Neural Kernel (MNK) derived in Theorem \ref{thm:MNK}, and finally provide an MNK-based generalization bound for GBML with over-parameterized DNNs, which is shown in Theorem \ref{thm:gen-bound}.

\begin{figure}[t!]
    \centering
    \includegraphics[width=0.5\textwidth]{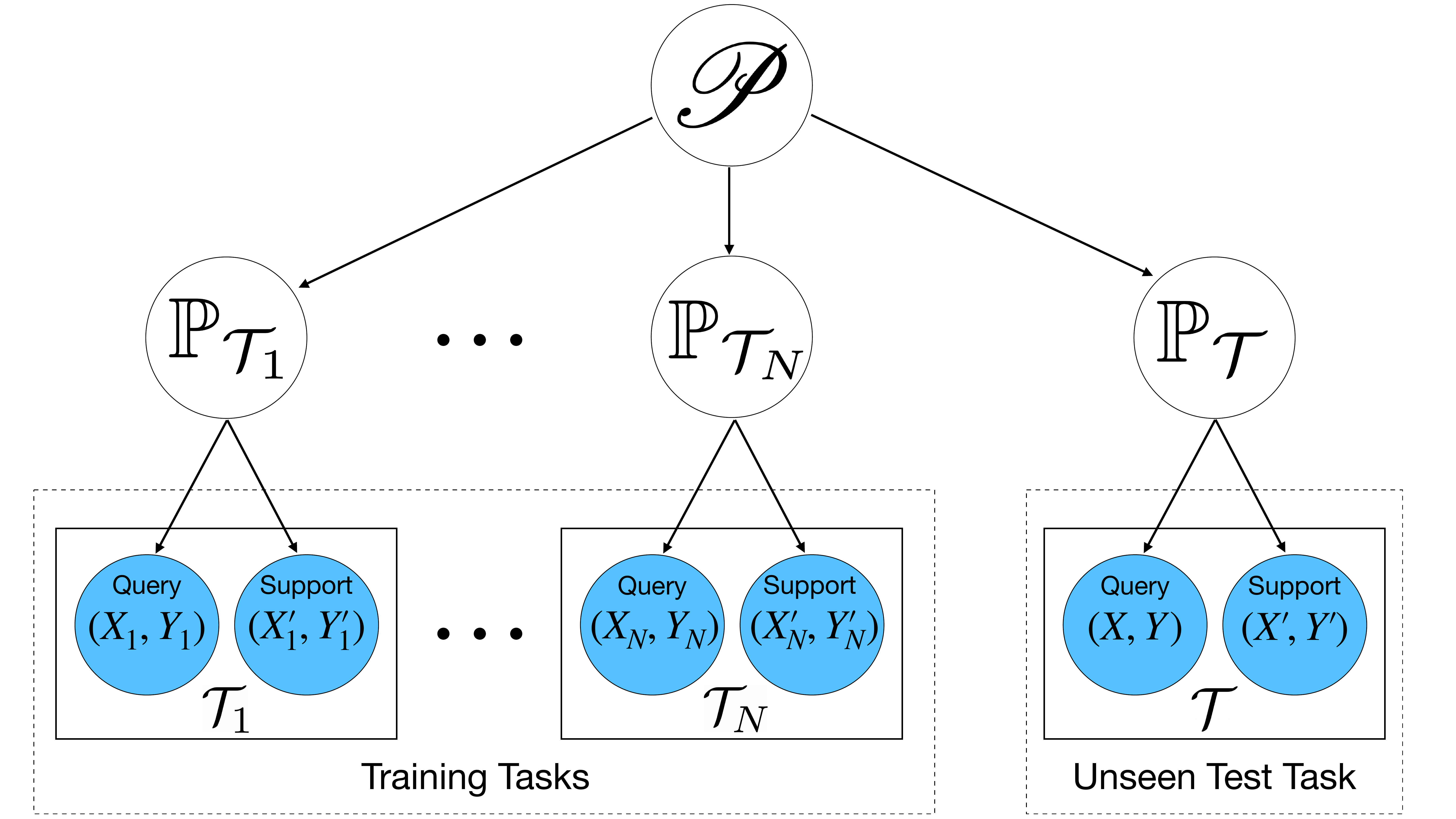}
    \caption{A schematic diagram of a few-shot learning problem with the \textit{task distribution} defined in Sec. \ref{sec:generalization:gen-bound}.
    }
    \label{fig:domain-gen}
\end{figure}

\begin{theorem}[Generalization Error Bound (\textit{Informal})]\label{thm:gen-bound}

Suppose training and test tasks are drawn i.i.d. from a task distribution $\mathscr{P}$. Let $\loss_{\mathscr{P}}$ be the expected loss of GBML tasks drawn i.i.d. from $\mathscr{P}$. Then, we show that for wide enough neural networks with $L$ hidden layers, $\loss_{\mathscr{P}}$ is bounded as the following with high probability,
\begin{align}\label{eq:MetaNTK:gen-bound}
    \loss_{\mathscr{P}} \leq \widetilde{\mathcal O}\left((L+1)\cdot \sqrt{\frac{\widetilde{\Y}_{\scriptscriptstyle{G}}^\top \metaNTK^{-1} \widetilde{\Y}_{\scriptscriptstyle G}}{Nn} }~\right)~.
\end{align}
where $\metaNTK$ is computed following \eqref{eq:MetaNTK_ij=kernel}, and $\wt \Y_{\scriptscriptstyle G}$ can be analytically expressed as
\begin{align}\label{eq:def:Y_G}
    \wt{\Y}_{\scriptscriptstyle G} = \Y - G_\NTK^\tau(\XXY)
\end{align}
with the function $G$ defined in \eqref{eq:G}. 
\end{theorem}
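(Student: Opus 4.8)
The plan is to reduce the generalization question to a random-feature online learning problem centered on the Projected Functional Gradient (PFG) term, then convert an online cumulative-loss guarantee into an i.i.d.\ generalization bound, and finally replace the finite-width kernel $\metantk_0$ by the Meta Neural Kernel $\metaNTK$ at a controllable cost. Concretely, I would start from the reformulated test loss \eqref{eq:gen:reform:test-loss}, which shows that the only object GBML learns is $\widetilde F_t(\xxy) = \metantk_0(\xxy)\T^\eta_{\metantk_0}(t)\,\widetilde\Y$; since $\metantk_0(\xxy) = \frac1l \nabla_\theta F_0(\xxy)\,\nabla_\theta F_0(\XXY)^\top$, this term is \emph{linear} in the random feature map $\phi(\cdot) = \frac{1}{\sqrt l}\nabla_\theta F_0(\cdot)$ induced by the Jacobian of the meta-output at initialization. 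So I would define the function class $\mathcal F = \{\, x \mapsto \langle \phi(\xxy), w\rangle : \|w\|_2 \le B \,\}$ for an appropriate radius $B$, and use Lemma~\ref{lemma:local-Liphschitzness} (the norm bound $\frac1{\sqrt l}\|J(\theta)\|_F \le K$ and the local Lipschitzness) together with the global convergence Theorem~\ref{thm:global-convergence} to certify that (i) the SGD trajectory stays inside $B(\theta_0, Cl^{-1/2})$, so the feature map is essentially frozen, and (ii) the residual $\widetilde\Y$ is fit with a weight vector of bounded norm, with $\|w^\star\|_2^2 \approx \widetilde\Y^\top \metantk_0^{-1}\widetilde\Y$.

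Next I would run the standard online-learning argument for linear/kernel regression with bounded features: SGD on the GBML objective is (up to the near-constancy of the feature map) online gradient descent over $\mathcal F$, so its cumulative loss over the $N$ training tasks is bounded by the best-in-class cumulative loss plus a regret term of order $\mathcal O(B \cdot K \cdot \sqrt{Nn})$ or, in the realizable/interpolating regime guaranteed by Theorem~\ref{thm:global-convergence}, by something closer to $\mathcal O(B^2 K^2)$. Dividing by the number of samples $Nn$ and invoking an online-to-batch conversion \cite{cesa2004generalization} (valid because tasks are drawn i.i.d.\ from $\mathscr P$ and the query samples within a task are i.i.d.\ from $\mathbb P_\task$) turns the cumulative training loss into a high-probability bound on the expected test loss $\loss_{\mathscr P}$, with the leading term scaling like $\sqrt{\widetilde\Y^\top \metantk_0^{-1}\widetilde\Y / (Nn)}$. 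The factor $(L+1)$ in \eqref{eq:MetaNTK:gen-bound} will enter here through the layerwise bounds on $\|J(\theta)\|_F$ and on the Jacobian perturbation constant $K$, which accumulate across the $L$ hidden layers plus the output layer, exactly as in finite-width DNN generalization analyses \cite{cao2019generalization,fine-grained}.

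The final step is to pass from the finite-width kernel to the Meta Neural Kernel: I would show $\metantk_0 \to \metaNTK$ and $F_0(\XXY) \to G_\NTK^\tau(\XXY)$ with high probability as $l \to \infty$ (this is precisely the content of Theorem~\ref{thm:MNK} and its proof in Appendix~\ref{supp:MetaNTK}), hence $\widetilde\Y = \Y - F_0(\XXY) \to \Y - G_\NTK^\tau(\XXY) = \widetilde\Y_{\scriptscriptstyle G}$ and $\widetilde\Y^\top \metantk_0^{-1}\widetilde\Y \to \widetilde\Y_{\scriptscriptstyle G}^\top \metaNTK^{-1}\widetilde\Y_{\scriptscriptstyle G}$, which yields the stated bound \eqref{eq:MetaNTK:gen-bound} after absorbing lower-order width-dependent corrections into the $\widetilde{\mathcal O}(\cdot)$ notation. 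The main obstacle, I expect, is controlling the coupling between the bi-level structure and the feature map: unlike ordinary supervised learning, the feature map $\phi(\cdot) = \frac1{\sqrt l}\nabla_\theta F_\theta(\cdot)$ is the gradient of the \emph{meta}-output, so it already contains $\tau$ steps of inner gradient descent, and its near-constancy along the outer trajectory needs the assumption $\tau = \cO(1/\lambda)$ and $\lambda < \lambda_0/l$ so that the inner loop does not push the adapted parameters out of the linear regime. Establishing uniformly (over tasks and over the outer trajectory) that $\phi$ stays close to $\phi_0$ — and that the effective regularity constants only grow like $(L+1)$ — is the technical heart of the argument; once that is in hand, the regret bound, the online-to-batch conversion, and the infinite-width limit are comparatively routine.
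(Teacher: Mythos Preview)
Your proposal is correct and follows essentially the same route as the paper: define a random-feature class from the Jacobian of the meta-output at initialization, prove a cumulative-loss bound for SGD via near-linearity and bounded gradients, apply the online-to-batch conversion of \cite{cesa2004generalization}, choose the comparator $w^\star$ as the minimum-norm interpolant so that $\|w^\star\|_2^2 \le 2\,\widetilde\Y^\top \metantk_0^{-1}\widetilde\Y$, and finally pass to the infinite-width limit via Theorem~\ref{thm:MNK}. The only caveat is that Theorem~\ref{thm:global-convergence} concerns full-batch gradient descent, so you cannot invoke it directly for the SGD trajectory; the paper instead re-establishes the near-linearity and trajectory control for SGD through a separate set of lemmas (almost-linearity of $F_\theta$, almost-convexity of the loss, and a cumulative-loss bound) adapted from \cite{cao2019generalization}, which is a minor technical reroute rather than a different strategy.
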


\textbf{Remarks on Data-Dependence.} Notice the generalization bound \eqref{eq:MetaNTK:gen-bound} is \textit{data-dependent}, since both $\wt{\Y}_{\scriptscriptstyle G}$ and $\metaNTK$ are analytically computed based on the training data $\{\task_i\}_{i=1}^N$. This indicates that the generalization bound relies on the quality of training data and properties of the task distribution. For instance, in the example of Sec. \ref{sec:quadratic-example}, if we replace the quadratic objective functions with uncorrelated random piecewise functions, the generalization bound \eqref{eq:MetaNTK:gen-bound} should predict a higher value of generalization error, since the across-task knowledge (i.e., quadratic nature of objective functions) in training data is annihilate by this replacement. See Sec. \ref{sec:exp:gen-bound} for empirical studies on the data-dependence of this generalization bound.

\section{Empirical Validation}\label{sec:exp:gen-bound}

\begin{figure*}[ht!]
    \subfloat[Synthetic Dataset]{\includegraphics[scale=0.5]{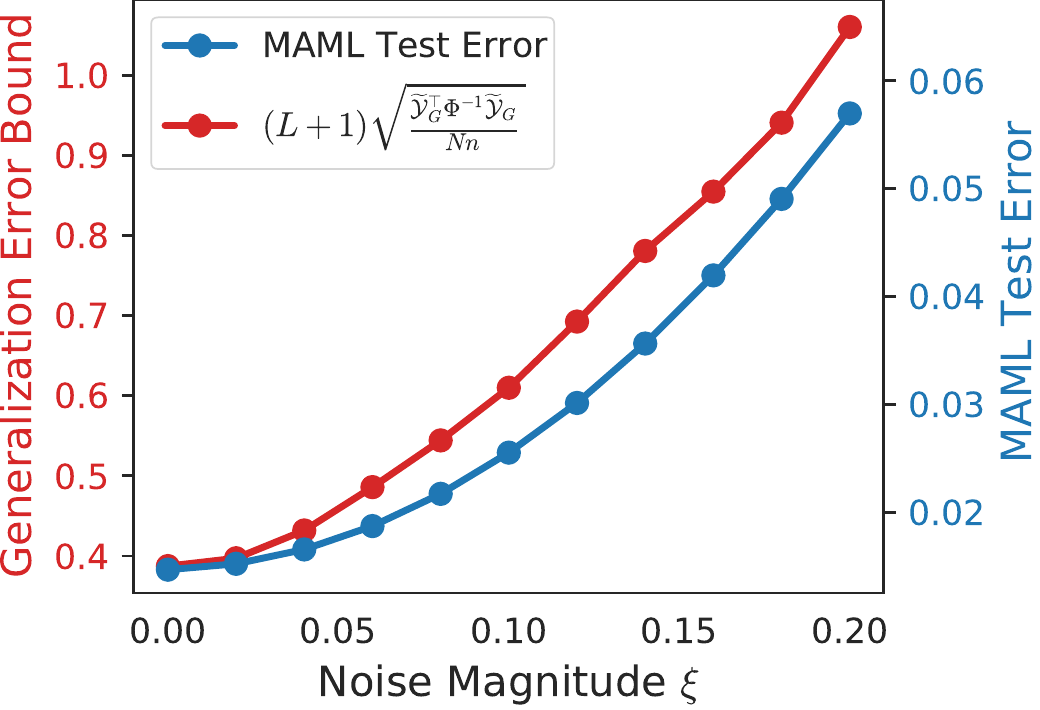}}
    \qquad
    \subfloat[Synthetic Dataset {\tiny (Normalized  $\wt Y_{\scriptscriptstyle G}$)}]{\includegraphics[scale=0.51]{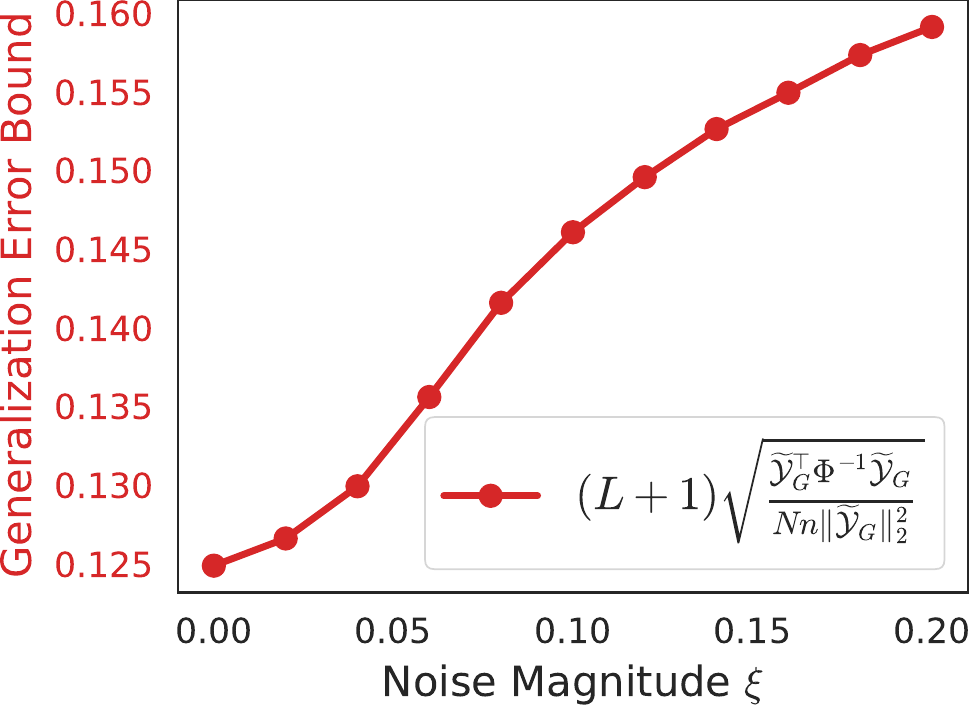}}%
    \qquad
    \subfloat[Omniglot Dataset]{\includegraphics[scale=0.52]{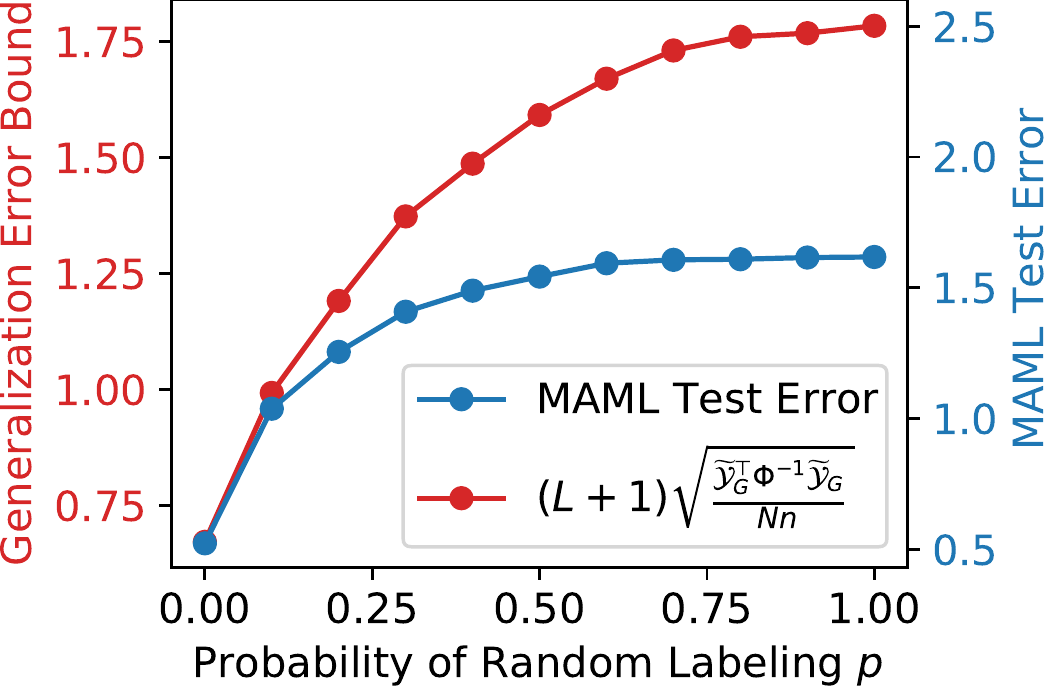}}%
    
    \caption{(a) Generalization error bound vs. MAML test error on the synthetic dataset. (b) Generalization error bound with normalized $\wt Y_{}$
    on the synthetic dataset. (c) Generalization error bound vs. MAML test error on the Omniglot dataset. Note the scale of the generalized error bound is not the same as the MAML test error, since there exists a $\wt \cO(\cdot)$ for the bound in \eqref{eq:MetaNTK:gen-bound}.}\label{fig:random-label} 
\end{figure*}

In this section, we empirically validate the generalization bound of Theorem \ref{thm:gen-bound} on a synthetic dataset and a real-world dataset. Our code is written in Python. We also adopt a popular PyTorch implementation\footnote{\url{https://github.com/facebookresearch/higher/}} of MAML in our experiments. Due to space limit, we put more details about our implementation and results in Appendix \ref{supp:exp:gen-bound}.

\subsection{Synthetic Dataset}\label{sec:exp:gen-bound:synthetic}
We consider the problem setting of \textit{1-d few-shot regression with quadratic objective functions} described in Sec. \ref{sec:quadratic-example}, and generate $N=40$ training tasks along with $40$ test tasks following that setting. As discussed in Sec. \ref{sec:exp:gen-bound}, the generalization bound \eqref{eq:MetaNTK:gen-bound} of Theorem \ref{thm:gen-bound} is \textit{data-dependent}, indicating that it gives different values for different task distributions. Here, we manually add \textit{label noises} to the training and test tasks to gradually change the task distribution, while we compare the value of the generalization error bound \eqref{eq:MetaNTK:gen-bound} against MAML test error in this process. The goal of this experiment is to validate the generalization bound by confirming that it can predict behaviours of practical GBML algorithms correctly. Specifically, for any training/test task $\task = (\xyxy)$, we add i.i.d. Gaussian noises drawn from $\mathcal{N}(0,\xi^2)$ to every label $y\in Y$ and $y'\in Y'$, where $\xi>0$ is the \textit{noise magnitude} parameter we vary in the experiment. Fig. \ref{fig:random-label}(a) shows that the values of our generalization bound are aligned with the MAML test error as the noise magnitude $\xi$ increases. However, one may be concerned that larger label noises result in greater values of $\|\wt \Y_{\scriptscriptstyle G}\|_2$, which is the only reason to the increase of $(L+1) \sqrt{\frac{\widetilde{\Y}_{\scriptscriptstyle{G}}^\top \metaNTK^{-1} \widetilde{\Y}_{\scriptscriptstyle G}}{Nn} }$. Fig. \ref{fig:random-label}(b) resolves this concern by showing that normalizing  $\widetilde{\Y}_{\scriptscriptstyle{G}}$ does not change the increasing trend of the generalization bound. Hence, we can conclude that our generalization bound is validated on this dataset.

\subsection{Omniglot Dataset}\label{sec:exp:gen-bound:omniglot}
The Omnitlot dataset is a popular benchmark for few-shot learning \cite{omniglot}. Following the convention of meta-learning \cite{maml}, we adopt the standard \textit{1-shot 5-way image classification} problem setting (i.e., $N=1$, $k=5$ in our notation). Similar to Sec. \ref{sec:exp:gen-bound:synthetic}, we want to compare the generalization error bound \eqref{eq:MetaNTK:gen-bound} against MAML test error as more label noises are added. Since class labels are discrete for the Omniglot dataset, we perform \textit{random labeling} to add label noises. Specifically, we define a parameter $p\in[0,1]$, which represents the \textit{probability of random labeling}. Then, for any training or test task,  \textit{with probability} $p$, each query/support label is replaced with an independent random class label. From Fig. \ref{fig:random-label}(b), we can see that as $p$ increases, the values of the generalization error bound are aligned with MAML test error values. Therefore, our generalization bound is also validated on the Omniglot dataset.

\textbf{Remarks.} Please see Appendix \ref{supp:exp:gen-bound} for more details about the experiments in Sec. \ref{sec:exp:gen-bound:synthetic} and \ref{sec:exp:gen-bound:omniglot}, including specification of both datasets, computation of the generalization bound, and hyper-parameter choices, etc.

\section{Conclusion}

This paper studies the \textit{optimization} and \textit{generalization} properties of \textit{gradient-based meta-learning} (GBML) equipped with deep neural networks (DNNs). First, we prove the \textit{global convergence} of GBML with over-parameterized DNNs. Based on the convergence analysis, we prove that in the infinite width limit of DNNs, GBML converges to a kernel regression with a new class of kernels, which we name as \textit{Meta Neural Kernels}(MNKs). Then, we show an equivalence between GBML and a \textit{functional gradient descent} operation, which provides a theoretical verification of the intuition behind the design of GBML. Furthermore, this equivalence provides us a novel perspective to study the generalization of GBML, and we finally prove an MNK-based \textit{generalization error bound} for GBML with over-parameterized DNNs. To the best of our knowledge, it is the first generalization bound for GBML with non-linear DNNs. Empirically, we validate the effectiveness of this generalization error bound on both synthetic and realistic datasets. 

\section*{Acknowledgements}
Haoxiang Wang would like to thank Simon Du, Niao He, Han Liu, Yunan Luo, Ruosong Wang and Han Zhao for insightful discussions.
\bibliography{main}

\onecolumn
\newpage
\appendix

\section{Neural Network Setup}
\label{supp:NTK-setup}

In this paper, we consider a fully-connected feed-forward network with $L$ hidden
layers. Each hidden layer has width $l_{i}$, for $i = 1, ..., L$. The readout layer (i.e. output layer) has width $l_{L+1} = k$. At each layer $i$, for arbitrary input $x\in \mathbb R^{d}$, we denote the pre-activation and post-activation functions by $h^i(x), z^i(x)\in\mathbb R^{l_i}$. The relations between layers in this network are
\begin{align}
\label{eq:recurrence}
\begin{cases}
    h^{i+1}&=z^i W^{i+1} + b^{i+1}
    \\
    z^{i+1}&=\activation \left(h^{i+1}\right) 
    \end{cases}
    \,\, \textrm{and} 
    \,\,
    \begin{cases}
  W^i_{\mu,\nu}& =  \omega_{\mu \nu}^i \sim \mathcal{N} (0, \frac {\sigma_\omega} {\sqrt{l_i}} )
    \\
    b_\nu^i &= \beta_\nu^i \sim \mathcal{N} (0,\sigma_b  )
\end{cases}
,
\end{align}
where $W^{i+1}\in \mathbb R^{l_i\times l_{i+1}}$ and $b^{i+1}\in\mathbb R^{l_{i+1}}$ are the weight and bias of the layer, $\omega_{\mu \nu}^l$ and $ b_\nu^l $ are trainable variables drawn i.i.d. from zero-mean Gaussian distributions at initialization (i.e.,
$\frac{\sws}{l_i}$ and $\sbs$ are variances for weight and bias, and $\activation$ is a point-wise activation function.

\section{Related Works}\label{supp:related-works}

\subsection{Related Works on Supervised Learning}
\label{supp:related-works:supervised-learning}

Recently, there is a line of works studying the optimization of neural networks in the setting of supervised learning (e.g. \cite{AllenZhu2018ACT,du2018gradient,ntk,CNTK,lee2019wide,ji2018gradient,fine-grained,
liang2018adding,liang2018understanding, kawaguchi2019elimination,
li2018over,nguyen2018loss,li2018learning,liang2019revisiting}), 
and it has been proved that the optimization is guaranteed to converge to global optima;
see \citet{sun2019optimization} for an overview. 
Below, we briefly discuss one such convergence result.
 
 Consider a supervised learning task:
 given the training samples $\sX=(\sx_i)_{i=1}^n$ and targets $\sY=(\sy_i)_{i=1}^n$,
 we learn the neural network $f$ by minimizing
\begin{align}\label{eq:supervised}
     L(\theta) =  \sum_{i=1}^{m}\ell(f_\theta(\sx_i),\sy_i)=\frac{1}{2}\|f_\theta(\sX)-\sY\|^2_2,
\end{align}
where $\ell(\hat {\sy}, \sy)=\frac{1}{2}\|\hat{\sy} - \sy\|^2_2$ is the loss function, and $f_\theta(\sX)\equiv(f_\theta(\sx_i))_{i=1}^n\in \mathbb R^{kn}$ is the concatenated network outputs on all samples. 

The update rule of gradient descent on $L$ is standard:
\begin{align}
    \theta_{t+1} = \theta_t - \lambda \nabla_{\theta_t} L(\theta_t)\nonumber
\end{align}
where $t$ represents the number of training steps.

With sufficiently small learning rate $\lambda$ and sufficiently large network width, \cite{ntk,lee2019wide,CNTK} show that the gradient descent is guaranteed to obtain zero training loss given long enough training time, i.e., $\lim_{t \rightarrow \infty}L(\theta_t) = 0$. Currently, the global convergence of supervised learning with deep neural networks is mostly restricted to $\ell_2$ loss. Other loss functions are generally harder to analyze. For example, with the cross-entropy loss function, the model parameters do not converge to a point without regularization. This is the reason why we also consider $\ell_2$ loss.

\subsection{Related Works on Meta-Learning Theory}
\label{supp:related-works:meta-learning}

Except for the papers listed in Sec. \ref{sec:intro}, there are two recent meta-learning theory works that are more relevant to the topic of this paper, which we discuss in detail below.

In the concurrent work\footnote{The release of \citet{wang2020global} is \textit{before} our first submission of this paper to another conference.} of \citet{wang2020global}, the authors also study the optimization of MAML in a different approach from our paper and proves its global convergence for a special class of neural nets. Specifically, they only consider a class of \textit{two-layer} neural nets with second layers frozen (i.e., \textit{not trained} during the GBML optimization). Besides, their neural nets do not have bias terms, and the second layers are assigned with \textit{binary values} $\{+1,-1\}$ only, which is not a practical nor common assumption. In contrast, in this paper, we consider neural nets of \textit{any depth} that have bias terms, and all layers are initialized with values drawn i.i.d. from Gaussian distributions, which is a commonly used initialization scheme. Also, no layer is assumed to be frozen in this paper. Overall, we think the neural nets studied in \citet{wang2020global} are too simple, special, and unrealistic. In our opinion, our work provides a global convergence guarantee to GBML with a much broader class of neural nets, which are more general and common in practice.

\citet{saunshi2020sample} studies the generalization ability of Reptile \cite{reptile}, a first-order variant of MAML, in a toy problem setting with 2-layer \textit{linear} neural nets (i.e., neural nets without non-linear activation). Specifically, they construct a meta-learning problem with only \textit{two unique training/test tasks}, and they assume labels are generated by a \textit{linear function}, which is a very strong data assumption. Notably, \textit{linear} neural nets are just toy models for certain theoretical analyses, and they are never used in practice, since their representation power is the same as linear models. Overall, \citet{saunshi2020sample} does not consider common \textit{non-linear} neural nets and general problem settings, thus its results are not much meaningful to general GBML.

\section{Proof of Global Convergence for Gradient-Based Meta-Learning with Deep Neural Networks}
\label{supp:global-convergence}
In this section, we will prove Theorem \ref{thm:global-convergence}, which states that with sufficiently over-parameterized neural networks, gradient-based meta-learning trained under gradient descent is guaranteed to converge to global optima at linear convergence rate. 

We consider the standard parameterization scheme of neural networks shown in (\ref{eq:recurrence}).

Theorem \ref{thm:global-convergence} depends on several assumptions and lemmas. The assumptions are listed below. After that, we present the lemmas and the global convergence theorem, with proofs in Appendix \ref{supp:global-convergence:lemma-proof},\ref{supp:lemma-proof:bounded_init_loss},\ref{supp:global-convergence:lemma-proof:kernel-convergence} and \ref{supp:global-convergence:theorem-proof}. For Corollary \ref{corr:GBML-output}, we append its proof to Appendix \ref{supp:GBML-output}.

\begin{assumption}[Bounded Input Norm]\label{assum:input-norm<=1}
$\forall \sX\in \X$, for any sample $\sx \in \sX$, $\|\sx\|_2 \leq 1$. Similarly, $\forall \sX' \in \X'$, for any sample $\sx' \in \sX'$, $\|\sx'\|\leq 1$. (This is equivalent to a input normalization operation, which is common in data preprocessing.)
\end{assumption}

\begin{assumption}[Non-Degeneracy]\label{assum:non-degeneracy}
The meta-training set $(\X,\Y)$ and the meta-test set $(\X',\Y')$ are both contained in some compact set. Also, $\X$ and $\X'$ are both non-degenerate, i.e. $\forall \sX, \widetilde{\sX} \in \X$, $\sX\neq \widetilde{\sX}$, and $\forall \sX', \widetilde{\sX}' \in \X'$, $\sX'\neq \widetilde{\sX}'$.
\end{assumption}

\begin{assumption}[Same Width for Hidden Layers]\label{assum:same-width} 
All hidden layers share the same width, $l$, i.e., $l_1=l_2=\dots=l_L = l$.

\end{assumption}

\begin{assumption}[Activation Function]\label{assum:activation}
The activation function used in neural networks, $\phi$, has the following properties: 
    \begin{align}
    \label{eq:activation-assumption}
    |\phi(0)| < \infty,\quad  \|\phi'\|_\infty < \infty, \quad \sup_{x\neq \tilde x} |\phi'(x) - \phi'(\tilde x)|/|x-\tilde x| < \infty.
    \end{align}
\end{assumption}

\begin{assumption}[Full-Rank]\label{assum:full-rank} The kernel $\metaNTK$ defined in Lemma \ref{lemma:kernel-convergence} is full-rank.
\end{assumption}

These assumptions are common, and one can find similar counterparts of them in the literature for supervised learning \cite{lee2019wide,CNTK}.

As defined in the main text, $\theta$ is used to represent the neural net parameters. For convenience, we define some short-hand notations:
\begin{align}
    f_t(\cdot) &= f_{\theta_t}(\cdot)\\
    F_t(\cdot) &= F_{\theta_t}(\cdot)\\
    f(\theta) &= f_{\theta}(\X) = ((f_{\theta}(\sX_i))_{i=1}^{N} \\
    F(\theta) &= F_{\theta}(\XXY) =((F_{\theta}(\xxyi))_{i=1}^{N}\\
    g(\theta) &=  F_{\theta}(\XXY) - \Y \\
    J(\theta) &= \nabla_{\theta} F(\theta) = \nabla_\theta F_\theta(\XXY)
\end{align}
and
\begin{align}
    \mc L (\theta_t)&=\ell(F(\theta_t),\Y)=\frac 1 2 \|g(\theta_t)\|_2^2 \label{eq:supp:train_loss}\\
    \metantk_t &= \frac{1}{l} \nabla F_{\theta_t}(\XXY) \nabla F_{\theta_t}(\XXY) = \frac{1}{l} J(\theta) J(\theta)^\top \label{eq:supp:emp_MNK}
\end{align}
where we use the $\ell_2$ loss function $\ell (\hat{y},y)=\frac 1 2 \|\hat y - y\|^2_2$ in the definition of training loss $\loss(\theta_t)$ in (\ref{eq:supp:train_loss}), and the $\metantk_t$ in (\ref{eq:supp:emp_MNK}) is based on the definition\footnote{There is a typo in the definition of $\metantk_{\theta}(\cdot,\star)$ in Sec. \ref{sec:global-convergence}: a missing factor $\frac{1}{l}$. The correct definition should be $\metantk_{\theta}(\cdot,\star)= \frac{1}{l}\nabla_\theta F_\theta(\cdot)\nabla_\theta F_\theta (\star)^\top$. Similarly, the definition of $\metaNTK$ in Theorem \ref{thm:global-convergence} also missis this factor: the correct version is $\metaNTK = \frac{1}{l}\lim_{l \rightarrow \infty} J(\theta_0) J(\theta_0)^\top$} of $\metantk_\theta(\cdot,\star)$ in Sec. \ref{sec:global-convergence}.

Below, Lemma \ref{lemma:local-Liphschitzness} proves the Jacobian $J$ is locally Lipschitz, Lemma \ref{lemma:bounded_init_loss} proves the training loss at initialization is bounded, and Lemma \ref{lemma:kernel-convergence} proves $\metantk_0$ converges in probability to a deterministic kernel matrix with bounded positive eigenvalues. Finally, with these lemmas, we can prove the global convergence of GBML in Theorem \ref{thm:global-convergence:restated}.

\begin{lemma}[{\bf Local Lipschitzness of Jacobian} (Lemma \ref{lemma:local-Liphschitzness} restated)]
\label{lemma:supp:local-Liphschitzness} 
Suppose\footnote{This assumption is realistic in practice. For example, the official implementation of MAML \cite{maml} for few-shot classification benchmarks adopts (i) $\tau=1, \lambda = 0.4$ and (ii) $\tau=5,\lambda=0.1$, which both satisfy our assumption.} $\tau = \cO(\frac{1}{\lambda})$, then there exists $K>0$ and $l^*>0$ such that: $\forall ~C>0$ and $l > l^*$, the following inequalities hold true with high probability over random initialization,
\begin{align}\label{eq:jacobian-lip:supp}
 \forall \theta, \, \Bar \theta \in B(\theta_0, C l^{-\frac 1 2}),  \begin{cases}  
    \frac 1 {\sqrt l}\|J(\theta) - J(\Bar \theta)\|_{F} &\leq K\|\theta - \Bar \theta\|_2
    \\
    \\
    \frac 1 {\sqrt l} \|J(\theta)\|_{F} & \leq K 
    \end{cases}
\end{align}
where $B$ is a neighborhood defined as
\begin{align}
\label{def:B:supp}
    B(\theta_0, R) := \{\theta: \|\theta-\theta_0\|_2 < R\}.   
\end{align}
\end{lemma}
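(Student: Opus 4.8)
\textbf{Proof strategy for Lemma \ref{lemma:supp:local-Liphschitzness}.} The plan is to reduce the local Lipschitzness of the meta-output Jacobian $J(\theta) = \nabla_\theta F_\theta(\XXY)$ to the already-known local Lipschitzness of the \emph{ordinary} network Jacobian $\nabla_\theta f_\theta(\cdot)$, which is standard in the NTK literature (see \cite{lee2019wide}): there exist constants $K_0>0$, $l^*_0>0$ such that for all $l>l^*_0$, with high probability over initialization, $\frac{1}{\sqrt l}\|\nabla_\theta f_\theta(x) - \nabla_\theta f_{\bar\theta}(x)\|_F \le K_0\|\theta-\bar\theta\|_2$ and $\frac{1}{\sqrt l}\|\nabla_\theta f_\theta(x)\|_F \le K_0$ for all $\theta,\bar\theta \in B(\theta_0, Cl^{-1/2})$ and all input $x$ with $\|x\|_2\le 1$. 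The key observation is that $F_\theta(\xxy) = f_{\theta'}(X)$ where $\theta' = \theta'(\theta)$ is the output of $\tau$ steps of gradient descent on the support loss starting from $\theta$; by the chain rule,
\begin{align}
J(\theta) = \nabla_\theta F_\theta(\XXY) = \left(\nabla_{\theta'} f_{\theta'}(X)\right)\bigg|_{\theta'=\theta'(\theta)} \cdot \frac{\partial \theta'(\theta)}{\partial \theta},
\end{align}
so the meta-Jacobian factors into the base Jacobian evaluated at the adapted point, times the Jacobian of the adaptation map. Thus the whole lemma rests on controlling (a) how far $\theta'(\theta)$ travels from $\theta_0$, and (b) the operator norm and Lipschitzness in $\theta$ of the adaptation-map Jacobian $\frac{\partial\theta'}{\partial\theta}$.

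\textbf{Key steps, in order.} First I would write out the adaptation map explicitly: $\theta'_{j+1} = \theta'_j - \lambda\,\nabla_{\theta'_j}\ell(f_{\theta'_j}(X'),Y')$ with $\theta'_0 = \theta$, so $\frac{\partial\theta'_{j+1}}{\partial\theta} = \left(I - \lambda\, H_j\right)\frac{\partial\theta'_j}{\partial\theta}$ where $H_j = \nabla^2_{\theta'_j}\ell(f_{\theta'_j}(X'),Y')$ is the Hessian of the support loss. Second, I would show that along the inner trajectory the iterates stay in a slightly enlarged ball $B(\theta_0, C'l^{-1/2})$: since $\|\nabla_{\theta'_j}\ell\|_2 = \|\nabla_{\theta'_j} f_{\theta'_j}(X')^\top(f_{\theta'_j}(X')-Y')\|_2 \le \|\nabla f\|_F \cdot \|f(X')-Y'\|_2 = O(\sqrt l)\cdot O(1)$, each of the $\tau$ steps moves $\theta'$ by at most $O(\lambda\sqrt l) = O(l^{-1/2})$ (using $\lambda = O(l^{-1})$ implicit in the width scaling, or more precisely that $\tau\lambda = O(1)$ and $\lambda\sqrt l$ is controlled), so after $\tau = O(1/\lambda)$ steps the total displacement is bounded — here the assumption $\tau = O(1/\lambda)$ is exactly what keeps $\|\theta'-\theta\|_2$ from blowing up. Third, using Assumption \ref{assum:activation} (bounded, Lipschitz $\phi'$, finite $\phi(0)$) together with the base-Jacobian bounds, I would bound the Hessian $H_j$: write $H_j = \frac{1}{l}(\sqrt l\nabla f)(\sqrt l\nabla f)^\top \cdot \frac{1}{l}\cdot l + (\text{second-derivative terms})$; the Gauss--Newton part has operator norm $O(l)$ times $\|\nabla f\|_F^2/l$... more carefully, $\|H_j\|_{op} \le \|\nabla_{\theta'_j}f(X')\|_F^2 + \|f(X')-Y'\|_2 \cdot \|\nabla^2_{\theta'_j}f(X')\|_{op} = O(l) + O(1)\cdot O(\sqrt l) = O(l)$, wait — this needs $\lambda\|H_j\|_{op} = O(1)$ for the product $\prod_j(I-\lambda H_j)$ to be bounded, which holds since $\lambda = O(l^{-1})$ makes $\lambda\|H_j\|_{op} = O(1)$, and combined with $\tau = O(1/\lambda)$ one gets $\|\prod_{j}(I-\lambda H_j)\|_{op} \le \prod_j(1+\lambda\|H_j\|_{op}) \le (1+O(\lambda))^\tau = e^{O(\tau\lambda)} = O(1)$. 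Fourth, I would bound the Lipschitzness of $\frac{\partial\theta'}{\partial\theta}$ with respect to $\theta$: this requires controlling how $H_j$ changes when $\theta$ changes, which follows from the local Lipschitzness of $\nabla f$ and $\nabla^2 f$ (the latter again from the Lipschitz-$\phi'$ assumption), propagated through the $\tau$-step recursion by a discrete Gr\"onwall-type argument. Finally, combining the chain-rule factorization with (i) the base-Jacobian bounds at the adapted point $\theta'(\theta) \in B(\theta_0, C'l^{-1/2})$, (ii) $\|\frac{\partial\theta'}{\partial\theta}\|_{op} = O(1)$, and (iii) the Lipschitzness of both factors in $\theta$, the product rule $\|AB - \bar A\bar B\|_F \le \|A-\bar A\|_F\|B\|_{op} + \|\bar A\|_F\|B-\bar B\|_{op}$ yields both inequalities in \eqref{eq:jacobian-lip:supp} with a suitable combined constant $K$ and threshold $l^*$.

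\textbf{Main obstacle.} The hard part will be step four: establishing that the adaptation-map Jacobian $\frac{\partial\theta'(\theta)}{\partial\theta}$ is itself Lipschitz in $\theta$ with a width-independent constant. This is where the second-order structure of the network genuinely enters — one must bound $\|H_j(\theta) - H_j(\bar\theta)\|_{op}$, which involves third-order information if done naively, but can be handled because the loss is quadratic (so $H_j = \nabla f^\top\nabla f + \text{residual}\cdot\nabla^2 f$, and only $\nabla f, \nabla^2 f$ appear), and for these the Lipschitz-$\phi'$ assumption suffices. The bookkeeping of carrying $\tau$ nested perturbation bounds through the recursion — each contributing a factor that must be shown to telescope into $e^{O(\tau\lambda)} = O(1)$ rather than growing with $l$ — is the technically delicate piece, and it is precisely why the hypothesis $\tau = \mathcal{O}(1/\lambda)$ (equivalently $\tau\lambda = O(1)$) cannot be dropped. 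Everything else is a fairly mechanical lift of the supervised-learning arguments of \cite{lee2019wide} through one extra layer of composition.
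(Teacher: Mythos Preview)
Your chain-rule factorization $J(\theta)=\nabla_{\theta'}f_{\theta'}(X)\cdot\frac{\partial\theta'}{\partial\theta}$ is a genuinely different route from the paper's. The paper never tracks $\frac{\partial\theta'}{\partial\theta}$ through the discrete recursion. Instead it (i) invokes the linearized-dynamics representation of the inner loop to write $F_\theta(\xxy)=f_\theta(X)-\ntk_\theta(X,X')\ntk_\theta^{-1}(I-e^{-\lambda\ntk_\theta\tau})(f_\theta(X')-Y')$ directly in terms of the empirical kernel $\ntk_\theta$; (ii) differentiates that formula, using a separate helper lemma (built on the Hessian bounds of \cite{hessian-ntk}) to show that all terms where $\nabla_\theta$ hits $\ntk_\theta$ vanish as $l\to\infty$; and (iii) applies an SVD trick to collapse what remains to the closed form $\nabla_\theta F_\theta=\nabla_\theta f_\theta(X)\,e^{-\lambda\conjntk_\theta\tau}$ with $\conjntk_\theta=\tfrac{1}{l}\nabla_\theta f_\theta(X')^\top\nabla_\theta f_\theta(X')$. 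Because $\conjntk_\theta$ is manifestly PSD, $\|e^{-\lambda\conjntk_\theta\tau}\|_{op}\le 1$ is immediate, and the Lipschitz estimate follows from a matrix-exponential perturbation bound. Your approach is more elementary (no continuous-time approximation, no SVD), at the price of heavier recursion bookkeeping; the paper's is slicker once the helper lemma is in hand, but that lemma is itself nontrivial and uses the same Hessian smallness you would need.

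There is, however, a genuine gap in your step~3. You write $\|\prod_j(I-\lambda H_j)\|_{op}\le\prod_j(1+\lambda\|H_j\|_{op})\le(1+O(\lambda))^\tau=e^{O(\tau\lambda)}$, but you just argued $\|H_j\|_{op}=O(l)$ and $\lambda=O(l^{-1})$, so $\lambda\|H_j\|_{op}=O(1)$, not $O(\lambda)$. The crude submultiplicative bound then gives $(1+O(1))^\tau$, which is useless once $\tau$ is allowed to grow. The repair is not to bound $\|I-\lambda H_j\|_{op}$ by $1+\lambda\|H_j\|_{op}$ but to use that $H_j$ is \emph{nearly PSD}: its Gauss--Newton part $\nabla f(X')^\top\nabla f(X')$ is PSD with operator norm $O(l)$, while the residual part $(f-Y')\cdot\nabla^2 f$ has operator norm $o(l)$ for wide nets (this is precisely the $\|\tfrac{1}{\sqrt l}\nabla^2 f\|_{op}\to 0$ result of \cite{hessian-ntk} that the paper's helper lemma also invokes). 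Hence for $\lambda\Lev(\nabla f^\top\nabla f)<2$ one gets $\|I-\lambda H_j\|_{op}\le 1+o(1)$, and the product stays bounded uniformly in $\tau$. This is exactly the discrete analogue of the paper's observation that $e^{-\lambda\conjntk_\theta\tau}$ is a contraction; you need it, and once you have it your route goes through.
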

 \begin{proof}
 See Appendix \ref{supp:global-convergence:lemma-proof}.
 \end{proof}

 \begin{lemma}[\textbf{Bounded Initial Loss}] \label{lemma:bounded_init_loss}
For arbitrarily small $\delta_0 > 0$, there are constants $R_0>0$ and $l^*>0$ such that as long as the width $l > l^*$, with probability at least $(1-\delta_0)$ over random initialization,
\begin{align}
    \|g(\theta_0)\|_2 &= \|F_{\theta_0}(\XXY) - \Y\|_2\leq R_0,
\end{align}
which is also equivalent to
\begin{align*}
    \loss(\theta_0) = \frac 1 2 \|g(\theta_0)\|^2_2 \leq \frac 1 2 R_0^2.
\end{align*}
\end{lemma}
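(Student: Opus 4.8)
The plan is to control $\|F_{\theta_0}(\XXY)-\Y\|_2$ at random initialization by decomposing it into a contribution from the adapted meta-output $F_{\theta_0}$ and the fixed labels $\Y$, and then bounding each piece with standard NTK-style concentration arguments. Recall that $F_{\theta_0}(\xxyi) = f_{\theta_0'}(\sX_i)$, where $\theta_0'$ is obtained from the random initialization $\theta_0$ by $\tau$ steps of gradient descent with step size $\lambda$ on the support loss $\ell(f_{\theta_0}(\sX_i'),\sY_i')$. By the triangle inequality it suffices to bound $\|F_{\theta_0}(\XXY)\|_2$ (the term $\|\Y\|_2$ is a fixed finite constant by Assumption~\ref{assum:non-degeneracy}, the boundedness of the meta-training set). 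So the core task is: for a randomly initialized wide net, show the adapted output $f_{\theta_0'}(\sX_i)$ has norm $O_P(1)$ uniformly over the $N$ tasks.

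First I would observe that $\theta_0' = \theta_0 - \lambda \sum_{j=0}^{\tau-1} \nabla_{\theta_j}\ell(f_{\theta_j}(\sX_i'),\sY_i')$, so $\|\theta_0' - \theta_0\|_2 \le \lambda \tau \cdot \max_j \|\nabla_{\theta_j}\ell(f_{\theta_j}(\sX_i'),\sY_i')\|_2$. Using the gradient/Jacobian bound already available from Lemma~\ref{lemma:local-Liphschitzness} (the $\frac{1}{\sqrt l}\|J(\theta)\|_F \le K$ type estimate, adapted to the support-set Jacobian of $f$ rather than the meta-output $F$, which is the same kind of statement), together with the fact that $f_{\theta_0}(\sX_i')$ and hence the initial support loss are $O_P(1)$ at standard Gaussian initialization (a known NTK fact, e.g.\ following \cite{lee2019wide}), one gets $\max_j\|\nabla_{\theta_j}\ell(\cdot)\|_2 = O_P(\sqrt l)$, so $\|\theta_0'-\theta_0\|_2 = O_P(\lambda\tau\sqrt l) = O_P(\sqrt l \cdot \lambda \tau)$. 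Since the theorem works in the regime $\tau = \cO(1/\lambda)$ and ultimately $\lambda < \lambda_0/l$, one should actually work with $\lambda\tau = \cO(1)$, giving $\|\theta_0'-\theta_0\|_2 = O_P(\sqrt l)$ — but more carefully one wants $\theta_0'$ to stay inside a ball $B(\theta_0, C l^{-1/2})$ so that Lemma~\ref{lemma:local-Liphschitzness} applies; this requires using $\lambda = O(1/l)$ so that $\lambda\tau\sqrt l = O(l^{-1/2})$. Then on that small ball, $f_{\theta_0'}(\sX_i) = f_{\theta_0}(\sX_i) + \int_0^1 \nabla f(\theta_0 + s(\theta_0'-\theta_0))\cdot(\theta_0'-\theta_0)\,ds$, and both $\|f_{\theta_0}(\sX_i)\|_2 = O_P(1)$ and the correction term $\le \max_\theta \|\nabla f(\theta)\|_{op}\cdot\|\theta_0'-\theta_0\|_2 \le \sqrt l K \cdot C l^{-1/2} = O_P(1)$ are bounded.

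Putting these together over the $N$ tasks (taking a union bound, with $N$ finite this only inflates constants), we obtain $\|F_{\theta_0}(\XXY)\|_2 = O_P(1)$, and combined with $\|\Y\|_2 = O(1)$ we conclude: for any $\delta_0 > 0$ there is $R_0$ (depending on $\delta_0$, $N$, the data, $K$, $\lambda_0$, $\tau$) and $l^*$ such that for $l > l^*$, $\|g(\theta_0)\|_2 = \|F_{\theta_0}(\XXY)-\Y\|_2 \le R_0$ with probability at least $1-\delta_0$, and squaring gives $\loss(\theta_0) \le \tfrac12 R_0^2$. The main obstacle I expect is the bookkeeping to ensure $\theta_0'$ genuinely lands in the $l^{-1/2}$-ball required by Lemma~\ref{lemma:local-Liphschitzness} — i.e.\ tracking how the inner learning rate $\lambda$, the number of inner steps $\tau$, and the width $l$ interact — and making sure the ``$O_P(1)$ at initialization'' facts for $f_{\theta_0}$, which are classical for plain supervised nets, transfer verbatim to the support-set outputs here; both are routine given the NTK literature but need care with the $\frac{1}{\sqrt l}$ normalization conventions used in this paper.
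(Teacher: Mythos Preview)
Your route is legitimate but genuinely different from the paper's, and it covers a narrower regime. The paper never tries to locate $\theta_0'$ in parameter space at all. Instead it writes the inner adaptation directly in function space via the continuous-time NTK formula,
\[
F_{\theta_0}(X,X',Y') \;=\; f_{\theta_0}(X) \;+\; \ntk_0(X,X')\,\ntk_0^{-1}\,(I-e^{-\lambda \ntk_0 \tau})\,\bigl(f_{\theta_0}(X')-Y'\bigr),
\]
and then bounds the factors one by one: $f_{\theta_0}(\cdot)$ converges in distribution to a mean-zero Gaussian (so $\|f_{\theta_0}(X)-Y\|,\|f_{\theta_0}(X')-Y'\|\le R_1$ w.h.p.), the operator $\ntk_0(X,X')\ntk_0^{-1}$ has norm controlled by $\Lev(\NTK)/\lev(\NTK)$, and $\|I-e^{-\lambda\ntk_0\tau}\|_{op}\le 1$ for \emph{every} $\lambda\tau>0$. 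A supremum over the $N$ tasks and $R_0=\sqrt{N}R_2$ finishes. The key buy of this function-space argument is that the bound is uniform in $\lambda\tau$, so the full hypothesis $\tau=\cO(1/\lambda)$ (i.e.\ $\lambda\tau$ up to order one) is handled without ever asking where $\theta_0'$ lands.

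Your argument, by contrast, needs $\theta_0'\in B(\theta_0,Cl^{-1/2})$ to invoke the Jacobian bounds, and your step-by-step estimate $\|\theta_0'-\theta_0\|_2=O(\lambda\tau\sqrt l)$ only puts you there when $\lambda\tau=O(1/l)$; your line ``$\lambda=O(1/l)$ so that $\lambda\tau\sqrt l=O(l^{-1/2})$'' silently assumes $\tau$ is a fixed constant, and fails if $\tau$ is allowed to scale like $1/\lambda$ (then $\lambda\tau\sqrt l=O(\sqrt l)$). In fact the true inner displacement \emph{is} $O(l^{-1/2})$ even when $\lambda\tau=O(1)$, because $\|f_{\theta_j}(X')-Y'\|$ decays along the trajectory and your crude $\max_j$ bound throws this away --- but proving that sharper displacement bound essentially requires the NTK-flow formula the paper is already using. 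So your MVT route is more elementary and works cleanly for fixed $\tau$, while the paper's function-space bound is both shorter and covers the stated $\tau=\cO(1/\lambda)$ regime without the bookkeeping you flagged as the main obstacle.
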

\begin{proof}
See Appendix \ref{supp:lemma-proof:bounded_init_loss}.
\end{proof}
\begin{lemma}[\textbf{Kernel Convergence}]\label{lemma:kernel-convergence}
Suppose the learning rates $\eta$ and $\lambda$ suffiently small. As the network width $l$ approaches infinity, $\metantk_0 = J(\theta_0) J(\theta_0)^\top$ converges in probability to a deterministic kernel matrix $\metaNTK$ (i.e., $\metaNTK=\lim_{l \rightarrow \infty} \metantk_0$), which is independent of $\theta_0$ and can be analytically calculated. Furthermore, the eigenvalues of $\metaNTK$ is bounded as, $0 < \lev (\metaNTK) \leq \Lev (\metaNTK) < \infty$.
\end{lemma}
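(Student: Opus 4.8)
The plan is to establish Lemma~\ref{lemma:kernel-convergence} by reducing the convergence of $\metantk_0$ to the known convergence of the ordinary NTK, and then tracking how this propagates through the inner-loop gradient descent that defines the meta-output $F_\theta$. First I would observe that the meta-output $F_\theta(X,X',Y')=f_{\theta'}(X)$ is an explicit (smooth) function of $\theta$ obtained by composing $\tau$ gradient-descent steps on the support set. By the chain rule, $\nabla_\theta F_\theta(X,X',Y') = \nabla_\theta f_\theta(X)\big|_{\theta=\theta'}\cdot \frac{\partial \theta'}{\partial \theta}$, where $\frac{\partial \theta'}{\partial \theta}$ is a product of $\tau$ Jacobian factors of the form $I-\lambda\, H_j$ with $H_j$ the Hessian-type operator of the inner loss at step $j$. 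Taking $l\to\infty$, each $\nabla_\theta f_\theta$ term contributes an NTK block (this is the standard NTK convergence result of \citet{ntk,lee2019wide,CNTK}), the inner-loop parameter displacement $\theta'-\theta_0 = O(\lambda)\cdot(\text{bounded})$ stays in a $O(l^{-1/2})$ ball (using $\lambda = O(1/l)$ and $\tau=O(1/\lambda)$, so $\lambda\tau = O(1)$), and therefore $\nabla_\theta f_{\theta'}\to\nabla_\theta f_{\theta_0}$ in the relevant sense by Lemma~\ref{lemma:supp:local-Liphschitzness}. The net effect is that $\metantk_0 = \frac1l J(\theta_0)J(\theta_0)^\top$ converges in probability to a fixed matrix assembled from finitely many NTK evaluations $\NTK(\cdot,\ast)$ and the fixed linear maps $\widetilde T^\lambda_\NTK(\cdot,\tau) = \NTK^{-1}(I-e^{-\lambda\NTK\tau})$; indeed this limit is exactly the $\metaNTK$ written out in Theorem~\ref{thm:MNK}, equation~\eqref{eq:MetaNTK_ij}. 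Since the NTK limit is deterministic and independent of the random draw of $\theta_0$, so is $\metaNTK$, and it is analytically computable because $\NTK$ is.

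Next I would address the eigenvalue bound $0<\lev(\metaNTK)\le\Lev(\metaNTK)<\infty$. The upper bound is immediate: $\metaNTK$ is a finite matrix whose entries are continuous functions of finitely many NTK values, each finite by the boundedness assumptions on the inputs (Assumption~\ref{assum:input-norm<=1}) and the activation (Assumption~\ref{assum:activation}); so $\Lev(\metaNTK)<\infty$. The lower bound $\lev(\metaNTK)>0$ is where I would invoke Assumption~\ref{assum:full-rank}: $\metaNTK$ is positive semidefinite (being a limit of the Gram-type matrices $\frac1l J J^\top$), and full rank plus PSD gives strict positive definiteness, hence $\lev(\metaNTK)>0$. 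For completeness I would also sketch why positive-definiteness of $\metaNTK$ is plausible/consistent under Assumption~\ref{assum:non-degeneracy}: each single-task block $\SingleTaskmetaNTK$ in \eqref{eq:MetaNTK_ij} can be written in the factored form $(\,I \;-\; \NTK(X,X')\widetilde T^\lambda_\NTK(X',\tau)\,) \begin{psmallmatrix}\NTK(X,X) & \NTK(X,X')\\ \NTK(X',X) & \NTK(X',X')\end{psmallmatrix}(\,I\;-\;\NTK(X,X')\widetilde T^\lambda_\NTK(X',\tau)\,)^\top$, i.e.\ a congruence of the (PSD, and under non-degeneracy PD) joint NTK Gram matrix on $X\cup X'$; so each block is PSD and the whole block matrix is PSD, and non-degeneracy of the tasks rules out exact linear degeneracies — but since the cleanest route is just to assume full rank, I would keep the detailed positive-definiteness argument in an appendix remark and rely on Assumption~\ref{assum:full-rank} for the main line.

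The main obstacle I anticipate is making the interchange of limits rigorous: $\metantk_0$ involves $\nabla_\theta$ of a $\tau$-fold composition of gradient steps, so one must show that (i) the infinite-width NTK limit commutes with the $\tau$ inner-loop iterations, and (ii) the higher-order derivative terms that appear in $\frac{\partial\theta'}{\partial\theta}$ (second derivatives of $f_\theta$, which show up through the inner Hessians $H_j$) remain controlled — here the bounded-second-derivative part of Assumption~\ref{assum:activation} and the $O(l^{-1/2})$ trajectory confinement from Lemma~\ref{lemma:supp:local-Liphschitzness} are essential, and the scaling $\lambda=O(1/l)$, $\tau=O(1/\lambda)$ is what keeps $\lambda\tau=O(1)$ so the product $\prod_{j}(I-\lambda H_j)$ converges to a well-defined operator $e^{-\lambda\,\NTK\,\tau}$-type expression rather than blowing up. I would organize the proof as: (1) Taylor-expand the inner loop and show $\theta'-\theta_0$ and $J(\theta_0)$ have the right scaling; (2) apply the standard NTK convergence (citing \citet{ntk,lee2019wide,CNTK}) to each elementary Jacobian factor; (3) pass to the limit in the finite product/composition using dominated-convergence-style bounds justified by Lemma~\ref{lemma:supp:local-Liphschitzness}; (4) identify the limit with the closed form of $\metaNTK$; (5) conclude determinism, analyticity, and the eigenvalue bounds via Assumptions~\ref{assum:input-norm<=1},~\ref{assum:activation},~\ref{assum:full-rank}. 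The bulk of the technical work, and the part deferred to the appendix, is step (3).
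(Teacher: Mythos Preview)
Your proposal is broadly sound and reaches the same destination, but it takes a genuinely different route from the paper's proof. The paper does \emph{not} work through the discrete chain rule $\nabla_\theta F_\theta = \nabla_\theta f_{\theta'}(X)\cdot \partial\theta'/\partial\theta$ with $\partial\theta'/\partial\theta=\prod_j(I-\lambda H_j)$. Instead, it first invokes the small-$\lambda$/continuous-time regime to write the meta-output in closed form, $F_{0}(X,X',Y')=f_0(X)+\ntk_0(X,X')\,\widetilde T^\lambda_{\ntk_0}(X',\tau)\,(Y'-f_0(X'))$, and then differentiates this expression directly. The ``bad'' terms arising from $\nabla_\theta\big(\ntk_\theta(X,X')\ntk_\theta^{-1}(I-e^{-\lambda\ntk_\theta\tau})\big)(f_\theta(X')-Y')$ are shown to vanish in Frobenius norm by a dedicated helper lemma (Lemma~\ref{lemma:helper-local-Liphschitzness}), leaving the simple expression $\nabla_{\theta_0}F_0=\nabla_{\theta_0}f_0(X)-\ntk_0(X,X')\widetilde T^\lambda_{\ntk_0}(X',\tau)\nabla_{\theta_0}f_0(X')$. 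From there each block $[\metantk_0]_{ij}$ is a finite algebraic expression in $\ntk_0(\cdot,\ast)$, and the paper passes to the limit blockwise via the algebraic limit theorem and the standard NTK convergence $\ntk_0\to\NTK$.

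What each approach buys: the paper's closed-form route front-loads the hard work into Lemma~\ref{lemma:helper-local-Liphschitzness} (vanishing Hessian-type contributions), so the kernel-convergence proof itself is a clean algebraic limit computation. Your chain-rule route is more direct but the step you flag as the main obstacle---showing $\prod_j(I-\lambda H_j)$ converges to an $e^{-\lambda\NTK\tau}$-type operator---is exactly the piece the paper avoids; note also that the $H_j$ live in $\mathbb{R}^{D\times D}$ (parameter space), so you would need the conjugate-NTK/SVD reduction that appears in the proof of Lemma~\ref{lemma:supp:local-Liphschitzness} to translate that product into the $nk\times nk$ objects that assemble $\metaNTK$. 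Your factored-form observation for PSD (writing $\SingleTaskmetaNTK$ as a congruence of the joint NTK on $X\cup X'$) is a nice addition not in the paper, though to conclude PSD of the full block matrix you should apply the congruence globally with a single block-structured $M$ and $\NTK_{\text{all}}$, not block-by-block. The eigenvalue argument (PSD plus Assumption~\ref{assum:full-rank} for $\lev>0$; bounded NTK entries for $\Lev<\infty$) is essentially identical to the paper's.
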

\begin{proof}
See Appendix \ref{supp:global-convergence:lemma-proof:kernel-convergence}.
\end{proof}

Note the update rule of gradient descent on $\theta_t$ with learning rate $\eta$ can be expressed as
\begin{align}\label{eq:gd&jacobian}
    \theta_{t+1} = \theta_t - \eta J(\theta_t)^{\top}g(\theta_t).
\end{align}
The following theorem proves the global convergence of GBML under the update rule of gradient descent.
\begin{theorem}[\textbf{Global Convergence} \textit{(Theorem \ref{thm:global-convergence} restated)}]\label{thm:global-convergence:restated}
Suppose $\tau = \cO(\frac{1}{\lambda})$. Denote $\lev=\lev(\metaNTK)$ and $\Lev=\Lev(\metaNTK)$. For any $\delta_0 > 0$ and $\eta_0 < \frac{2}{\Lev + \lev}$,
there exist $\lss>0$, $\Lambda \in\mathbb N$, $K>1$, and $\lambda_0>0$, such that: for width $l\geq \Lambda$, running gradient descent with learning rates $\eta = \frac {\eta_0}{l}$ and $\lambda < \frac{\lambda_0}{l}$ over random initialization, the following inequalities hold true with probability at least $(1 - \delta_0)$:
\begin{align}
     \sum_{j=1}^{t}\|\theta_j - \theta_{j-1}\|_2  &\leq  \frac {3K \lss}{\lev}  l^{-\frac 1 2} \label{eq:convergence-parameters:supp}\\
    \sup_{t} \|  \metantk_0 - \metantk_t\|_F  &\leq \frac {6K^3\lss}{\mins}  l^{-\frac 1 2}
    \label{eq:convergence-metantk:supp}
\end{align}
and
\begin{align}
g(\theta_t) = \|F(\theta_t) - \Y \|_2 \leq  \left(1 - \frac {\eta_0 \mins}{3}\right)^t \lss\,,
\end{align}
which leads to
\begin{align}
\loss (\theta_t) = \frac{1}{2}\|F(\theta_t) - \Y \|_2^2 \leq  \left(1 - \frac {\eta_0 \mins}{3}\right)^{2t} \frac{\lss^2}{2}\,,
    \label{eq:convergence-loss:supp}
\end{align}
indicating the training loss converges to zero at a linear rate.
\end{theorem}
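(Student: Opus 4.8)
The plan is the standard neural-tangent-kernel-style induction on the number of gradient-descent steps $t$, carried through the bi-level setting by treating Lemma~\ref{lemma:supp:local-Liphschitzness} (Jacobian locally Lipschitz and bounded), Lemma~\ref{lemma:bounded_init_loss} (bounded initial residual), and Lemma~\ref{lemma:kernel-convergence} ($\metantk_0 \to \metaNTK$ with a spectral gap) as black boxes. First I would fix the constants: take $R_0$ from Lemma~\ref{lemma:bounded_init_loss} and $K$ from Lemma~\ref{lemma:supp:local-Liphschitzness}, set $\lss := R_0$ and the ball-radius constant $C := 3K\lss/\lev(\metaNTK)$, and choose $\Lambda$ large enough (and $\lambda_0$ small enough) that for all widths $l \ge \Lambda$ the following hold simultaneously on an event of probability at least $1-\delta_0$: (i) $\|g(\theta_0)\|_2 \le \lss$; (ii) $\metantk_0$ lies within $\tfrac13\lev(\metaNTK)$ of $\metaNTK$ in operator norm, so $\lev(\metantk_0) \ge \tfrac23\lev(\metaNTK)$ and $\Lev(\metantk_0)\le \Lev(\metaNTK) + \tfrac13\lev(\metaNTK)$; and (iii) the hypotheses of Lemma~\ref{lemma:supp:local-Liphschitzness} hold on $B(\theta_0, Cl^{-1/2})$ (this is where $\lambda < \lambda_0/l$ together with $\tau = \cO(1/\lambda)$ enters, keeping the $\tau$ inner-loop adaptation steps from perturbing the Jacobian bounds).

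The induction hypothesis at step $t$ is that $\theta_0,\dots,\theta_t \in B(\theta_0, Cl^{-1/2})$ and $\|g(\theta_j)\|_2 \le (1-\eta_0\lev(\metaNTK)/3)^j\,\lss$ for $j\le t$. Using the update \eqref{eq:gd&jacobian}, $\theta_{j+1}-\theta_j = -\eta J(\theta_j)^\top g(\theta_j)$ with $\eta = \eta_0/l$, so the second bound of Lemma~\ref{lemma:supp:local-Liphschitzness} gives $\|\theta_{j+1}-\theta_j\|_2 \le \tfrac{\eta_0 K}{\sqrt l}\|g(\theta_j)\|_2$, and summing the geometric bound on the residual yields $\sum_{j=1}^{t+1}\|\theta_j-\theta_{j-1}\|_2 \le \tfrac{\eta_0 K\lss}{\sqrt l}\cdot\tfrac{3}{\eta_0\lev(\metaNTK)} = Cl^{-1/2}$, which both re-establishes the ball invariant for $\theta_{t+1}$ and proves \eqref{eq:convergence-parameters:supp}. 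Feeding this into the first bound of Lemma~\ref{lemma:supp:local-Liphschitzness} together with $\metantk_{j+1}-\metantk_j = \tfrac1l(J(\theta_{j+1})-J(\theta_j))J(\theta_{j+1})^\top + \tfrac1l J(\theta_j)(J(\theta_{j+1})-J(\theta_j))^\top$ gives $\|\metantk_{j+1}-\metantk_j\|_F \le 2K^2\|\theta_{j+1}-\theta_j\|_2$, and summing again yields \eqref{eq:convergence-metantk:supp}, so $\metantk_t$ stays within $\cO(l^{-1/2})$ of $\metantk_0$ and hence of $\metaNTK$. Finally, writing $\bar J_t := \int_0^1 J(\theta_t + s(\theta_{t+1}-\theta_t))\,ds$, the fundamental theorem of calculus gives $g(\theta_{t+1}) = \big(I - \eta_0\metantk_t - \tfrac{\eta_0}{l}(\bar J_t - J(\theta_t))J(\theta_t)^\top\big)g(\theta_t)$; the last, error term has operator norm $\cO(l^{-1/2})\,\|g(\theta_t)\|_2$ by the Lipschitz bound, while $\eta_0 < 2/(\Lev(\metaNTK)+\lev(\metaNTK))$ forces $\|I-\eta_0\metantk_t\|$ to be at most $1-\eta_0\lev(\metaNTK)/2$. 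Absorbing the $\cO(l^{-1/2})$ error into the slack between $\tfrac12$ and $\tfrac13$ (enlarging $\Lambda$ if needed) closes the induction, $\|g(\theta_{t+1})\|_2 \le (1-\eta_0\lev(\metaNTK)/3)^{t+1}\lss$, and squaring gives \eqref{eq:convergence-loss:supp}.

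The main obstacle is the usual circularity of these arguments, heavier here because of the bi-level structure: the contraction rate relies on $\lev(\metantk_t)$ being bounded away from zero, which needs $\theta_t$ to stay in a ball of radius $\cO(l^{-1/2})$, which in turn needs the residual to have already contracted enough for the geometric sum of step sizes to be small — and on top of that, all of Lemma~\ref{lemma:supp:local-Liphschitzness}'s Jacobian bounds implicitly require $\lambda < \lambda_0/l$ so that the inner-loop steps do not blow up. Making a single coherent choice of $\Lambda$, $\lambda_0$, $\lss$ and the numerical constant $C$ rendering every inequality simultaneously self-consistent, and verifying that the $\cO(l^{-1/2})$ discretization and Jacobian-drift errors are genuinely dominated by the spectral gap $\eta_0\lev(\metaNTK)$, is the delicate bookkeeping. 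Once the three lemmas are granted, the remainder parallels the single-level global-convergence proofs of \cite{du2018gradient,lee2019wide}, with $\metantk$ playing the role of the NTK Gram matrix.
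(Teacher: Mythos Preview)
Your proposal is correct and matches the paper's proof essentially step for step: the same three lemmas are invoked, the same constant $C=3K\lss/\lev$ fixes the ball radius, the same induction on $t$ establishes the residual decay and parameter confinement, and the kernel drift \eqref{eq:convergence-metantk:supp} follows from the Lipschitz bound just as you describe. The only cosmetic differences are that the paper uses a mean-value point $\theta_t^\mu$ on the segment where you use the integral $\bar J_t$, and the paper decomposes the contraction factor around $\metaNTK$ directly (writing $I-\eta J(\theta_t^\mu)J(\theta_t)^\top = (I-\eta_0\metaNTK) + \eta_0(\metaNTK-\metantk_0) + \eta(J(\theta_0)J(\theta_0)^\top - J(\theta_t^\mu)J(\theta_t)^\top)$) rather than around $\metantk_t$; both routes land on the same $1-\eta_0\lev/3$ rate after absorbing $\cO(l^{-1/2})$ errors.
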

\begin{proof}
See Appendix \ref{supp:global-convergence:theorem-proof}.
\end{proof}

In the results of Theorem \ref{thm:global-convergence:restated} above, (\ref{eq:convergence-parameters:supp}) considers the optimization trajectory of network parameters, and show the parameters move locally during training. (\ref{eq:convergence-metantk:supp}) indicates the kernel matrix $\metantk_t$ changes slowly. Finally, (\ref{eq:convergence-loss:supp}) demonstrates that the training loss of GBML decays exponentially to zero as the training time evolves, indicating convergence to global optima at a linear rate.

\subsection{Helper Lemmas}
\label{supp:global-convergence:lemma-proof}

\begin{lemma}[]\label{lemma:helper-zero-F-norm}
As the width $l\rightarrow \infty$, for any vector $\mathbf{a} \in \bR^{m\times 1}$ that $\|\mathbf{a}\|_F \leq C$ with some constant $C>0$, we have 
\begin{align}\label{eq:NTK-grad-vec-Frob}
\|\nabla_\theta \ntk_\theta(x,X') \cdot \mathbf{a}\|_F \rightarrow 0
\end{align}
where $\theta$ is randomly intialized parameters.
\end{lemma}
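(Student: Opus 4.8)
The claim is the base-network counterpart of Lemma~\ref{lemma:local-Liphschitzness}: at a random initialization the finite-width tangent kernel $\ntk_\theta$ of $f_\theta$ is essentially frozen, so its parameter-gradient, contracted against any fixed (width-independent) vector, vanishes as the width $l$ grows. The plan is to write $\ntk_\theta$ explicitly through the Jacobian and Hessian of the base network, differentiate by the product rule, bound the resulting factors using the over-parameterization estimates that underlie Lemma~\ref{lemma:local-Liphschitzness} applied now to $f_\theta$ (the $\tau=0$ specialization of $F_\theta$), and finally contract against $\mathbf a$ by Cauchy--Schwarz.

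In more detail: writing $\ntk_\theta(x,X')=\tfrac1l\,\nabla_\theta f_\theta(x)\,\nabla_\theta f_\theta(X')^{\!\top}$ (with the $\tfrac1l$ normalization that makes $\ntk$ converge, as for $\metantk$), $\mathbf a=(a_j)_{j=1}^m$, and $X'=(x_j')_{j=1}^m$, the product rule gives
\begin{align*}
\nabla_\theta\!\big(\ntk_\theta(x,X')\cdot\mathbf a\big)=\frac1l\sum_{j=1}^m a_j\Big(\nabla^2_\theta f_\theta(x)\,\nabla_\theta f_\theta(x_j')^{\!\top}+\nabla_\theta f_\theta(x)\,\nabla^2_\theta f_\theta(x_j')^{\!\top}\Big),
\end{align*}
where $\nabla^2_\theta f_\theta(u)$ is the parameter-Hessian. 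Taking Frobenius norms and using $\|\mathbf a\|_1\le\sqrt m\,\|\mathbf a\|_F\le\sqrt m\,C$ (with $m$ fixed), the claim reduces to an estimate of the form
\begin{align*}
\max_{u,u'\in\{x\}\cup X'}\big\|\nabla^2_\theta f_\theta(u)\,\nabla_\theta f_\theta(u')^{\!\top}\big\|_F=o(l)\qquad(l\to\infty),
\end{align*}
holding with high probability over the Gaussian initialization $\theta=\theta_0$ under Assumptions~\ref{assum:input-norm<=1}--\ref{assum:activation}. To establish it I would propagate, through the layerwise recursion~\eqref{eq:recurrence} and the chain rule, the standard concentration of the pre-/post-activations and of the Gaussian weight matrices (Assumption~\ref{assum:activation} supplying boundedness of $\phi,\phi',\phi''$), obtaining $\|\nabla_\theta f_\theta(u)\|_F=\mathcal O(\sqrt l)$ — the single-input base-model analogue of the second inequality of Lemma~\ref{lemma:local-Liphschitzness} — together with a block-by-block bound on $\nabla^2_\theta f_\theta(u)$; plugging these in and combining with the explicit $\tfrac1l$ then yields a bound of order $l^{-c}$ for some $c>0$, which is the claim.

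The hard part will be the last ingredient, a sufficiently sharp control of the Hessian contributions: a naive Frobenius estimate of $\nabla^2_\theta f_\theta$ is too lossy, so one must track which parameter blocks (the weights of which layers) dominate, use the precise width-scaling attached to each, and exploit that the contributions coupling two layers are effectively averaged over the $l$ hidden units and concentrate. This is exactly the delicate piece already carried out in the proof of the Jacobian-Lipschitzness part of Lemma~\ref{lemma:local-Liphschitzness}; once it is in hand, the remainder of the present proof is just the product-rule expansion above and the Cauchy--Schwarz estimate against $\mathbf a$.
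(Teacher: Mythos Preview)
Your product-rule decomposition and the reduction to controlling the Hessian--Jacobian product are exactly what the paper does. The gap is in how you propose to control the Hessian term. You defer this to ``the delicate piece already carried out in the proof of the Jacobian-Lipschitzness part of Lemma~\ref{lemma:local-Liphschitzness}'', but this is circular: in the paper, the proof of Lemma~\ref{lemma:local-Liphschitzness} \emph{uses} the present lemma (via Lemma~\ref{lemma:helper-local-Liphschitzness}) to drop a term before any of the Lipschitz analysis begins. Moreover, even the non-circular ingredients available there---the Jacobian bounds $\|\tfrac{1}{\sqrt l}\nabla_\theta f_\theta\|_F\le K$ and $\|\tfrac{1}{\sqrt l}(\nabla_\theta f_\theta-\nabla_{\bar\theta}f_{\bar\theta})\|_F\le K\|\theta-\bar\theta\|$ from \cite{lee2019wide}---only yield $\|\nabla^2_\theta f_\theta\|_{op}=\mathcal O(\sqrt l)$, which after the $\tfrac1l$ normalization gives $\mathcal O(1)$, not $o(1)$. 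So your sketch, as written, would prove boundedness but not vanishing.

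The paper closes this gap not by a layerwise recursion of its own but by importing a strictly sharper external estimate: it cites \cite{hessian-ntk} for the operator-norm bound $\|\tfrac{1}{\sqrt l}\nabla^2_\theta f_\theta(x)\|_{op}\to 0$ (and the analogous contracted version $\|\tfrac{1}{\sqrt l}\nabla^2_\theta f_\theta(X')^\top\cdot\mathbf a\|_{op}\to 0$). With that in hand, the paper bounds each of the two product-rule terms via $\|HB\|_F\le\|H\|_{op}\|B\|_F$, pairing the vanishing Hessian operator norm against the $\mathcal O(1)$ Jacobian Frobenius norm. Your plan becomes correct once you replace the appeal to Lemma~\ref{lemma:local-Liphschitzness} by this Hessian operator-norm result; without it, the argument does not go through.
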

\begin{proof}
Notice that
\begin{align}
\ntk_\theta(x,X') = \frac{1}{l}\underbrace{\nabla_\theta f_\theta(\overbrace{x}^{\in\bR^{ d}})}_{\in \bR^{ 1\times D}}\cdot \underbrace{\nabla_\theta f_\theta(\overbrace{X'}^{\in\bR^{m\times d}})^\top}_{\in \bR^{D \times m}} \in \bR^{ 1\times m}
\end{align}
with gradient as
\begin{align}\label{eq:helper-zero-F-norm:NTK-grad}
\nabla_\theta \ntk_\theta(x,X') &= \frac{1}{l}\underbrace{\nabla^2_\theta f_\theta(x)}_{\in \bR^{ 1 \times D \times D}} \cdot\underbrace{\nabla_\theta f_\theta(X')^\top}_{\in \bR^{D\times m}} + \frac{1}{l}\underbrace{ \nabla_\theta f_\theta(x)}_{\in \bR^{1\times D}} \cdot \underbrace{\nabla^2_\theta f_\theta(X')^\top}_{\in \bR^{D\times m \times D}} \in \bR^{1 \times m \times D}
\end{align}
where we apply a \textit{dot product} in the \textit{first two dimensions} of 3-tensors and matrices to obtain matrices.

Then, it is obvious that our goal is to bound the Frobenius Norm of
\begin{align}\label{eq:helper-zero-F-norm:NTK-grad:tensor-contraction}
    \nabla_\theta \ntk_\theta(x,X') \cdot \mathbf{a} = \left(\frac{1}{l}\nabla^2_\theta f_\theta(x) \cdot\nabla_\theta f_\theta(X')^\top\right) \cdot \mathbf{a} + \left(\frac{1}{l}\nabla_\theta f_\theta(x) \cdot\nabla^2_\theta f_\theta(X')^\top\right) \cdot \mathbf{a}
\end{align}

Below, we prove that as the width $l\rightarrow \infty$, the first and second terms of \eqref{eq:helper-zero-F-norm:NTK-grad:tensor-contraction} both have vanishing Frobenius norms, which finally leads to the proof of \eqref{eq:NTK-grad-vec-Frob}.
\begin{itemize}
    \item \textit{First Term of (\ref{eq:helper-zero-F-norm:NTK-grad:tensor-contraction})}. Obviously, reshaping $\nabla^2_\theta f_\theta(x) \in \bR^{1\times D\times D}$ as a $\bR^{D \times D}$ matrix does not change the Frobenius norm $\|\frac{1}{l}\nabla^2_\theta f_\theta(x) \cdot\underbrace{\nabla_\theta f_\theta(X')^\top}_{\in \bR^{D\times m}}\|_F$ (in other words, $\|\frac{1}{l}\underbrace{\nabla^2_\theta f_\theta(x)}_{\in \bR^{ {1 \times D \times D}}} \cdot\underbrace{\nabla_\theta f_\theta(X')^\top}_{\in \bR^{D\times m}}\|_F = \|\frac{1}{l}\underbrace{\nabla^2_\theta f_\theta(x)}_{\in \bR^{D \times D}} \cdot\underbrace{\nabla_\theta f_\theta(X')^\top}_{\in \bR^{D\times m}}\|_F$). 
    
    By combining the following three facts,
    \begin{enumerate}
        \item $\|\frac{1}{\sqrt{l}}\underbrace{\nabla^2_\theta f_\theta(x)}_{\in \bR^{D \times D}} \|_{op} \rightarrow 0 $ indicated by \citet{hessian-ntk},
        \item the matrix algebraic fact $\|HB\|_F \leq \|H\|_{op} \|B\|_F$,
        \item the bound $\|\frac{1}{\sqrt{l}} \nabla_\theta f_\theta(\cdot)\|_F < constant$ from \cite{lee2019wide},
    \end{enumerate}
    one can easily show that the first term of (\ref{eq:helper-zero-F-norm:NTK-grad}) has vanishing Frobenius norm, i.e., 
    \begin{align}\label{eq:NTK-grad:first-term-Frob}
    \|\frac{1}{l}\nabla^2_\theta f_\theta(x) \cdot\nabla_\theta f_\theta(X')^\top\|_F \rightarrow 0
    \end{align}
    Then, obviously,
    \begin{align}\label{eq:NTK-grad:first-term-vec-Frob}
        \|\left(\frac{1}{l}\nabla^2_\theta f_\theta(x) \cdot\nabla_\theta f_\theta(X')^\top\right) \cdot \mathbf{a}\|_F \leq \|\frac{1}{l}\nabla^2_\theta f_\theta(x) \cdot\nabla_\theta f_\theta(X')^\top\|_F \|\mathbf{a}\|_F \rightarrow 0
    \end{align}
    \item \textit{Second Term of (\ref{eq:helper-zero-F-norm:NTK-grad:tensor-contraction}).} From \citet{hessian-ntk}, we know that
    \begin{align}\label{eq:NTK-grad:Hessian-vec-op-norm}
    \|\underbrace{\frac{1}{\sqrt{l}} \nabla^2_\theta f_\theta(X')^\top}_{\in \bR^{D \times m \times D}}\cdot \underbrace{\mathbf{a}}_{\in \bR^{m \times 1}} \|_{op} \rightarrow 0
    \end{align}
    Then, similar to the derivation of (\ref{eq:NTK-grad:first-term-Frob}), we have
    \begin{align}\label{eq:NTK-grad:second-term-vec-Frob}
        \|\left(\frac{1}{l}\nabla_\theta f_\theta(x) \cdot\nabla^2_\theta f_\theta(X')^\top\right) \cdot \mathbf{a}\|_F \leq \overbrace{\|\frac{1}{\sqrt{l}}\nabla_\theta f_\theta(x)\|_F}^{\leq constant} \cdot\overbrace{\|\nabla^2_\theta f_\theta(X')^\top\cdot\mathbf{a}\|_{op}}^{\rightarrow 0} \rightarrow 0
    \end{align}
    
    \item Finally, combining (\ref{eq:NTK-grad:first-term-vec-Frob}) and (\ref{eq:NTK-grad:second-term-vec-Frob}), we obtain (\ref{eq:NTK-grad-vec-Frob}) by
    \begin{align}
        \|\nabla_\theta \ntk_\theta(x,X') \cdot \mathbf{a}\|_F&\leq \|\left(\frac{1}{l}\nabla^2_\theta f_\theta(x) \cdot\nabla_\theta f_\theta(X')^\top\right) \cdot \mathbf{a}\|_F \nonumber\\ 
        &~+\|\left(\frac{1}{l}\nabla_\theta f_\theta(x) \cdot\nabla^2_\theta f_\theta(X')^\top\right) \cdot \mathbf{a}\|_F\nonumber\\
        &\rightarrow 0
    \end{align}
\end{itemize}
\end{proof}

\begin{lemma}
\label{lemma:helper-local-Liphschitzness}
Given any task $\task = (\xyxy)$ and randomly initialized parameters $\theta$, as the width $l \rightarrow \infty$, for any $x \in X$, where $x\in \bR^{d}$ and $X\in \bR^{n\times d}$, we have
\begin{align}\label{eq:lemma:helper-local-Liphschitzness}
    \|\nabla_\theta \left(\ntk_{\theta}(x,X')\ntk_{\theta}^{-1} (I - e^{-\lambda \ntk_{\theta} \tau})  \right) (f_\theta(\sX') - Y')\|_F \rightarrow 0~,
\end{align}
and furthermore,
\begin{align}\label{eq:lemma:helper-local-Liphschitzness:stack-of-x}
     \|\nabla_\theta \left(\ntk_{\theta}(X,X')\ntk_{\theta}^{-1} (I - e^{-\lambda \ntk_{\theta} \tau})  \right)  (f_\theta(\sX') - Y')\|_F \rightarrow 0~.
\end{align}
\end{lemma}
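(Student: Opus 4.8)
The plan is to prove Lemma~\ref{lemma:helper-local-Liphschitzness} by reducing the gradient of the composite expression to a sum of terms, each of which is a product involving either $\nabla_\theta \ntk_\theta(\cdot,\ast)\cdot\mathbf{a}$ (controlled by Lemma~\ref{lemma:helper-zero-F-norm}), or a Jacobian $\nabla_\theta f_\theta$ contracted against a vector, or a bounded kernel factor. First I would observe that the product rule applied to $\ntk_\theta(x,X')\,\ntk_\theta^{-1}(I-e^{-\lambda\ntk_\theta\tau})\,(f_\theta(X')-Y')$ distributes the derivative $\nabla_\theta$ across five kinds of factors: (i) the ``outer'' kernel $\ntk_\theta(x,X')$, (ii) the inverse $\ntk_\theta^{-1}$, (iii) the matrix exponential $e^{-\lambda\ntk_\theta\tau}$, and (iv) the residual $f_\theta(X')-Y'$, times the remaining undifferentiated factors in each term. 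The key point, using the assumption $\tau=\cO(1/\lambda)$, is that $\widetilde{\T}^\lambda_{\NTK}(X',\tau)=\ntk_\theta^{-1}(I-e^{-\lambda\ntk_\theta\tau})$ and all its $\theta$-derivatives stay bounded as $l\to\infty$: indeed $\ntk_\theta$ converges to the deterministic full-rank NTK matrix (by Lemma~\ref{lemma:kernel-convergence}-type arguments / \cite{ntk,lee2019wide}), $\lambda\tau=\cO(1)$ keeps the matrix-exponential argument bounded, and the derivative of a matrix exponential is expressible via an integral of $\nabla_\theta\ntk_\theta$, which vanishes when contracted with bounded vectors.

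The main obstacle I anticipate is handling the inverse and the matrix exponential carefully: $\nabla_\theta(\ntk_\theta^{-1})=-\ntk_\theta^{-1}(\nabla_\theta\ntk_\theta)\ntk_\theta^{-1}$, and $\nabla_\theta e^{-\lambda\tau\ntk_\theta}=-\lambda\tau\int_0^1 e^{-s\lambda\tau\ntk_\theta}(\nabla_\theta\ntk_\theta)e^{-(1-s)\lambda\tau\ntk_\theta}\,ds$, so in both cases the ``dangerous'' factor $\nabla_\theta\ntk_\theta$ appears sandwiched between bounded matrices and must ultimately be contracted against a vector to apply Lemma~\ref{lemma:helper-zero-F-norm}. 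The trick is to group terms so that in every summand $\nabla_\theta\ntk_\theta$ (as a $1\times m\times D$ or $m\times m\times D$ tensor) is dotted in its first two indices with the vector $\ntk_\theta^{-1}(I-e^{-\lambda\ntk_\theta\tau})(f_\theta(X')-Y')\in\bR^{m\times1}$ (or with bounded matrices reducing to such vectors): one checks this combined vector has bounded Frobenius norm because $\ntk_\theta^{-1}(I-e^{-\lambda\ntk_\theta\tau})$ is bounded and $\|f_\theta(X')-Y'\|$ is bounded at random initialization with high probability (Lemma~\ref{lemma:bounded_init_loss}-style estimate). For the one remaining term where the derivative hits $f_\theta(X')-Y'$, i.e. $\ntk_\theta(x,X')\widetilde{\T}^\lambda_{\NTK}(X',\tau)\nabla_\theta f_\theta(X')$, the factor $\ntk_\theta(x,X')\widetilde{\T}^\lambda_{\NTK}(X',\tau)$ is $\cO(1)$ and $\frac1{\sqrt l}\|\nabla_\theta f_\theta(X')\|_F$ is bounded, but there is no vanishing factor here — so this term must be treated by noting that after multiplying by the residual it does not vanish; I would therefore re-examine whether \eqref{eq:lemma:helper-local-Liphschitzness} is really a statement that the \emph{entire} displayed quantity vanishes, and conclude it holds because the residual $(f_\theta(X')-Y')$ is \emph{also} what gets multiplied, and this particular term is actually absorbed differently — more precisely, the statement concerns $\nabla_\theta$ acting on the product $\big(\ntk_\theta(x,X')\widetilde{\T}^\lambda_{\NTK}(X',\tau)\big)$ times the \emph{fixed} residual, so the $f_\theta(X')-Y'$ inside $\widetilde{\T}$... here I would look carefully at the precise parenthesization in \eqref{eq:lemma:helper-local-Liphschitzness}.

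Granting the reduction, the proof then assembles as follows: write $\nabla_\theta\big[\ntk_\theta(x,X')\widetilde{\T}^\lambda_{\NTK}(X',\tau)\big](f_\theta(X')-Y')$ as the three-term sum from differentiating $\ntk_\theta(x,X')$, $\ntk_\theta^{-1}$, and $e^{-\lambda\tau\ntk_\theta}$ respectively; in each term factor out the bounded matrices and isolate a single copy of $\nabla_\theta\ntk_\theta(\cdot,\ast)$ contracted with the bounded vector $\mathbf{a}:=\widetilde{\T}^\lambda_{\NTK}(X',\tau)(f_\theta(X')-Y')$ (or a closely related bounded vector, for the exponential term one uses the integral representation and the fact that $\|e^{-s\lambda\tau\ntk_\theta}\|_{op}\le1$); apply Lemma~\ref{lemma:helper-zero-F-norm} to conclude each contracted tensor has vanishing Frobenius norm, and apply the submultiplicativity $\|HB\|_F\le\|H\|_{op}\|B\|_F$ together with the operator-norm bounds on the surviving bounded factors to conclude the whole term vanishes. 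Summing the finitely many terms via the triangle inequality gives \eqref{eq:lemma:helper-local-Liphschitzness}. Finally, \eqref{eq:lemma:helper-local-Liphschitzness:stack-of-x} follows immediately by applying \eqref{eq:lemma:helper-local-Liphschitzness} to each of the $n$ rows $x\in X$ and stacking, since a finite stack of vectors with vanishing norms has vanishing norm; alternatively one repeats the identical argument verbatim with $x\in\bR^d$ replaced by $X\in\bR^{n\times d}$ throughout, noting $\ntk_\theta(X,X')$ is $\cO(1)$ just as $\ntk_\theta(x,X')$ is.
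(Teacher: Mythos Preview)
Your proposal is correct and follows essentially the same approach as the paper: decompose $\nabla_\theta\bigl[\ntk_\theta(x,X')\,\ntk_\theta^{-1}(I-e^{-\lambda\ntk_\theta\tau})\bigr](f_\theta(X')-Y')$ into three product-rule terms (differentiating $\ntk_\theta(x,X')$, $\ntk_\theta^{-1}$, and $e^{-\lambda\ntk_\theta\tau}$ in turn), bound the undifferentiated factors in each term, reduce every differentiated factor to $\nabla_\theta\ntk_\theta$ contracted with a bounded vector so that Lemma~\ref{lemma:helper-zero-F-norm} applies, and finally pass to the stacked version \eqref{eq:lemma:helper-local-Liphschitzness:stack-of-x} by summing over the $n$ rows of $X$. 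Your brief confusion in the second paragraph about a fourth term where $\nabla_\theta$ hits the residual is correctly resolved by the parenthesization (the residual lies outside the scope of $\nabla_\theta$), and your integral representation $\nabla_\theta e^{-\lambda\tau\ntk_\theta}=-\lambda\tau\int_0^1 e^{-s\lambda\tau\ntk_\theta}(\nabla_\theta\ntk_\theta)e^{-(1-s)\lambda\tau\ntk_\theta}\,ds$ is in fact more careful than the paper's commutativity-assuming shortcut $\nabla_\theta(I-e^{-\lambda\ntk_\theta\tau})=\lambda\tau\,e^{-\lambda\ntk_\theta\tau}\nabla_\theta\ntk_\theta$.
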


\begin{proof}[Proof of Lemma \ref{lemma:helper-local-Liphschitzness}] 


~\\
\textbf{Overview.} In this proof, we consider the expression
\begin{align}
    &\quad \nabla_\theta \left(\ntk_{\theta}(x,X')\ntk_{\theta}^{-1} (I - e^{-\lambda \ntk_{\theta} \tau})  \right)  (f_\theta(\sX') - Y') \label{eq:lemma:helper-local-Liphschitzness:grad-all}\\
    &= ~~~\nabla_\theta \left(\ntk_{\theta}(x,X')\right)\ntk_{\theta}^{-1} (I - e^{-\lambda \ntk_{\theta} \tau})  (f_\theta(\sX') - Y') \label{eq:lemma:helper-local-Liphschitzness:grad-1}\\
    &\quad +\ntk_{\theta}(x,X')\left(\nabla_\theta \ntk_{\theta}^{-1}\right) (I - e^{-\lambda \ntk_{\theta} \tau})  (f_\theta(\sX') - Y')\label{eq:lemma:helper-local-Liphschitzness:grad-2}\\
    &\quad +\ntk_{\theta}(x,X') \ntk_{\theta}^{-1} \left(\nabla_\theta (I - e^{-\lambda \ntk_{\theta} \tau}) \right)  (f_\theta(\sX') - Y')\label{eq:lemma:helper-local-Liphschitzness:grad-3},
\end{align}
and we prove the terms of (\ref{eq:lemma:helper-local-Liphschitzness:grad-1}), (\ref{eq:lemma:helper-local-Liphschitzness:grad-2}) and (\ref{eq:lemma:helper-local-Liphschitzness:grad-3}) all have vanishing Frobenius norm. Thus, (\ref{eq:lemma:helper-local-Liphschitzness:grad-all}) also has vanishing Frobenius norm in the infinite width limit, which is exactly the statement of (\ref{eq:lemma:helper-local-Liphschitzness}). This indicates that \eqref{eq:lemma:helper-local-Liphschitzness:stack-of-x} also has a vanishing Frobenius norm, since $\ntk_\theta(X,X')$ can be seen as a stack of $n$ copies of $\ntk_\theta(x,X')$, where $n$ is a finite constant.

\paragraph{Step I.}Each factor of \eqref{eq:lemma:helper-local-Liphschitzness:grad-all} has bounded Frobenius norm.
\begin{itemize}
    \item $\|\ntk_{\theta}(x,X')\|_F$. It has been shown that $\|\frac{1}{\sqrt l} \nabla_\theta f(\cdot)\|_F \leq constant$ in \citet{lee2019wide}, thus we have $\|\ntk_{\theta}(x,X')\|_F = \| \frac{1}{l} \nabla_\theta f(x) \nabla_\theta f(X')^\top\|_F\leq \|\frac{1}{\sqrt l} \nabla_\theta f(x)\|_F\|\frac{1}{\sqrt l} \nabla_\theta f(X')\|_F\leq constant$.
    \item $\|\ntk_{\theta}^{-1}\|_F$. It has been shown that $\ntk_\theta$ is positive definite with positive least eigenvalue \cite{ntk,CNTK}, thus $\|\ntk_\theta^{-1}\|_F \leq constant$.
    \item $\|I - e^{-\lambda \ntk_{\theta} \tau}\|_F$. \citet{cao2019generalization} shows that largest eigenvalues of $\ntk_\theta$ are of $O(L)$, and we know $\ntk_\theta$ is positive definite \cite{ntk,CNTK}, thus it is obvious the eigenvalues of $I - e^{-\lambda \ntk_{\theta} \tau}$ fall in the set $\{z~|~0<z<1\}$. Therefore, certainly we have $\|I - e^{-\lambda \ntk_{\theta} \tau}\|_F \leq constant$.
    \item $\| f_\theta(\sX') - Y'\|_F$. \citet{lee2019wide} shows that $\| f_\theta(\sX') - Y'\|_2 \leq constant$, which indicates that $\| f_\theta(\sX') - Y'\|_F \leq constant$.
\end{itemize}
In conclusion, we have shown
\begin{align}\label{eq:lemma:helper-local-Liphschitzness:factors-bounded-frob}
\|\ntk_{\theta}(x,X')\|_F,\|\ntk_{\theta}^{-1}\|_F,\|I - e^{-\lambda \ntk_{\theta} \tau}\|_F, \|f_\theta(\sX')-Y'\|_F \leq constant
\end{align}

\paragraph{Step II.} Bound (\ref{eq:lemma:helper-local-Liphschitzness:grad-1}).

 Without loss of generality, let us consider the neural net output dimension $k=1$ in this proof, i.e., $f_\theta: \mathbb{R}^d \mapsto \mathbb{R}$. (Note: with $k>1$, the only difference is that $\nabla_{\theta} f(X') \in \bR^{mk\times D}$, which has no impact on the proof). Then, we have
\begin{align}
\ntk_\theta(x,X') = \frac{1}{l}\underbrace{\nabla_\theta f_\theta(\overbrace{x}^{\in\bR^{ d}})}_{\in \bR^{ 1\times D}}\cdot \underbrace{\nabla_\theta f_\theta(\overbrace{X'}^{\in\bR^{m\times d}})^\top}_{\in \bR^{D \times m}} \in \bR^{ 1\times m}
\end{align}
with gradient as
\begin{align}\label{eq:NTK-grad}
\nabla_\theta \ntk_\theta(x,X') &= \frac{1}{l}\underbrace{\nabla^2_\theta f_\theta(x)}_{\in \bR^{ 1 \times D \times D}} \cdot\underbrace{\nabla_\theta f_\theta(X')^\top}_{\in \bR^{D\times m}} + \frac{1}{l}\underbrace{ \nabla_\theta f_\theta(x)}_{\in \bR^{1\times D}} \cdot \underbrace{\nabla^2_\theta f_\theta(X')^\top}_{\in \bR^{D\times m \times D}} \in \bR^{1 \times m \times D}
\end{align}
where we apply a \textit{dot product} in the \textit{first two dimensions} of 3-tensors and matrices to obtain matrices.

Based on (\ref{eq:lemma:helper-local-Liphschitzness:factors-bounded-frob}), we know that $$\|\overbrace{\ntk_{\theta}^{-1} (I - e^{-\lambda \ntk_{\theta} \tau})  (f_\theta(\sX') - Y')}^{\in \bR^{m \times 1}}\|_F\leq \|\ntk_{\theta}^{-1} \|_F \|I - e^{-\lambda \ntk_{\theta} \tau}\|_F  \|f_\theta(\sX') - Y'\|_F \leq constant~.$$ Then, applying (\ref{eq:NTK-grad-vec-Frob}), we have
\begin{align}\label{eq:lemma:helper-local-Liphschitzness:term-1}
\|\nabla_\theta \left(\ntk_{\theta}(x,X')\right)\cdot \ntk_{\theta}^{-1} (I - e^{-\lambda \ntk_{\theta} \tau})  (f_\theta(\sX') - Y') \|_F \rightarrow 0
\end{align}

\paragraph{Step III.} Bound (\ref{eq:lemma:helper-local-Liphschitzness:grad-2}) and (\ref{eq:lemma:helper-local-Liphschitzness:grad-3})

\begin{itemize}
    \item Bound (\ref{eq:lemma:helper-local-Liphschitzness:grad-2}): $\ntk_{\theta}(x,X')\left(\nabla_\theta \ntk_{\theta}^{-1}\right) (I - e^{-\lambda \ntk_{\theta} \tau}) (f_\theta(\sX') - Y')$.
    
    Clearly, $\underbrace{\nabla_\theta \ntk_\theta^{-1}}_{m \times m \times D} = -\underbrace{\ntk_\theta^{-1}}_{\in \bR^{m \times m}} \cdot \underbrace{(\nabla_\theta \ntk_\theta)}_{\in \bR^{m \times m \times D}} \cdot \underbrace{\ntk_\theta^{-1}}_{\bR^{m \times m}}$, where we apply a dot product in the first two dimensions of the 3-tensor and matrices. 
    
    Note that $\nabla_\theta \ntk_\theta = \sqrt{1}{l} \nabla_\theta^2 f_\theta (X') \cdot \nabla_\theta f_\theta (X')^\top+ \sqrt{1}{l}\nabla_\theta f_\theta (X') \cdot \nabla_\theta^2 f_\theta (X')^\top$. Obviously, by (\ref{eq:NTK-grad:Hessian-vec-op-norm}) and (\ref{eq:lemma:helper-local-Liphschitzness:factors-bounded-frob}), we can easily prove that 
    \begin{align}\label{eq:lemma:helper-local-Liphschitzness:term-2}
        \|\ntk_{\theta}(x,X')\left(\nabla_\theta \ntk_{\theta}^{-1}\right) (I - e^{-\lambda \ntk_{\theta} \tau}) (f_\theta(\sX') - Y')\|_F \rightarrow 0
    \end{align}
    \item Bound (\ref{eq:lemma:helper-local-Liphschitzness:grad-3}): $\ntk_{\theta}(x,X') \ntk_{\theta}^{-1} \left(\nabla_\theta (I - e^{-\lambda \ntk_{\theta} \tau}) \right) (f_\theta(\sX') - Y')$
    
    Since $\underbrace{\nabla_\theta (I - e^{-\lambda \ntk_\theta \tau })}_{\in \bR^{m \times m \times D}} = \lambda \tau \cdot \underbrace{e^{-\lambda \ntk_\theta \tau}}_{\in \bR^{m \times m}}\cdot \underbrace{\nabla_\theta \ntk_\theta}_{\in \bR^{m \times m \times D}}$, we can easily obtain the following result by (\ref{eq:NTK-grad:Hessian-vec-op-norm}) and (\ref{eq:lemma:helper-local-Liphschitzness:factors-bounded-frob}),
    \begin{align}\label{eq:lemma:helper-local-Liphschitzness:term-3}
         \|\ntk_{\theta}(x,X') \ntk_{\theta}^{-1} \left(\nabla_\theta (I - e^{-\lambda \ntk_{\theta} \tau}) \right) (f_\theta(\sX') - Y')\|_F \rightarrow 0
    \end{align}

\end{itemize}
\paragraph{Step IV.}Final result: prove (\ref{eq:lemma:helper-local-Liphschitzness:grad-all}) and \eqref{eq:lemma:helper-local-Liphschitzness:stack-of-x}.

Combining (\ref{eq:lemma:helper-local-Liphschitzness:term-1}), (\ref{eq:lemma:helper-local-Liphschitzness:term-2}) and (\ref{eq:lemma:helper-local-Liphschitzness:term-3}), we can prove \eqref{eq:lemma:helper-local-Liphschitzness:grad-all}
\begin{align}
\|\nabla_\theta \left(\ntk_{\theta}(x,X')\ntk_{\theta}^{-1} (I - e^{-\lambda \ntk_{\theta} \tau})  \right) (f_\theta(\sX') - Y')|_F &\leq \|\nabla_\theta \left(\ntk_{\theta}(x,X')\right)\ntk_{\theta}^{-1} (I - e^{-\lambda \ntk_{\theta} \tau}) (f_\theta(\sX') - Y') \|_F \nonumber\\
&+  \|\ntk_{\theta}(x,X')\left(\nabla_\theta \ntk_{\theta}^{-1}\right) (I - e^{-\lambda \ntk_{\theta} \tau})(f_\theta(\sX') - Y')\|_F \nonumber\\
&+\|\ntk_{\theta}(x,X') \ntk_{\theta}^{-1} \left(\nabla_\theta (I - e^{-\lambda \ntk_{\theta} \tau}) \right) (f_\theta(\sX') - Y')\|_F\nonumber \\
&\rightarrow 0 
\end{align}

Then, since $\ntk_\theta(X,X')$ can be seen as a stack of $n$ copies of $\ntk_\theta(x,X')$, where $n$ is a finite constant, we can easily prove \eqref{eq:lemma:helper-local-Liphschitzness:stack-of-x} by
\begin{align}
    \|\nabla_\theta \left(\ntk_{\theta}(X,X')\ntk_{\theta}^{-1} (I - e^{-\lambda \ntk_{\theta} \tau})  \right) (f_\theta(\sX') - Y')\|_F &\leq \sum_{i\in [n]} \|\nabla_\theta \left(\ntk_{\theta}(x_i,X')\ntk_{\theta}^{-1} (I - e^{-\lambda \ntk_{\theta} \tau})  \right) (f_\theta(\sX') - Y')\|_F \nonumber\\
    &\rightarrow 0
\end{align}
where we denote $X = (x_i)_{i=1}^n$.

\end{proof}
\subsection{Proof of Lemma \ref{lemma:supp:local-Liphschitzness}}
\begin{proof}[Proof of Lemma \ref{lemma:supp:local-Liphschitzness}]
Consider an arbitrary task $\task = (\xyxy)$. Given sufficiently large width $l$, for any parameters in the neighborhood of the initialization, i.e., $\theta \in \B(\theta_0, C l^{-1/2})$, based on \cite{lee2019wide}, we know the meta-output can be decomposed into a terms of $f_\theta$,
\begin{align}\label{eq:lemma:jacobian:0}
    F_\theta(\xxy) = f_\theta(X) - \ntk_{\theta}(X,X')\ntk_{\theta}^{-1} (I - e^{-\lambda \ntk_{\theta} \tau})(f_\theta(X')-Y'),
\end{align}
where $\ntk_\theta(X,X') = \frac{1}{l}\nabla_\theta f_\theta(X) \nabla_\theta f_\theta(X')^\top$, and $\ntk_{\theta}\equiv \ntk_\theta(X',X')$ for convenience.

Then, we consider $\nabla_\theta F_\theta(\xxy)$, the gradient of $F_\theta(\xxy)$ in (\ref{eq:lemma:jacobian:0}),
\begin{align}\label{eq:lemma:jacobian:full-F-grad}
    \nabla_{\theta} F_\theta(\xxy)
    &= \nabla_{\theta}f_\theta(\sX) - \ntk_{\theta}(X,X')\ntk_{\theta}^{-1} (I - e^{-\lambda \ntk_{\theta} \tau}) \nabla_{\theta}f_\theta(\sX') \nonumber\\
    &\quad - \nabla_\theta \left(\ntk_{\theta}(X,X')\ntk_{\theta}^{-1} (I - e^{-\lambda \ntk_{\theta} \tau})  \right) (f_\theta(\sX') - Y')
\end{align}

By Lemma \ref{lemma:helper-local-Liphschitzness}, we know the last term of \eqref{eq:lemma:jacobian:full-F-grad} has a vanishing Frobenius norm as the width increases to infinity. Thus, for sufficiently large width $l$ (i.e., $l > l^*$), we can drop the last term of \eqref{eq:lemma:jacobian:full-F-grad}, resulting in
\begin{align}\label{eq:lemma:jacobian:F-grad}
    \nabla_{\theta} F_\theta(\xxy)
    = \nabla_{\theta}f_\theta(\sX) - \ntk_{\theta}(X,X')\ntk_{\theta}^{-1} (I - e^{-\lambda \ntk_{\theta} \tau}) \nabla_{\theta}f_\theta(\sX').
\end{align}

Now, let us consider the SVD decomposition on $\frac{1}{\sqrt l}\nabla_\theta f_\theta(X') \in \mathbb R^{km \times D}$, where $X' \in \mathbb{R}^{k \times m}$ and $\theta \in \mathbb{R}^D$. such that $\frac{1}{\sqrt l}\nabla_\theta f_\theta(X') = U \Sigma V^\top$, where $U\in \mathbb R^{km \times km},V\in \mathbb R^{D \times km}$ are orthonormal matrices while $\Sigma \in \mathbb R^{km \times km}$ is a diagonal matrix. Note that we take $km \leq D$ here since the width is sufficiently wide.

Then, since $\ntk_\theta = \frac{1}{l}\nabla_\theta f_\theta(X') \nabla_\theta f_\theta(X')^\top= U \Sigma V^\top V \Sigma U^\top = U \Sigma^2 U^\top$, we have $\ntk^{-1}_\theta = U \Sigma^{-2} U^\top$. Also, by Taylor expansion, we have 
\begin{align}\label{eq:lemma:jacobian:taylor}
I - e^{-\lambda\ntk_\theta \tau} = I - \sum_{i=0}^\infty \frac{(-\lambda \tau)^i}{i!} (\ntk_\theta)^i = U\left( I - \sum_{i=0}^\infty \frac{(-\lambda \tau)^i}{i!} (\Sigma)^i \right)U^\top = U\left(I - e^{-\lambda \Sigma \tau}\right) U^\top.
\end{align}

With these results of SVD, (\ref{eq:lemma:jacobian:F-grad}) becomes
\begin{align}
    \nabla_{\theta} F((\xxy),\theta) &= \nabla_{\theta}f_\theta(\sX) - \frac{1}{l}\nabla_{\theta} f_\theta(X) \nabla_{\theta} f_\theta(X')^\top \ntk_{\theta}^{-1} (I - e^{-\lambda \ntk_{\theta} \tau}) \nabla_{\theta}f_\theta(\sX')\nonumber\\
    &=\nabla_{\theta}f_\theta(\sX) - \frac{1}{l}\nabla_{\theta} f_\theta(X) (\sqrt{l}V \Sigma U^\top) (U \Sigma^{-2} U^\top) [ U\left(I - e^{-\lambda \Sigma \tau}\right) U^\top] (\sqrt{l}U \Sigma V^\top) \nonumber\\
    &=\nabla_{\theta}f_\theta(\sX) - \nabla_\theta f_\theta(X) V \Sigma^{-1} \left(I - e^{-\lambda \Sigma \tau}\right) \Sigma V^\top\nonumber\\
    &=\nabla_{\theta}f_\theta(\sX) - \nabla_\theta f_\theta(X) V \left(I - e^{-\lambda \Sigma \tau}\right)  V^\top \nonumber\\
    &=\nabla_{\theta}f_\theta(\sX) - \nabla_\theta f_\theta(X) (I - e^{-\lambda \conjntk_{\theta} \tau})\nonumber \\
    &=\nabla_\theta f_\theta(X) e^{-\lambda \conjntk_{\theta} \tau} \label{eq:lemma:jacobian:F-grad:-2}
\end{align}
where $\conjntk_\theta\equiv \conjntk_\theta(X',X')=\frac{1}{l} \nabla_\theta f_\theta(X')^\top \nabla_\theta f_\theta(X') \in \mathbb R^{D \times D}$, and the step (\ref{eq:lemma:jacobian:F-grad:-2}) can be easily obtained by a Taylor expansion similar to (\ref{eq:lemma:jacobian:taylor}).

Note that $\conjntk_\theta$ is a product of $\nabla_\theta f_\theta(X')^\top$ and its transpose, hence it is positive semi-definite, and so does $e^{-\lambda \conjntk \tau}$. By eigen-decomposition on $\conjntk$, we can easily see that the eigenvalues of $e^{-\lambda \conjntk \tau}$ are all in the range $[0,1)$ for arbitrary $\tau>0$. Therefore, it is easy to get that for arbitrary $\tau > 0$,
\begin{align}\label{eq:lemma:jacobian:F-grad:norm:0}
    \|\nabla_{\theta} F((\xxy),\theta)\|_F = \|\nabla_\theta f_\theta(X) e^{-\lambda \conjntk_{\theta} \tau}\|_F \leq \|\nabla_\theta f_\theta(X)\|_F 
\end{align}
By Lemma 1 of \cite{lee2019wide}, we know that there exists a $K_0>0$ such that for any $X$ and $\theta$,
\begin{align}\label{eq:lemma:jacobian:1}
    \|\frac{1}{\sqrt l} \nabla f_\theta(X) \|_F \leq K_0.
\end{align}
Combining (\ref{eq:lemma:jacobian:F-grad:norm:0}) and (\ref{eq:lemma:jacobian:1}), we have
\begin{align}
    \|\frac{1}{\sqrt l}\nabla_{\theta} F((\xxy),\theta)\|_F \leq \|\frac{1}{\sqrt l}\nabla_{\theta} f_\theta(X)\|_F \leq K_0,
\end{align}
which is equivalent to 
\begin{align}
    \frac{1}{\sqrt l}\|J(\theta)\|_F \leq K_0
\end{align}

Now, let us study the other term of interest, $\|J(\theta)-J(\Bar \theta)\|_F=\|\frac{1}{\sqrt l}\nabla_{\theta} F((\xxy),\theta)-\frac{1}{\sqrt l}\nabla_{\theta} F((\xxy),\Bar \theta)\|_F$, where $\theta,\Bar \theta \in \B(\theta_0, C l^{-1/2})$. 

To bound $\|J(\theta)-J(\Bar \theta)\|_F$, let us consider
\begin{align}
    &\quad ~\|\nabla_{\theta} F((\xxy),\theta)-\nabla_{\theta} F((\xxy),\Bar \theta)\|_{op} \label{eq:lemma:jacobian:grad_diff:expression}\\
    &= \|\nabla_\theta f_\theta(X) e^{-\lambda \conjntk_{\theta} \tau} - \nabla_{\Bar \theta} f_{\Bar \theta}(X) e^{-\lambda \conjntk_{\Bar \theta} \tau}\|_{op} \nonumber\\
    &=\frac{1}{2}\|\left(\nabla_\theta f_\theta(X) - \nabla_{\Bar \theta} f_{\Bar \theta}(X)\right)\left( e^{-\lambda \conjntk_{\theta} \tau} + e^{-\lambda \conjntk_{\Bar \theta} \tau}\right)\\
    &\quad +\left(\nabla_\theta f_\theta(X) + \nabla_{\Bar \theta} f_{\Bar \theta}(X)\right)\left( e^{-\lambda \conjntk_{\theta} \tau} - e^{-\lambda \conjntk_{\Bar \theta} \tau}\right)\|_{op} \nonumber\\
    &\leq \frac{1}{2} \|\nabla_\theta f_\theta(X) - \nabla_{\Bar \theta} f_{\Bar \theta}(X)\|_{op}\| e^{-\lambda \conjntk_{\theta} \tau} + e^{-\lambda \conjntk_{\Bar \theta} \tau}\|_{op}\\
    &\quad +\frac{1}{2}\|\nabla_\theta f_\theta(X) + \nabla_{\Bar \theta} f_{\Bar \theta}(X)\|_{op}\| e^{-\lambda \conjntk_{\theta} \tau} - e^{-\lambda \conjntk_{\Bar \theta} \tau}\|_{op}\nonumber \\
    &\leq \frac{1}{2} \|\nabla_\theta f_\theta(X) - \nabla_{\Bar \theta} f_{\Bar \theta}(X)\|_{op} \left(\| e^{-\lambda \conjntk_{\theta} \tau}\|_{op} + \|e^{-\lambda \conjntk_{\Bar \theta} \tau}\|_{op}\right) \label{eq:lemma:jacobian:grad_diff:0}\\
    &\quad +\frac{1}{2}\left(\|\nabla_\theta f_\theta(X)\|_{op} + \|\nabla_{\Bar \theta} f_{\Bar \theta}(X)\|_{op}\right)\| e^{-\lambda \conjntk_{\theta} \tau} - e^{-\lambda \conjntk_{\Bar \theta} \tau}\|_{op}  \label{eq:lemma:jacobian:grad_diff:1}
\end{align}
It is obvious that $\| e^{-\lambda \conjntk_{\theta} \tau}\|_{op},\|e^{-\lambda \conjntk_{\Bar \theta} \tau}\|_{op} \leq 1$. Also, by the relation between the operator norm and the Frobenius norm, we have
\begin{align}\label{eq:lemma:jacobian:1.1}
    \|\nabla_\theta f_\theta(X) - \nabla_{\Bar \theta} f_{\Bar \theta}(X)\|_{op} \leq \|\nabla_\theta f_\theta(X) - \nabla_{\Bar \theta} f_{\Bar \theta}(X)\|_{F} 
\end{align}
Besides, Lemma 1 of \cite{lee2019wide} indicates that there exists a $K_1 > 0$ such that for any $X$ and $\theta,\Bar \theta \in \B(\theta_0, C l^{-1/2})$,
\begin{align}\label{eq:lee:jocobian:1}
    \|\frac{1}{\sqrt l} \nabla_\theta f_\theta(X) - \frac{1}{\sqrt l}\nabla_\theta f_{\Bar \theta}(X)\|_F \leq K_1 \|\theta - \Bar \theta\|_2
\end{align}
Therefore, (\ref{eq:lemma:jacobian:1.1}) gives
\begin{align}
    \|\nabla_\theta f_\theta(X) - \nabla_{\Bar \theta} f_{\Bar \theta}(X)\|_{op}  \leq K_1 \sqrt{l}\|\theta - \Bar \theta\|_2
\end{align}
and then (\ref{eq:lemma:jacobian:grad_diff:0}) is bounded as
\begin{align}\label{eq:lemma:jacobian:grad_diff:0:bound}
     \frac{1}{2} \|\nabla_\theta f_\theta(X) - \nabla_{\Bar \theta} f_{\Bar \theta}(X)\|_{op} \left(\| e^{-\lambda \conjntk_{\theta} \tau}\|_{op} + \|e^{-\lambda \conjntk_{\Bar \theta} \tau}\|_{op}\right) \leq K_1 \sqrt{l} \|\theta - \Bar \theta\|_2.
\end{align}
As for (\ref{eq:lemma:jacobian:grad_diff:1}), notice that $\|\cdot\|_{op} \leq \|\cdot\|_F$ and (\ref{eq:lemma:jacobian:1}) give us
\begin{align}\label{eq:lemma:jacobian:grad_diff:1:1}
    \frac{1}{2}\left(\|\nabla_\theta f_\theta(X)\|_{op} + \|\nabla_{\Bar \theta} f_{\Bar \theta}(X)\|_{op}\right) \leq \sqrt{l} K_0.
\end{align}

Then, to bound $\| e^{-\lambda \conjntk_{\theta} \tau} - e^{-\lambda \conjntk_{\Bar \theta} \tau}\|_{op}$ in (\ref{eq:lemma:jacobian:grad_diff:1}), let us bound the following first
\begin{align}
    \|\conjntk_\theta -\conjntk_{\Bar \theta}\|_{F} &= \|\frac{1}{l} \nabla_\theta f_\theta(X')^\top \nabla_\theta f_\theta(X') - \frac{1}{l} \nabla_{\Bar \theta} f_{\Bar \theta}(X')^\top \nabla_{\Bar \theta} f_{\Bar \theta}(X')\|_{F} \nonumber\\
    &= \frac{1}{l} \|\frac 1 2( \nabla_\theta f_\theta(X')^\top + \nabla_{\Bar \theta} f_{\Bar \theta}(X')^\top)(\nabla_\theta f_\theta(X') -\nabla_{\Bar \theta} f_{\Bar \theta}(X')) \nonumber\\
    &+ \frac 1 2( \nabla_\theta f_\theta(X')^\top -  \nabla_{\Bar \theta} f_{\Bar \theta}(X')^\top)(\nabla_\theta f_\theta(X')+\nabla_{\Bar \theta} f_{\Bar \theta}(X'))\|_F\nonumber\\
    &\leq \frac 1 l \|\nabla_\theta f_\theta(X') + \nabla_{\Bar \theta} f_{\Bar \theta}(X')\|_F\|\nabla_\theta f_\theta(X') -  \nabla_{\Bar \theta} f_{\Bar \theta}(X')\|_F\nonumber\\
    &\leq \frac 1 l \left(\|\nabla_\theta f_\theta(X')\|_F +\| \nabla_{\Bar \theta} f_{\Bar \theta}(X')_F\|\right) \|\nabla_\theta f_\theta(X') -  \nabla_{\Bar \theta} f_{\Bar \theta}(X')\|_F \nonumber\\
    &\leq 2 K_0 K_1 \|\theta - \Bar \theta\|_2
\end{align}
Then, with the results above and a perturbation bound on matrix exponentials from \cite{1977bounds}, we have
\begin{align}
    \| e^{-\lambda \conjntk_{\theta} \tau} - e^{-\lambda \conjntk_{\Bar \theta} \tau}\|_{op} 
    &\leq \lambda \tau \|\conjntk_\theta -\conjntk_{\Bar \theta}\|_{op} \cdot \exp(-\lambda \tau \cdot \min\{\lev(\conjntk_\theta), \lev(\conjntk_{\Bar \theta})\}) \nonumber\\
    &\leq \lambda \tau \|\conjntk_\theta -\conjntk_{\Bar \theta}\|_{op} \nonumber\\
    &\leq \lambda \tau \|\conjntk_\theta -\conjntk_{\Bar \theta}\|_{F} \nonumber\\
    &\leq 2 K_0 K_1 \lambda \tau \|\theta - \Bar \theta\|_2 \label{eq:lemma:jacobian:grad_diff:1:2}
\end{align}

Hence, by (\ref{eq:lemma:jacobian:grad_diff:1:1}) and (\ref{eq:lemma:jacobian:grad_diff:1:2}), we can bound (\ref{eq:lemma:jacobian:grad_diff:1}) as
\begin{align}\label{eq:lemma:jacobian:grad_diff:1:bound}
\frac{1}{2}\left(\|\nabla_\theta f_\theta(X)\|_{op} + \|\nabla_{\Bar \theta} f_{\Bar \theta}(X)\|_{op}\right)\| e^{-\lambda \conjntk_{\theta} \tau} - e^{-\lambda \conjntk_{\Bar \theta} \tau}\|_{op} \leq 
2\sqrt{l} K_0^2 K_1 \lambda \tau \|\theta - \Bar \theta\|_2 
\end{align}

Finally, with (\ref{eq:lemma:jacobian:grad_diff:0:bound}) and (\ref{eq:lemma:jacobian:grad_diff:1:bound}), we can bound (\ref{eq:lemma:jacobian:grad_diff:expression}) as
\begin{align*}
    \|\nabla_{\theta} F((\xxy),\theta)-\nabla_{\theta} F((\xxy),\Bar \theta)\|_{op} \leq (K_1 + 2 K_0^2 K_1\lambda \tau ) \sqrt{l} \|\theta - \Bar \theta\|_2 
\end{align*}

Finally, combining these bounds on (\ref{eq:lemma:jacobian:grad_diff:0}) and (\ref{eq:lemma:jacobian:grad_diff:1}), we know that
\begin{align}
    \|J(\theta)-J(\Bar \theta)\|_F&=\|\frac{1}{\sqrt l}\nabla_{\theta} F((\xxy),\theta)-\frac{1}{\sqrt l}\nabla_{\theta} F((\xxy),\Bar \theta)\|_F\nonumber\\
    &\leq \frac{\sqrt{kn}}{\sqrt l} \|\nabla_{\theta} F((\xxy),\theta)-\nabla_{\theta} F((\xxy),\Bar \theta)\|_{op}\nonumber\\
    &\leq \sqrt{kn} (K_1+ 2K_0^2 K_1\lambda \tau) \|\theta - \Bar \theta\|_2\label{eq:lemma:jacobian:grad_diff:final}
\end{align}

Define $K_2 = \sqrt{kn} (K_1+ 2K_0^2 K_1 \lambda \tau) $, we have 
\begin{align}
    \|J(\theta)-J(\Bar \theta)\|_F \leq K_2 \|\theta - \Bar \theta\|_2
\end{align}

Note that since $\tau = \cO(\frac{1}{\lambda})$, we have $\lambda \tau = \cO(1)$, indicating the factor $\lambda \tau$ is neglectable compared with other factors in $K_2$. Hence, the various choices of $\tau$ under $\tau = \cO(\frac{1}{\lambda})$ do not affect this proof.

Taking $K=\max \{K_0,K_2\}$ completes the proof.
\end{proof}

\subsection{Proof of Lemma \ref{lemma:bounded_init_loss}}\label{supp:lemma-proof:bounded_init_loss}
\begin{proof}[Proof of Lemma \ref{lemma:bounded_init_loss}]
It is known that $f_{\theta_0}(\cdot)$ converges in distribution to a mean zero Gaussian with the covariance $\mc K$ determined by the parameter initialization \cite{lee2019wide}. As a result, for arbitrary $\delta_1 \in (0,1)$ there exist constants $l_1 > 0$ and $R_1 > 0$, such that: $\forall~ l \geq l_1$, over random initialization, the following inequality holds true with probability at least $(1-\delta_1)$,
\begin{align}\label{eq:convergence:init-error-bounded}
    \|f_{\theta_0}(X)-Y\|_2, \|f_{\theta_0}(X')-Y'\|_2 \leq R_1
\end{align}

From (\ref{eq:meta-output}), we know that $\forall (\xyxy) =  \task \in \D$,
$$F_{\theta_0}(\xxy) = f_{\theta_0'}(X)$$
where $\theta_0'$ is the parameters after $\tau$-step update on $\theta_0$ over the meta-test task $(\sX',\sY')$:
\begin{align}
    &\theta_\tau=\theta', ~~\theta_0=\theta, \nonumber\\
    &\theta_{i+1} = \theta_{i} - \lambda \nabla_{\theta_{i}} \ell(f_{\theta_{i}}(\sX'),\sY') ~~ \forall i=0,...,\tau-1,
\end{align}
Suppose the learning rate $\lambda$ is sufficiently small, then based on Sec. (\ref{sec:MetaNTK}), we have
\begin{align}
F_{\theta_0}(\xxy) = f_{\theta_0}(X) + \ntk_0(X,X') \ntk_0^{-1} (I-e^{-\lambda \ntk_0 \tau})(f_{\theta_0}(X')-Y').
\end{align}
where $\ntk_0(\cdot,\star) = \nabla_{\theta_0} f_{\theta_0}(\cdot) \nabla_{\theta_0} f_{\theta_0}(\star)^\top$ and we use a shorthand $\ntk_0 \equiv \ntk_0(X',X') $.

\cite{ntk} proves that for sufficiently large width, $\ntk_0$ is positive definite and converges to $\NTK$, the Neural Tangent Kernel, a full-rank kernel matrix with bounded positive eigenvalues. Let $\lev(\NTK)$ and $\Lev(\NTK)$ denote the least and largest eigenvalue of $\NTK$, respectively. Then, it is obvious that for a sufficiently over-parameterized neural network, the operator norm of $\ntk(X,X') \ntk^{-1} (I-e^{-\lambda \ntk \tau})$ can be bounded based on $\lev(\NTK)$ and $\Lev(\NTK)$. Besides, \cite{CNTK,lee2019wide} demonstrate that the neural net output at initialization, $f_{\theta_0}(\cdot)$, is a zero-mean Gaussian with small-scale covaraince. Combining these results and (\ref{eq:convergence:init-error-bounded}), we know there exists $R(R_1,N,\lev(\NTK),\Lev(\NTK))$ such that 
\begin{align}
\|F_{\theta_0}(\xxy)-Y\|_2\leq R(R_1,N,\lev(\NTK),\Lev(\NTK))
\end{align}
By taking an supremum over $R(R_1,N,\lev,\Lev)$ for each training task in $\{\task_i=(\xyxyi)\}_{i\in[N]}$, we can get $R_2$ such that $\forall i \in [N]$
\begin{align}
\|F_{\theta_0}(\xxyi)-Y_i\|_2\leq R_2
\end{align}
and for $R_0 = \sqrt{N} R_2$, define $\delta_0$ as some appropriate scaling of $\delta_1$, then the following holds true with probability $(1-\delta_0)$ over random initialization,
\begin{align}
\|g(\theta_0)\|_2 &=\sqrt{\sum_{\xyxy \in \D}\|F((\xxy),\theta_0)-y\|_2^2} \leq R_0
\end{align}
\end{proof}
\subsection{Proof of Lemma \ref{lemma:kernel-convergence}}\label{supp:global-convergence:kernel-convergence}
\begin{proof}[Proof of Lemma \ref{lemma:kernel-convergence}]
The learning rate for meta-adaption, $\lambda$, is sufficiently small, so (\ref{eq:meta-adaption-descrete}) becomes \textit{continuous-time} gradient descent. Based on \cite{lee2019wide}, for any task $\task=(\xyxy)$,
\begin{align}\label{eq:lemma-proof:kernel-convergence:F_0}
 F_0(\xxy) =f_0(\sX) + \ntk_0(\xx)\widetilde{\T}_{\ntk_0}^\lambda(\sX',\tau)\left(\sY' - f_0(\sX')\right),
\end{align}
where $\ntk_0(\cdot,\star)= \frac{1}{l}\nabla_{\theta_0} f_0(\cdot) \nabla_{\theta_0} f_0(\star)^\top$, and $\widetilde{T}^{\lambda}_{\ntk_0} (\cdot,\tau) \coloneqq \ntk_0(\cdot,\cdot)^{-1}(I-e^{-\lambda \ntk_0(\cdot,\cdot) \tau})$.

Then, we consider $\nabla_{\theta_0} F_{0}(\xxy)$, the gradient of $F_{0}(\xxy)$ in (\ref{eq:lemma-proof:kernel-convergence:F_0}). By Lemma \ref{lemma:helper-local-Liphschitzness}, we know that for sufficiently wide networks, the gradient of $F_{0}(\xxy)$ becomes 
\begin{align}\label{eq:MetaNTK:F_t-grad}
    \nabla_{\theta_0} F_0(\xxy)= \nabla_{\theta_0}f_0(\sX) - \ntk_0(\xx)\T_{\ntk_0}^\lambda(\sX',\tau) \nabla_{\theta_0}f_0(\sX')
\end{align}
Since $\metantk_0 \equiv \metantk_0((\XXY),(\XXY)) = \frac 1 l \nabla_{\theta_0} F_0(\XXY) \nabla_{\theta_0} F_0(\XXY)^\top$ and $F_0(\XXY) = (F_0(\xxyi))_{i=1}^N \in \mathbb{R}^{knN}$, we know $\metantk_0$ is a block matrix with $N\times N$ blocks of size $kn \times kn$. For $i,j \in [N]$, the $(i,j)$-th block can be denoted as $[\metantk_0]_{ij}$ such that
\begin{align}
    [\metantk_0]_{ij} &= \frac{1}{l} \nabla_{\theta_0} F_0(\xxyi) \nabla_{\theta_0} F_0(\xxyj)^\top \nonumber\\
    &= \quad \frac{1}{l}\nabla_{\theta_0}f_0(X_i) \nabla_{\theta_0}f_0(X_j)^\top \nonumber\\
    &\quad + \frac{1}{l} \ntk_0(\xxi)\widetilde{\T}_{\ntk_0}^\lambda(X_i',\tau) \nabla_{\theta_0}f_0(X_i')  \nabla_{\theta_0}f_0(X_j')^\top \widetilde{\T}_{\ntk_0}^\lambda(X_j',\tau)^\top \ntk_0(X_j',X_j)\nonumber\\
    &\quad - \frac{1}{l} \nabla_{\theta_0}f_0(X_i)  \nabla_{\theta_0}f_0(X_j')^\top \widetilde{\T}_{\ntk_0}^\lambda(X_j',\tau)^\top \ntk_0(X_j',X_j) \nonumber\\
    &\quad - \frac{1}{l} \ntk_0(\xxi)\widetilde{\T}_{\ntk_0}^\lambda(X_i',\tau) \nabla_{\theta_0}f_0(X_i')\nabla_{\theta_0}f_0(X_j)^\top \nonumber\\
    &= \quad \ntk_0 (X_i,X_j) \nonumber\\
    &\quad + \ntk_0(\xxi)\widetilde{\T}_{\ntk_0}^\lambda(X_i',\tau) \ntk_0(X_i',X_j')  \widetilde{\T}_{\ntk_0}^\lambda(X_j',\tau)^\top \ntk_0(X_j',X_j)\nonumber\\
    &\quad -  \ntk_0(X_i,X_j') \widetilde{\T}_{\ntk_0}^\lambda(X_j',\tau)^\top \ntk_0(X_j',X_j) \nonumber\\
    &\quad -\ntk_0(\xxi)\widetilde{\T}_{\ntk_0}^\lambda(X_i',\tau) \ntk_0(X_i',X_j) 
\end{align}
where we used the equivalences $\ntk_0(\cdot, \star) = \ntk_0(\star,\cdot)^\top$ and $\frac{1}{l} \nabla_{\theta_0} f_0(\cdot) \nabla_{\theta_0} f_0(\star) = \ntk_0(\cdot, \star)$.

By Algebraic Limit Theorem for Functional Limits, we have
\begin{align}
     &\quad \lim_{l\rightarrow\infty}[\metantk_0]_{ij} \nonumber\\
     &= \lim_{l\rightarrow\infty} \ntk_0 (X_i,X_j) \nonumber\\
    &\quad + \lim_{l\rightarrow\infty}\ntk_0(\xxi)\T_{\lim_{l\rightarrow\infty}\ntk_0}^\lambda(X_i',\tau) \lim_{l\rightarrow\infty} \ntk_0(X_i',X_j')  \T_{\lim_{l\rightarrow\infty} \ntk_0}^\lambda(X_j',\tau)^\top \lim_{l\rightarrow\infty} \ntk_0(X_j',X_j)\nonumber\\
    &\quad -  \lim_{l\rightarrow\infty} \ntk_0(X_i,X_j') \T_{\lim_{l\rightarrow\infty}\ntk_0}^\lambda(X_j',\tau)^\top \lim_{l\rightarrow\infty}\ntk_0(X_j',X_j) \nonumber\\
    &\quad -\lim_{l\rightarrow\infty}\ntk_0(\xxi)\T_{\lim_{l\rightarrow\infty}\ntk_0}^\lambda(X_i',\tau) \ntk_0(X_i',X_j) \nonumber\\
        &= \quad \NTK (X_i,X_j) \nonumber\\
    &\quad + \NTK(\xxi)\widetilde{\T}_{\NTK}^\lambda(X_i',\tau) \NTK(X_i',X_j')  \widetilde{\T}_{\NTK}^\lambda(X_j',\tau)^\top \NTK(X_j',X_j)\nonumber\\
    &\quad -  \NTK(X_i,X_j') \widetilde{\T}_{\NTK}^\lambda(X_j',\tau)^\top \NTK(X_j',X_j) \nonumber\\
    &\quad -\NTK(\xxi)\widetilde{\T}_{\NTK}^\lambda(X_i',\tau) \NTK(X_i',X_j) \label{eq:lemma-proof:kernel-convergence:metantk_ij}
\end{align}
where $\NTK(\cdot,\star) = \lim_{l\rightarrow\infty}\ntk_0 (\cdot,\star)$ is a deterministic kernel function, the Neural Tangent Kernel function (NTK) from the literature on supervised learning \cite{ntk,lee2019wide,CNTK}. Specifically, $\ntk_0 (\cdot,\star)$ converges to $\NTK(\cdot,\star)$ in probability as the width $l$ approaches infinity.

Hence, for any $i,j \in [N]$, as the width $l$ approaches infinity, $[\metantk_0]_{ij}$ converges in probability to a deterministic matrix $\lim_{l\rightarrow\infty} [\metantk_0]_{ij}$, as shown by (\ref{eq:lemma-proof:kernel-convergence:metantk_ij}). Thus, the whole block matrix $\metantk_0$ converges in probability to a deterministic matrix in the infinite width limit. Denote $\metaNTK =\lim_{l \rightarrow \infty} \metantk_0$, then we know $\metaNTK$ is a deterministic matrix with each block expressed as (\ref{eq:lemma-proof:kernel-convergence:metantk_ij}).

Since $\metantk_0 \equiv \metantk_0((\XXY),(\XXY)) = \frac 1 l \nabla_{\theta_0} F_0(\XXY) \nabla_{\theta_0} F_0(\XXY)^\top$, it is a symmetric square matrix. Hence all eigenvalues of $\metantk_0$ are greater or equal to $0$, which also holds true for $\metaNTK$. In addition, because of Assumption \ref{assum:full-rank}, $\metaNTK$ is positive definite, indicating $\lev(\metaNTK) > 0$. On the other hand, from \cite{CNTK}, we know diagonal entries and eigenvalues of $\NTK(\cdot,\star)$ are positive real numbers upper bounded by $\cO(L)$, as a direct result, it is easy to verify that the diagonal entries of the matrix $\metaNTK$ are also upper bounded, indicating $\Lev(\metaNTK) < \infty$. Hence, we have $0 < \lev(\metaNTK) < \Lev (\metaNTK) < \infty$.

\textbf{Extension.} It is easy to extend (\ref{eq:lemma-proof:kernel-convergence:metantk_ij}), the expression for $\metaNTK\equiv \lim_{l \rightarrow \infty}\metantk_0((\XXY),(\XXY)$, to more general cases. Specifically, we can express $\metaNTK(\cdot,\star)$ analytically for arbitrary inputs. To achieve this, let us define a kernel function,  $\SingleTaskmetaNTK: (\mathbb{R}^{n \times k} \times \mathbb{R}^{m\times k}) \times  (\mathbb{R}^{n \times k} \times \mathbb{R}^{m\times k}) \mapsto \mathbb{R}^{nk \times nk}$ such that
\begin{align}
    \SingleTaskmetaNTK((\cdot,\ast), (\bullet, \star)) &= \NTK(\cdot,\bullet) + \NTK(\cdot,\ast)\widetilde{\T}_{\NTK}^\lambda(\ast,\tau)\NTK(\ast,\star)\widetilde{\T}_{\NTK}^\lambda(\star,\tau)^\top \NTK(\star,\bullet) \nonumber\\
 &\quad -\NTK(\cdot,\ast)\widetilde{\T}_{\NTK}^\lambda(\ast,\tau) \NTK(\ast,\bullet) - \NTK(\cdot,\star) \widetilde{\T}_{\NTK}^\lambda(\star,\tau)^\top \NTK(\star,\bullet) .
\end{align}
Then, it is obvious that for $i,j\in[N]$, the $(i,j)$-th block of $\metaNTK$ can be expressed as $[\metaNTK]_{ij} = \SingleTaskmetaNTK((\xxi),(\xxj))$. 

For cases such as $\metaNTK((\xx),(\XX)) \in \mathbb{R}^{kn \times knN}$, it is also obvious that $\metaNTK((\xx),(\XX))$ is a block matrix that consists of $1 \times N$ blocks of size $k n \times k n$, with the $(1,j)$-th block as follows for $j \in [N]$, $$[\metaNTK((\xx),(\XX))]_{1,j}= \SingleTaskmetaNTK((X,X'),(X_j,X_j')).$$

\end{proof}
\label{supp:global-convergence:lemma-proof:kernel-convergence}
\subsection{Proof of Theorem \ref{thm:global-convergence:restated}}
\label{supp:global-convergence:theorem-proof}
\begin{proof}[Proof of Theorem \ref{thm:global-convergence:restated}]

Based on these lemmas presented above, we can prove Theorem \ref{thm:global-convergence:restated}.

Lemma \ref{lemma:bounded_init_loss} indicates that there exist $R_0$ and $l^*$ such that for any width $l \geq l^*$, the following holds true over random initialization with probability at least $(1-\delta_0/10)$,
\begin{align}\label{eq:thm:global-convergence:init-loss-bound}
    \|g(\theta_0)\|_2  \leq R_0 ~. 
\end{align}

Consider $C = \frac{3 K R_0}{\sigma}$ in Lemma \ref{lemma:local-Liphschitzness}. 

First, we start with proving (\ref{eq:convergence-parameters:supp}) and (\ref{eq:convergence-loss:supp}) by induction. Select $\wt l > l^*$ such that (\ref{eq:thm:global-convergence:init-loss-bound}) and (\ref{eq:jacobian-lip}) hold with probability at least
$1- \frac{\delta_0}{5}$ over random initialization for every $l \geq \wt l$. As $t=0$, by (\ref{eq:gd&jacobian}) and (\ref{eq:jacobian-lip}), we can easily verify that (\ref{eq:convergence-parameters:supp}) and (\ref{eq:convergence-loss:supp}) hold true
\begin{align*}
\begin{cases}
        \|\theta_1 - \theta_0 \|_2 &= \| -\eta J(\theta_0)^\top g(\theta_0) \|_2 \leq \eta \|J(\theta_0)\|_{op} \|g(\theta_0)\|_2 \leq  \frac{\eta_0}{l} \|J(\theta_0)\|_{F} \|g(\theta_0)\|_2 \leq  \frac{K\eta_0}{\sqrt l}R_0 ~. \\
         \|g(\theta_0)\|_2  &\leq R_0
\end{cases}
\end{align*}
Assume (\ref{eq:convergence-parameters:supp}) and (\ref{eq:convergence-loss:supp}) hold true for any number of training step $j$ such that $j< t$. Then, by (\ref{eq:jacobian-lip}) and (\ref{eq:convergence-loss:supp}), we have
 \begin{align*}
    \|\theta_{t+1} - \theta_t \|_2 \leq \eta \|J(\theta_t)\|_{op} \|g(\theta_t)\|_2 \leq  \frac{K\eta_0}{\sqrt l} \left(1 - \frac {\eta_0 \mins}{3}\right)^t R_0 ~.
\end{align*}

Beside, with the mean value theorem and (\ref{eq:gd&jacobian}), we have the following 
\begin{align*}
    \|g(\theta_{t+1})\|_2 &= \| g(\theta_{t+1} - g(\theta_t) + g(\theta_t))\|_2\\
    &=\|J( \theta_t^\mu) (\theta_{t+1}-\theta_t) + g(\theta_t)\|_2\\
    &= \|(I-\eta J( \theta_t^\mu)J(\theta_t)^\top ) g(\theta_t)\|_2\\
    &\leq  \|I-\eta J( \theta_t^\mu)J(\theta_t)^\top\|_{op} \|g(\theta_t)\|_2\\
    &\leq \|I-\eta J( \theta_t^\mu)J(\theta_t)^\top\|_{op} \left(1 - \frac {\eta_0 \mins}{3}\right)^t R_0
\end{align*}
where we define $ \theta_t^\mu$ as a linear interpolation between $\theta_t$ and $\theta_{t+1}$ such that $\theta_t^\mu\coloneqq \mu \theta_t + (1-\mu) \theta_{t+1}$ for some $0< \mu <1$. 

Now, we will show that with probability $1-\frac{\delta_0}{2}$,
\begin{align*}
    \|I-\eta J( \theta_t^\mu)J(\theta_t)^\top\|_{op} \leq 1 - \frac{\eta_0 \mins}{3} .
\end{align*}
Recall that $\metantk_0 \rightarrow \metaNTK$ in probability, proved by Lemma \ref{lemma:kernel-convergence}. Then, there exists $\hat l$ such that the following holds with probability at least $1-\frac{\delta_0}{5}$ for any width $l > \hat l$,
\begin{align*}
    \|\metaNTK - \metantk_0\|_F \leq \frac{\eta_0 \mins}{3}.
\end{align*}
Our assumption $\eta_0 < \frac{2}{\Lev + \lev}$ makes sure that
\begin{align*}
    \|I - \eta_0 \metaNTK\|_{op} \leq 1 - \eta_0 \mins ~.
\end{align*}
Therefore, as $l \geq (\frac{18 K^3 R_0}{\mins^2})^2$, with probability at least $1- \frac{\delta_0}{2}$ the following holds,
\begin{align*}
    &\qquad \|I - \eta J(\theta_t^\mu) J(\theta_t)^\top\|_{op} \\
    &= \|I - \eta_0 \metaNTK + \eta_0 \metaNTK - \metantk_0 +  \eta\left(J(\theta_0) J(\theta_0)^\top - J(\theta_t^\mu )J(\theta_t)^\top\right)\|_{op}\\
    &\leq \|I - \eta_0 \metaNTK\|_{op} + \eta_0 \|\metaNTK - \metantk_0\|_{op} + \eta\|J(\theta_0) J(\theta_0)^\top - J(\theta_t^\mu )J(\theta_t)^\top\|_{op}\\
    &\leq 1- \eta_0 \mins + \frac{\eta_0\mins}{3} + \eta_0 K^2 (\|\theta_t - \theta_0\|_2 + \|\theta_t^\mu - \theta_0\|_2)\\
    &\leq 1 - \eta_0 \mins + \frac{\eta_0 \mins}{3} +  \frac{6 \eta_0 K^3 R_0}{\mins\sqrt l}\\
    &\leq 1 - \frac{\eta_0 \mins}{3}
\end{align*}
where we used the equality $\frac 1 l J(\theta_0)J(\theta_0)^\top = \metantk_0$.

Hence, as we choose $\Lambda = \max\{l^*,\wt l, \hat l, \frac{18 K^3 R_0}{\mins^2})^2\}$, the following holds for any width $l > \Lambda$ with probability at least $1-\delta_0$ over random initialization
\begin{align}
    \|g(\theta_{t+1}\|_2  \leq \|I-\eta J( \theta_t^\mu)J(\theta_t)^\top\|_{op} \left(1 - \frac {\eta_0 \mins}{3}\right)^t R_0 \leq \left(1 - \frac {\eta_0 \mins}{3}\right)^{t+1} R_0,
\end{align}
which finishes the proof (\ref{eq:convergence-loss:supp}).

Finally, we prove (\ref{eq:convergence-metantk:supp}) by
\begin{align*}
    \|\metantk_0 - \metantk_t\|_F &= \frac{1}{l} \|J(\theta_0) J(\theta_0)^\top - J(\theta_t) J(\theta_t)^\top\|_F \\
    &\leq \frac{1}{l} \|J(\theta_0)\|_{op} \|J(\theta_0)^\top - J(\theta_t)^\top\|_F + \frac 1 l \|J(\theta_t)-J(\theta_0)\|_{op} \|J(\theta_t)^\top \|_F\\
    &\leq 2 K^2 \|\theta_0 - \theta_t\|_2\\
    &\leq \frac{6 K^3 R_0}{\mins \sqrt l},
\end{align*}
where we used (\ref{eq:convergence-parameters:supp}) and Lemma \ref{lemma:local-Liphschitzness}.
\end{proof}

\section{Proof of Corollary \ref{corr:GBML-output} (GBML Output)}\label{supp:GBML-output}
In this section, we will provide proof of Corollary \ref{corr:GBML-output}. Briefly speaking, with the help of Theorem \ref{thm:global-convergence:restated}, we first show the training dynamics of GBML with over-parameterized DNNs can be described by a differential equation, which is analytically solvable. By solving this differential equation, we obtain the expression for GBML output on any training or test task. 

Below, we first restate Corollary \ref{corr:GBML-output}, and then provide the proof.

\begin{corollary}[GBML Output (Corollary \ref{corr:GBML-output} Restated)]\label{corr:GBML-output:restated}
In the setting of
Theorem \ref{thm:global-convergence}, the training dynamics of the GBML can be described by a differential equation
$$\frac{d F_t(\XXY)}{d t}=- \eta  \, \metantk_0   (F_t(\XXY) - \Y)$$
where we denote $F_t \equiv F_{\theta_t}$ and $\metantk_0 \equiv \metantk_{\theta_0}((\XXY),(\XXY))$ for convenience.

Solving this differential equation, we obtain the meta-output of GBML on training tasks at any training time as 
\begin{align}
    F_t(\XXY)=(I - e^{- \eta\metantk_0 t})\Y + e^{-\eta \metantk_0 t}F_{0}(\XXY) \,. 
\end{align}

Similarly, on arbitrary test task $\task=(\xyxy)$, the meta-output of GBML is
\begin{align}
F_t(\xxy) 
=F_{0}(\xxy)  +  \metantk_0(\xxy) \T^{\eta}_{\metantk_0}(t)
\left(\Y-F_0(\XXY)\right)
\end{align}
where $\metantk_0(\cdot)\equiv \metantk_{\theta_0}(\cdot,(\XXY))$ and $\T^{\eta}_{\metantk_0}(t)=\metantk_0^{-1}\left(I- e^{-\eta\metantk_0 t}\right)$ are shorthand notations.

\begin{proof}
For the optimization of GBML, the gradient descent on $\theta_t$ with learning rate $\eta$ can be expressed as
\begin{align}
    \theta_{t+1} &= \theta_t - \eta \nabla_{\theta_t}  \loss(\theta_t) \nonumber \\
    &=\theta_t - \frac{1}{2}\eta \nabla_{\theta_t} \|F_{\theta_t}(\XXY) - \Y\|_2^2 \nonumber \\
    &=\theta_t - \eta \nabla_{\theta_t}F_{\theta_t}(\XXY)^\top \left(F_{\theta_t}(\XXY) - \Y\right) 
\end{align}

Since the learning rate $\eta$ is sufficiently small, the \textit{discrete-time} gradient descent above can be re-written in the form of \textit{continuous-time} gradient descent (i.e., gradient flow),
\begin{align}
    \frac{d \theta_t}{d t} = - \eta \nabla_{\theta_t}F_{\theta_t}(\XXY)^\top \left(F_{\theta_t}(\XXY) - \Y\right) \nonumber
\end{align}
Then, the training dynamics of the meta-output $F_{t}(\cdot)\equiv F_{\theta_t}(\cdot)$ can be described by the following differential equation,
\begin{align}
    \frac{d F_t(\XXY)}{d t} &= \nabla_{\theta_t} F_t(\XXY) \frac{d \theta_t}{d t}\nonumber\\
    &=- \eta \nabla_{\theta_t} F_{t}(\XXY)  \nabla_{\theta_t}F_{t}(\XXY)^\top \left(F_{t}(\XXY) - \Y\right) \nonumber\\
    &=-\eta \metantk_{t} \left ( F_{t}(\XXY) - \Y \right) \label{eq:corr:diff-F}
\end{align}
where $\metantk_t = \metantk_{t}((\XXY),(\XXY))=\nabla_{\theta_t} F_{t}(\XXY)  \nabla_{\theta_t}F_{t}(\XXY)^\top$.

On the other hand, Theorem \ref{thm:global-convergence:restated} gives the following bound in (\ref{eq:convergence-metantk:supp}),
\begin{align}
        \sup_{t} \|  \metantk_0 - \metantk_t\|_F  &\leq \frac {6K^3\lss}{\mins}  l^{-\frac 1 2},
\end{align}
indicating $\metantk_t$ stays almost constant during training for sufficiently over-parameterized neural networks (i.e., large enough width $l$). Therefore, we can replace $\metantk_t$ by $\metantk_0$ in (\ref{eq:corr:diff-F}), and get
\begin{align}
     \frac{d F_t(\XXY)}{d t} = -\eta \metantk_{0} \left ( F_{t}(\XXY) - \Y \right) ,
\end{align}
which is an ordinary differential equation (ODE) for the meta-output $F_t(\XXY)$ w.r.t. the training time $t$.

This ODE is analytically solvable with a unique solution. Solving it, we obtain the meta-output on training tasks at any training time $t$ as,
\begin{align}
    F_t(\XXY)=(I - e^{- \eta\metantk_0 t})\Y + e^{-\eta \metantk_0 t}F_{0}(\XXY).
\end{align}
The solution can be easily extended to any test task $\task = (\xyxy)$, and the meta-output on the test task at any training time is
\begin{align}
F_t(\xxy) =F_{0}(\xxy)  +  \metantk_0(\xxy) \T^{\eta}_{\metantk_0}(t)
\left(\Y-F_0(\XXY)\right) ,
\end{align}
where $\metantk_0(\cdot)\equiv \metantk_{\theta_0}(\cdot,(\XXY))$ and $\T^{\eta}_{\metantk_0}(t)=\metantk_0^{-1}\left(I- e^{-\eta\metantk_0 t}\right)$ are shorthand notations.
\end{proof}

\end{corollary}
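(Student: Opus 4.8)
The plan is to pass from the discrete gradient-descent iteration on the network parameters to its gradient-flow limit, read off the induced dynamics of the meta-output, freeze the kernel via Theorem~\ref{thm:global-convergence:restated}, and then solve the resulting linear ODE in closed form.

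First I would write the parameter update for the GBML objective \eqref{eq:MAML-obj} as $\theta_{t+1}=\theta_t-\eta\nabla_{\theta_t}\loss(\theta_t)=\theta_t-\eta\,\nabla_{\theta_t}F_{\theta_t}(\XXY)^\top\big(F_{\theta_t}(\XXY)-\Y\big)$, using $\loss(\theta)=\tfrac12\|F_\theta(\XXY)-\Y\|_2^2$. Since $\eta$ is taken infinitesimally small, this iteration becomes the gradient flow $\tfrac{d\theta_t}{dt}=-\eta\,\nabla_{\theta_t}F_t(\XXY)^\top\big(F_t(\XXY)-\Y\big)$. Applying the chain rule to $F_t(\XXY)\equiv F_{\theta_t}(\XXY)$ gives $\tfrac{dF_t(\XXY)}{dt}=\nabla_{\theta_t}F_t(\XXY)\,\tfrac{d\theta_t}{dt}=-\eta\,\metantk_t\,\big(F_t(\XXY)-\Y\big)$, where $\metantk_t\equiv\metantk_{\theta_t}((\XXY),(\XXY))$ is the time-$t$ meta-NTK on the training tasks. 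This is linear but has a time-varying coefficient.

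The key step is to replace $\metantk_t$ by $\metantk_0$. For this I would invoke the estimate \eqref{eq:convergence-metantk:supp} of Theorem~\ref{thm:global-convergence:restated}, namely $\sup_t\|\metantk_0-\metantk_t\|_F\le \tfrac{6K^3\lss}{\mins}\,l^{-1/2}$, which shows that the kernel stays essentially frozen along the optimization trajectory once the width $l$ is large. With $\metantk_t$ replaced by $\metantk_0$ the dynamics become $\tfrac{dF_t(\XXY)}{dt}=-\eta\,\metantk_0\,\big(F_t(\XXY)-\Y\big)$; writing $g_t\coloneqq F_t(\XXY)-\Y$ this reads $\dot g_t=-\eta\metantk_0 g_t$, whose unique solution is $g_t=e^{-\eta\metantk_0 t}g_0$, and rearranging yields $F_t(\XXY)=(I-e^{-\eta\metantk_0 t})\Y+e^{-\eta\metantk_0 t}F_0(\XXY)$. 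For a test task $\task=(\xyxy)$ the same chain-rule computation gives $\tfrac{dF_t(\xxy)}{dt}=\nabla_{\theta_t}F_t(\xxy)\,\tfrac{d\theta_t}{dt}=-\eta\,\metantk_t(\xxy)\,g_t$ with $\metantk_t(\cdot)\equiv\metantk_{\theta_t}(\cdot,(\XXY))$; freezing the kernel again and substituting the already-solved $g_t=e^{-\eta\metantk_0 t}(F_0(\XXY)-\Y)$, I integrate from $0$ to $t$ and obtain $F_t(\xxy)=F_0(\xxy)+\metantk_0(\xxy)\,\metantk_0^{-1}(I-e^{-\eta\metantk_0 t})(\Y-F_0(\XXY))$, which is the claimed formula once $\T^{\eta}_{\metantk_0}(t)=\metantk_0^{-1}(I-e^{-\eta\metantk_0 t})$ is identified (note the $\eta$ cancels against the $\eta$ inside the time integral of the matrix exponential).

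The main obstacle is making the ``freeze the kernel'' substitution rigorous: \eqref{eq:convergence-metantk:supp} controls $\|\metantk_0-\metantk_t\|_F$ uniformly in $t$, but converting this into a bound on the gap between the true time-varying ODE and its constant-coefficient surrogate requires a Gr\"onwall-type argument, using the uniform boundedness of $g_t$ (from the convergence bound in Theorem~\ref{thm:global-convergence:restated}) and of $\metantk_0$ (from Lemma~\ref{lemma:kernel-convergence}). A secondary, more routine, point is justifying the discrete-to-continuous passage, which holds for sufficiently small $\eta$; and the full-rank Assumption~\ref{assum:full-rank} together with $\metantk_0\to\metaNTK$ ensures $\metantk_0^{-1}$, hence $\T^{\eta}_{\metantk_0}(t)$, is well defined for large $l$. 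Everything else is elementary linear-ODE algebra.
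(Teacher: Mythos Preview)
Your proposal is correct and follows essentially the same approach as the paper: write the parameter gradient descent, pass to gradient flow, apply the chain rule to obtain $\tfrac{dF_t}{dt}=-\eta\metantk_t(F_t-\Y)$, invoke \eqref{eq:convergence-metantk:supp} to freeze $\metantk_t\mapsto\metantk_0$, solve the linear ODE, and extend to test tasks. If anything, you are more careful than the paper in flagging the Gr\"onwall-type justification for the kernel substitution and the invertibility of $\metantk_0$; the paper's proof simply asserts ``we can replace $\metantk_t$ by $\metantk_0$'' and ``the solution can be easily extended to any test task'' without spelling these out.
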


\section{Introduction to Kernel Regression}
\label{supp:intro-kernel}

The purpose of this section is to familiarize readers with kernel regression, a well-studied method with theoretical guarantees for regression and classification in the setting of supervised learning.

Consider a supervised learning task of binary classification, $\task(\xyxy)\in \mathbb R^{d\times n} \times \mathbb R^{ n} \times \mathbb R^{d\times m} \times \mathbb R^{m}$, where $(X',Y')$ is the training dataset and $(X,Y)$ is the test dataset. Suppose we have kernel function $\Psi(\cdot,\star)$, then the prediction of a standard kernel regression on test samples is
\begin{align}\label{eq:supp:standard-kernel-regression}
    \hat Y = \Psi(X,X') \Psi(X',X')^{-1} Y'
\end{align}
where $\Psi(X,X') \in \mathbb{R}^{n \times m}$ and $\Psi(X',X') \in \mathbb{R}^{m \times m}$. 

Since it is binary classification, the set of training labels, $Y'$, is usually a vector with elements as $\{0,1\}$ (or $\{-1,1\}$), where $0$ and $1$ represent two classes, separately. In this case, for an element of $\hat Y$, if its value is greater or equal than $\frac{1}{2}$, then it is considered to predict the class of $1$; if its value is less than $\frac{1}{2}$, then it predicts the class of $0$.

In the case of multi-class classification, kernel regression methods usually use one-hot labels. For instance, if there are 5 classes and the training labels are $[3,2,5,3,\dots]$, then the one-hot version of $Y'$ is expressed as
\begin{align}\label{eq:supp:one-hot}
Y'=
    \begin{bmatrix}
0 & 0 & 1 & 0 & 0\\
0 & 1 & 0 & 0 & 0\\
0 & 0 & 0 & 0 & 1\\
0 & 0 & 1 & 0 & 0\\
 \vdots&  \vdots& \vdots & \vdots & \vdots\\
\end{bmatrix}
\end{align}
In this way, each column represents an individual dimension. Specifically, the kernel regression, (\ref{eq:supp:standard-kernel-regression}), is doing regression in each of these dimensions separately.

The derived kernel regression for few-shot learning in Theorem \ref{thm:MNK} is very different from this standard one, but the forms are similar.

\section{Gradient-Based Meta-Learning as Kernel Regression}
\label{supp:MetaNTK}
In this section, we first make an assumption on the scale of parameter initialization, then we restate Theorem \ref{thm:MNK}. After that, we provide the proof for Theorem \ref{thm:MNK}.

\cite{lee2019wide} shows the output of a neural network randomly initialized following (\ref{eq:recurrence}) is a zero-mean Gaussian with covariance determined by $\sigma_w$ and $\sigma_b$, the variances corresponding to the initialization of weights and biases. Hence, small values of $\sigma_w$ and $\sigma_b$ can make the outputs of randomly initialized neural networks approximately zero. We adopt the following assumption from \cite{CNTK} to simplify the expression of the kernel regression in Theorem \ref{thm:MNK}.

\begin{assumption}[Small Scale of Parameter Initialization]\label{assum:small-init}
The scale of parameter initialization is sufficiently small, i.e., $\sigma_w,\sigma_b$ in (\ref{eq:recurrence}) are small enough, so that $f_0(\cdot) \simeq 0$.
\end{assumption}

Note the goal of this assumption is to make the output of the randomly initialized neural network negligible. The assumption is quite common and mild, since, in general, the outputs of randomly initialized neural networks are of small scare compared with the outputs of trained networks \cite{lee2019wide}. 

\begin{theorem}[GBML as Kernel Regression (Theorem \ref{thm:MNK} Restated)]\label{thm:MNK:supp}
Suppose learning rates $\eta$ and $\lambda$ are infinitesimal. As the network width $l$ approaches infinity, with high probability over random initialization of the neural net, the GBML output, (\ref{eq:F_t:main_text}), converges to a special kernel regression,
\begin{align}\label{eq:F_t-MetaNTK:supp}
F_t(\xxy)= G_\NTK^{\tau}(\xxy) +\metaNTK((\xx),(\XX)) \T^{\eta}_{\metaNTK}(t) \left(\Y-G_{\NTK}^{\tau}(\XXY)\right)
\end{align}
where $G$ is a function defined below, $\NTK$ is the neural tangent kernel (NTK) function from \cite{ntk} that can be analytically calculated without constructing any neural net, and $\metaNTK$ is a new kernel, which name as Meta Neural Kernel (MNK). The expression for $G$ is
\begin{align}
    G_\NTK^\tau(\xxy) =  \NTK(X,X')\widetilde{T}^{\lambda}_\NTK (X',\tau)   Y'.
\end{align}
where $\widetilde{T}^{\lambda}_\NTK (\cdot,\tau) \coloneqq \NTK(\cdot,\cdot)^{-1}(I-e^{-\lambda \NTK(\cdot,\cdot) \tau}) $. Besides, $G_\NTK^{\tau}(\XXY) = (G_\NTK^{\tau}(\xxyi))_{i=1}^N$.

The MNK is defined as $\metaNTK \equiv \metaNTK((\XX),(\XX)) \in \mathbb{R}^{knN \times knN}$, which is a block matrix that consists of $N \times N$ blocks of size $kn\times kn$. For $i,j \in [N]$, the $(i,j)$-th block of $\metaNTK$ is
\begin{align} \label{eq:MetaNTK_ij=kernel:supp}
    [\metaNTK]_{ij}=\SingleTaskmetaNTK((\xxi),(\xxj)) \in \mathbb R^{kn \times kn} ,
\end{align}
where $\SingleTaskmetaNTK: (\mathbb{R}^{n \times k} \times \mathbb{R}^{m\times k}) \times  (\mathbb{R}^{n \times k} \times \mathbb{R}^{m\times k}) \rightarrow \mathbb{R}^{nk \times nk}$ is a kernel function defined as
\begin{align}\label{eq:MetaNTK_ij:supp}
    \SingleTaskmetaNTK((\cdot,\ast), (\bullet, \star)) &= \NTK(\cdot,\bullet) + \NTK(\cdot,\ast)\widetilde{\T}_{\NTK}^\lambda(\ast,\tau)\NTK(\ast,\star)\widetilde{\T}_{\NTK}^\lambda(\star,\tau)^\top \NTK(\star,\bullet) \nonumber\\
 &\quad -\NTK(\cdot,\ast)\widetilde{\T}_{\NTK}^\lambda(\ast,\tau) \NTK(\ast,\bullet) - \NTK(\cdot,\star) \widetilde{\T}_{\NTK}^\lambda(\star,\tau)^\top \NTK(\star,\bullet) .
\end{align}
The $\metaNTK((\xx),(\XX)) \in \mathbb{R}^{kn \times knN}$ in (\ref{eq:F_t-MetaNTK}) is also a block matrix, which consists of $1 \times N$ blocks of size $k n \times k n$, with the $(1,j)$-th block as follows for $j \in [N]$,
\begin{align}\label{eq:MetaNTK_1j:supp}
    [\metaNTK((\xx),(\XX))]_{1,j}= \SingleTaskmetaNTK((X,X'),(X_j,X_j')).
\end{align}
\end{theorem}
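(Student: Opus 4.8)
The plan is to take the $l\to\infty$ limit of the \emph{exact} finite-width expression for the GBML output supplied by Corollary \ref{corr:GBML-output}, and to identify each limiting factor using the kernel convergence already proved in Lemma \ref{lemma:kernel-convergence} together with the standard wide-network (linearization) results for supervised learning from \cite{lee2019wide}. Concretely, for every width $l\ge\Lambda$ Corollary \ref{corr:GBML-output} gives, on an arbitrary test task $\task=(\xyxy)$,
$$F_t(\xxy)=F_0(\xxy)+\metantk_0(\xxy)\,\T^{\eta}_{\metantk_0}(t)\bigl(\Y-F_0(\XXY)\bigr),$$
so it suffices to control the $l\to\infty$ behaviour of the three ingredients $F_0$, $\metantk_0$, and $\T^\eta_{\metantk_0}(t)$ and then pass to the limit. (The assumption that $\eta,\lambda$ are infinitesimal is used only to put meta-adaptation and meta-training in their gradient-flow forms, exactly as in Corollary \ref{corr:GBML-output}.)

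\emph{Step 1 (the base learner $F_0$).} Since $F_0(\xxy)=f_{\theta_0'}(X)$ with $\theta_0'$ obtained by $\tau$ continuous-time gradient-descent steps on $(X',Y')$ from $\theta_0$, the wide-network result of \cite{lee2019wide} gives $F_0(\xxy)=f_0(X)+\ntk_0(X,X')\widetilde{\T}^{\lambda}_{\ntk_0}(X',\tau)\bigl(Y'-f_0(X')\bigr)$, and $\ntk_0\to\NTK$ in probability as $l\to\infty$. Under Assumption \ref{assum:small-init} we have $f_0(\cdot)\simeq 0$, hence $F_0(\xxy)\to \NTK(X,X')\widetilde{\T}^{\lambda}_{\NTK}(X',\tau)Y'=G_\NTK^\tau(\xxy)$, and likewise $F_0(\XXY)\to G_\NTK^\tau(\XXY)$.

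\emph{Step 2 (the empirical Meta-NTK $\metantk_0$).} Here I would invoke Lemma \ref{lemma:kernel-convergence}. The key point, which it establishes via Lemma \ref{lemma:helper-local-Liphschitzness}, is that the label-dependent part of $\nabla_{\theta_0}F_0$ has vanishing Frobenius norm, so in the limit $\nabla_{\theta_0}F_0(\xxy)=\nabla_{\theta_0}f_0(X)-\ntk_0(X,X')\widetilde{\T}^\lambda_{\ntk_0}(X',\tau)\nabla_{\theta_0}f_0(X')$, which is label-free. Forming $\metantk_0=\tfrac1l\nabla_{\theta_0}F_0\,\nabla_{\theta_0}F_0^\top$ and using $\ntk_0\to\NTK$ together with the Algebraic Limit Theorem for the products, inverses and matrix exponentials that appear, one gets the block-wise convergence $[\metantk_0]_{ij}\to\SingleTaskmetaNTK((\xxi),(\xxj))$, i.e. $\metantk_0((\XXY),(\XXY))\to\metaNTK((\XX),(\XX))$; the identical computation with one index replaced by the test task gives $\metantk_0(\xxy)\to\metaNTK((\xx),(\XX))$, whose $(1,j)$-block is $\SingleTaskmetaNTK((X,X'),(X_j,X_j'))$. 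These are precisely the explicit formulas asserted for $\metaNTK$, $\SingleTaskmetaNTK$ and $G$ in the statement.

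\emph{Step 3 (time-evolution factor and assembly).} By Assumption \ref{assum:full-rank} and Lemma \ref{lemma:kernel-convergence}, $\metaNTK$ is positive definite with $0<\lev(\metaNTK)\le\Lev(\metaNTK)<\infty$, so the map $M\mapsto M^{-1}\bigl(I-e^{-\eta M t}\bigr)$ is continuous on a neighbourhood of $\metaNTK$ in the cone of positive definite matrices; hence $\T^\eta_{\metantk_0}(t)\to\T^\eta_{\metaNTK}(t)$ in probability. Combining Steps 1--3 by the continuous mapping theorem (sums and products of sequences converging in probability), the right-hand side of the Corollary expression converges in probability to
$$G_\NTK^\tau(\xxy)+\metaNTK((\xx),(\XX))\,\T^\eta_{\metaNTK}(t)\bigl(\Y-G_\NTK^\tau(\XXY)\bigr),$$
which is exactly \eqref{eq:F_t-MetaNTK:supp}; collecting the high-probability statements from \cite{lee2019wide} and Lemma \ref{lemma:kernel-convergence} yields the ``with high probability over random initialization'' qualifier. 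The main obstacle is Step 2: $\metantk_0$ a priori \emph{does} depend on the labels through $\nabla_{\theta_0}F_0$, and it loses this dependence only in the limit; making this precise, and ensuring the convergence survives the nonlinear operations (matrix inverse, matrix exponential, products of kernel blocks), is exactly where Lemma \ref{lemma:helper-local-Liphschitzness} and the positive-definiteness bounds of Lemma \ref{lemma:kernel-convergence} do the real work, while the remaining manipulations are routine applications of the Algebraic Limit Theorem and continuity of matrix functions.
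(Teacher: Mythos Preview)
Your proposal is correct and follows essentially the same route as the paper's proof: start from the finite-width expression of Corollary \ref{corr:GBML-output}, pass $F_0\to G_\NTK^\tau$ via \cite{lee2019wide} together with Assumption \ref{assum:small-init} and $\ntk_0\to\NTK$, pass $\metantk_0\to\metaNTK$ via Lemma \ref{lemma:kernel-convergence}, and combine. If anything, your Step 3 (explicit continuity of $M\mapsto M^{-1}(I-e^{-\eta Mt})$ on a PD neighbourhood to handle $\T^\eta_{\metantk_0}(t)\to\T^\eta_{\metaNTK}(t)$) and your remark that the label dependence of $\metantk_0$ disappears only in the limit are slightly more explicit than the paper, but the argument is the same.
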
 
\begin{proof}
First, (\ref{eq:F_t:main_text}) shows that the output of GBML on any test task $\task = (\xyxy)$ can be expressed as
\begin{align}\label{eq:thm:MNK:F_t:finite-width}
F_t(\xxy) =F_{0}(\xxy)  + \metantk_0(\xxy)\T^{\eta}_{\metantk_0}(t)\left(\Y-F_0(\XXY)\right)    
\end{align}
Note (\ref{eq:lemma-proof:kernel-convergence:F_0}) in Appendix \ref{supp:global-convergence:kernel-convergence} shows that 
\begin{align}\label{eq:thm:MNK:F_0:finite-width}
 F_0(\xxy) =f_0(\sX) + \ntk_0(\xx)\widetilde{\T}_{\ntk_0}^\lambda(\sX',\tau)\left(\sY' - f_0(\sX')\right),
\end{align}
With Assumption \ref{assum:small-init}, we can drop the terms $f_0(X)$ and $f_0(X')$ in (\ref{eq:thm:MNK:F_0:finite-width}). Besides, from \cite{ntk,CNTK,lee2019wide}, we know $\lim_{l \rightarrow \infty}\ntk_0(\cdot,\star) = \NTK(\cdot,\star)$, the Neural Tangent Kernel (NTK) function, a determinisitc kernel function. Therefore, $F_0(\xxy)$ can be described by the following function as the width appraoches infinity,
\begin{align}\label{eq:supp:MNK:G}
    \lim_{l \rightarrow \infty} F_0(\xxy) =  G_\NTK^\tau(\xxy) =  \NTK(X,X')\widetilde{T}^{\lambda}_\NTK (X',\tau)   Y'.
\end{align}
where $\widetilde{T}^{\lambda}_\NTK (\cdot,\tau) \coloneqq \NTK(\cdot,\cdot)^{-1}(I-e^{-\lambda \NTK(\cdot,\cdot) \tau}) $. Besides,  $G_\NTK^{\tau}(\XXY) = (G_\NTK^{\tau}(\xxyi))_{i=1}^N$.

In addition, from Lemma \ref{lemma:kernel-convergence}, we know $\lim_{l \rightarrow \infty}\metantk_0(\cdot,\star)=\metaNTK(\cdot,\star)$. Combined this with (\ref{eq:supp:MNK:G}), we can express (\ref{eq:thm:MNK:F_t:finite-width}) in the infinite width limit as
\begin{align}
F_t(\xxy)= G_\NTK^{\tau}(\xxy) +\metaNTK((\xx),(\XX)) \T^{\eta}_{\metaNTK}(t) \left(\Y-G_{\NTK}^{\tau}(\XXY)\right)
\end{align}
where $\metaNTK(\cdot,\star)$ is a kernel function that we name as Meta Neural Kernel function. The derivation of its expression shown in (\ref{eq:MetaNTK_ij=kernel:supp})-(\ref{eq:MetaNTK_1j:supp}) can be found in Appendix \ref{supp:global-convergence:lemma-proof:kernel-convergence}.
\end{proof}

\section{Generalization of Gradient-Based Meta-Learning}
\label{supp:generalization}

In this section, we prove a generalization bound on gradient-based meta-learning (GBML) with over-paramterized neural networks, corresponding to Theorem \ref{thm:gen-bound}, which is an informal version of Theorem \ref{thm:gen-bound:based-on-MetaNTK} shown below.

As stated in Sec. \ref{sec:generalization:gen-bound}, we consider \textit{stochastic} gradient descent (SGD) for GBML training, instead of (full-batch) gradient descent. In practice, the training of GBML usually uses SGD or its adaptive variants \cite{maml,reptile,alphaMAML}. Hence, our SGD setting is very natural for GBML. The specific SGD algorithm we are considering is shown in Algorithm \ref{algo:SGD-GBML}. Note that same as Algorithm \ref{alg:mamlsup}, we also present the case of 1-step inner-loop meta-adaptation (i.e., $\tau = 1$ in \eqref{eq:meta-adaption-descrete}) for simplicity.

\begin{algorithm}[t!]
\caption{MAML for Few-Shot Learning (SGD Version)}
\label{algo:SGD-GBML}
\begin{algorithmic}[1]
{\footnotesize
\REQUIRE $\{\task_i\}_{i=1}^N$: Training tasks with a \textit{random order}
\REQUIRE $\eta$, $\lambda$: Learning rate hyperparameters
\STATE Randomly initialize $\theta_0$
  \FORALL{$i = 1,2,...,N$} 
    \STATE Evaluate the loss of $f_{\theta_{i-1}}$ on support samples of $\task_i$: $ \ell(f_{\theta_{i-1}}(X_i'),Y_i')$
    \STATE Compute adapted parameters $\theta_{i-1}'$ with gradient descent: $\theta_{i-1}'=\theta_{i-1}-\lambda \nabla_{\theta_{i-1}} \ell(f_{\theta_{i-1}}(X_i'),Y_i')$
    \STATE Evaluate the loss of $f_{\theta_{i-1}'}$ on query samples of $\task_i$: $\ell(f_{\theta_{i-1}'}(X_i), Y_i)$
     \STATE Update parameters with gradient descent:\\
    $\theta_i \leftarrow \theta_{i-1} - \eta \nabla_{\theta_{i-1}} \ell ( f_{\theta_{i-1}'}(X_i),Y_i)$ 
    
 \ENDFOR
 \STATE \textbf{Output:} Randomly choose $\hat \theta$ uniformly from $\{ \theta_0 ,\ldots, \theta_{N-1} \}$.

}
\end{algorithmic}
\end{algorithm}

For simplicity of derivations, we consider a parameter initialization scheme slightly different from the one in Appendix \ref{supp:NTK-setup}. 
Specifically, at initialization, for each hidden layer $i=1,2,...,L$, the parameters of this layer, $\theta^{(i)}$, has i.i.d. entries drawn from a normal distribution $N(0,2/l_i)$, where $l_i$ is the width of layer $i$. At the last layer, i.e., $(L+1)$-th layer, the normal distribution changes to $N(0,1/l_L)$, since the last layer has no activation function. Besides, same as Theorem \ref{thm:global-convergence}, we assume the width of every hidden layer is the same for convenience, i.e., $l=l_1=l_2=...=l_L$.

We make several assumptions, including Assumptions \ref{assum:non-degeneracy}, \ref{assum:same-width}, \ref{assum:activation}, \ref{assum:full-rank} from Appendix \ref{supp:global-convergence}, and the following Assumption \ref{assum:input-normalization} on input data. Note that this Assumption \ref{assum:input-normalization} can be relaxed to $c_1 \leq \|x\|_2 \leq c_2$ for some $c_2 > c_1 > 0$, following \cite{cao2020generalization}.

\begin{assumption}[Input Normalization] \label{assum:input-normalization}For any single sample $x$ in any training or test tasks, $\|x\|_2 = 1$.
\end{assumption}

By the definition of task distribution $\mathscr{P}$ in Sec. \ref{sec:generalization:gen-bound}, we can define the population loss (i.e., generalization error) of GBML on the task distribution $\mathscr{P}$.
\begin{definition}[Population Loss, i.e., Generalization Error]\label{def:pop-loss}
Suppose $\mathscr{P}$ is the task distribution of interest, the population loss for GBML function $F$ with parameteres $\theta$ is defined as,
\begin{align}\label{eq:def-population-loss}
     \loss_{\mathscr{P}}(\theta)\triangleq \underset{\substack{\task \sim \mathscr{P}\\(\xyxy)\sim \mathbb{P}_\task }} {\E}\left[ ~\ell(F_\theta(\xxy),Y)~\right]~,
\end{align}
where $\ell(\cdot,\ast)=\frac 1 2\|\cdot - \ast\|^2_2$ is the $\ell_2$ loss function, and the expectation is  
taken respect to the random sampling of tasks and the random sampling
of input data. 
\end{definition}

Below, we define a function class of meta neural random features (MetaNRF). MetaNRF can be viewed as a reference function class
to estimate the ``realizability'' of the tasks of interest, i.e., how easy or hard the tasks can be resolved by GBML with over-parameterized DNNs. In addition, we also define an auxiliary MetaNRF function class.

\begin{definition}[Meta Neural Random Feature] 
Let $\theta_0$ be a randomly initialized network parameters. The function class of meta neural random feature (MetaNRF) is defined as 
\begin{align}
    \F(\theta_0,R)=\left\{h(\ast)=F_{\theta_0}(\ast)+\nabla_{\theta} F_{\theta_0}(\ast) \cdot \theta ~\middle|~ \theta \in \B(\mathbf{0}, R\cdot l^{-1/2})\right\}, 
\end{align}
where $R>0$ can be viewed as the radius of the MetaNRF function class, and $\B$ denotes the parameter neighborhood, defined as
\begin{align}
    \B(\theta,r)\coloneqq \left\{\theta' \middle| \|{\theta'}-\theta\| \leq r\right\}
\end{align}

Similarly, we define a new function class, the Auxiliary MetaNRF function class, $\wt \F$, such that
\begin{align}
    \wt \F(\theta_0,R)=\left\{h(\ast)=\nabla_{\theta} F_{\theta_0}(\ast) \cdot \theta ~\middle| ~\theta \in \B(\mathbf{0}, R\cdot l^{-1/2})\right\}.
\end{align}

\end{definition}

Below, we define some auxiliary notation for convenience.

\begin{definition}[Auxiliary Notation] For any task $\task = (\xyxy)$, we adopt the following notation from \eqref{eq:gen:def-terms}.

\begin{align}\label{eq:gen:def-terms'}
\left\{
\begin{aligned}
\wt Y &= Y - F_{\theta_0}(\xxy)\\
\wt \Y &= \Y - F_{\theta_0}(\XXY)\\
\widetilde F_\theta(\cdot)&=F_\theta(\cdot) - F_{\theta_0}(\cdot)
\end{aligned}
\right.
\end{align}
In addition, we define 
$$\wt X = (X,X',Y'),$$
so that or any function $h$ (e.g., $h$ could be $F_\theta$ or $\wt F_\theta$), we have
$$h(\wt X) = h(X,X',Y') ~.$$
\end{definition}

Below, we provide a generalization bound on GBML based upon the MetaNRF function class. 

\begin{theorem}[Generalization Bound Based on MetaNRF]\label{thm:gen-bound:based-on-radius}
For arbitrary $\delta \in (0,e^{-1}]$ and $R>0$, there exists 
\begin{align}
    l^*(\delta,R,L,N,n)= \wt \cO (\poly(R,L)) \poly(N,n) \log(1/\delta),
\end{align}
such that if the width $l$ satisfies $l\geq l^*(\delta,R,L,N,n)$, then with learning rate $\eta=\frac{\kappa R}{l\sqrt{Nn}}$ for some small constant $\kappa >0$, the output of Algorithm \ref{algo:SGD-GBML} satisfies the following inequality with probability at least $1-\delta$ over random initialization,
\begin{align}\label{eq:gen-bound:radius:0}
    \loss_{\mathscr{P}} \coloneqq \E[\loss_{\mathscr{P}}(\hat \theta)] \leq \inf_{h \in \wt F(\theta_0,R)}\left \{\frac{1}{N}\sum_{i=1}^N\ell\left(h(\wx_i), \wy_i\right)  \right\} + \cO\left(\frac{(L+1)R}{\sqrt{Nn}} + \sqrt{\frac{\log (1/\delta)}{N}}\right),
\end{align}
where the expectation is over a uniform draw of $\hat \theta$ from $\{\theta_i\}_{i\in[N]}$ demonstrated in Algorithm \ref{algo:SGD-GBML}, and also an uniform draw of the task-parameter $\gamma$ with the input datasets.
\end{theorem}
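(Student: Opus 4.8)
\textbf{Proof plan for Theorem \ref{thm:gen-bound:based-on-radius}.} The argument follows the online-learning route to generalization bounds for over-parameterized networks (as in \citet{cao2020generalization}), adapted to the bi-level GBML objective. The first move is to pass to centered coordinates: for the $\ell_2$ loss, $\ell(F_\theta(\wx_i),Y_i)=\tfrac12\|\wt F_\theta(\wx_i)-\wy_i\|_2^2=\ell(\wt F_\theta(\wx_i),\wy_i)$ with $\wt F_{\theta_0}\equiv 0$, so the SGD of Algorithm \ref{algo:SGD-GBML} is exactly SGD on $\theta\mapsto\ell(\wt F_\theta(\wx_i),\wy_i)$ started from the zero function, and the comparator class $\wt\F(\theta_0,R)$ is precisely the neural-tangent linearization of $\wt F_\theta$ over $\B(\theta_0,Rl^{-1/2})$. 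I would then split the proof into three parts: (i) control of the SGD trajectory and the linearization error; (ii) an online-gradient-descent regret bound against any $h\in\wt\F(\theta_0,R)$; (iii) an online-to-batch conversion.

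\textbf{Part (i): locality and near-linearity.} Using Lemma \ref{lemma:local-Liphschitzness} --- $\|J(\theta)\|_F\le K\sqrt l$ and $\|J(\theta)-J(\bar\theta)\|_F\le K\sqrt l\,\|\theta-\bar\theta\|_2$ on a radius-$Cl^{-1/2}$ ball --- together with a bounded-residual estimate in the style of Lemma \ref{lemma:bounded_init_loss}, I would run an induction over the step index $i$ showing that, for $\eta=\kappa R/(l\sqrt{Nn})$ with $\kappa$ small and $l$ large, every iterate satisfies $\theta_i\in\B(\theta_0,Rl^{-1/2})$ and every per-step loss stays bounded: each step has $\|\theta_i-\theta_{i-1}\|_2\le\eta\|J(\theta_{i-1})\|_{op}\cdot\wt{\cO}(1)$, and the $N$ increments sum below $Rl^{-1/2}$. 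On this ball, integrating the Jacobian Lipschitz bound yields the value error $\|\wt F_\theta(\wx)-\nabla_\theta F_{\theta_0}(\wx)(\theta-\theta_0)\|_2=\wt{\cO}(R^2/\sqrt l)$ and the gradient error $\|\nabla_\theta\wt F_\theta(\wx)-\nabla_\theta F_{\theta_0}(\wx)\|_{op}=\wt{\cO}(R)$, which is negligible next to $\|\nabla_\theta F_{\theta_0}(\wx)\|_{op}$ (of order $\sqrt l$). Consequently the surrogate losses $\tilde\ell_i(\theta)\coloneqq\ell(\nabla_\theta F_{\theta_0}(\wx_i)(\theta-\theta_0),\wy_i)$ are convex in $\theta$, agree with the true losses along the trajectory up to $\wt{\cO}(R^2/\sqrt l)$, and have gradients matching the true stochastic gradients $g_{i-1}$ up to relative error $\wt{\cO}(R/\sqrt l)$.

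\textbf{Parts (ii)--(iii): regret and online-to-batch.} Fix $h\in\wt\F(\theta_0,R)$, write $h(\cdot)=\nabla_\theta F_{\theta_0}(\cdot)\theta^\circ$ with $\|\theta^\circ\|_2\le Rl^{-1/2}$, and set $\theta^\star=\theta_0+\theta^\circ$, so $\tilde\ell_i(\theta^\star)=\ell(h(\wx_i),\wy_i)$. Expanding $\|\theta_i-\theta^\star\|_2^2=\|\theta_{i-1}-\theta^\star\|_2^2-2\eta\langle g_{i-1},\theta_{i-1}-\theta^\star\rangle+\eta^2\|g_{i-1}\|_2^2$, substituting $g_{i-1}$ by $\nabla\tilde\ell_i(\theta_{i-1})$ up to the Part-(i) error, invoking convexity of $\tilde\ell_i$, telescoping over $i\in[N]$, and using $\|g_{i-1}\|_2^2\le\|J(\theta_{i-1})\|_{op}^2\cdot\wt{\cO}(1)=\wt{\cO}((L+1)l)$ (the depth factor from the sharp neural-tangent-feature norm bound), the two regret terms $\|\theta^\circ\|_2^2/(2\eta N)$ and $(\eta/2N)\sum_i\|g_{i-1}\|_2^2$ are each $\cO((L+1)R/\sqrt{Nn})$ with the stated $\eta$. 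Adding back the value-approximation error (forced below $R/\sqrt{Nn}$ by the lower bound on $l$) gives
\begin{align*}
\frac1N\sum_{i=1}^N\ell\big(\wt F_{\theta_{i-1}}(\wx_i),\wy_i\big)\;\le\;\inf_{h\in\wt\F(\theta_0,R)}\frac1N\sum_{i=1}^N\ell\big(h(\wx_i),\wy_i\big)+\cO\!\Big(\tfrac{(L+1)R}{\sqrt{Nn}}\Big).
\end{align*}
Finally, since $\theta_{i-1}$ is a function of $\task_1,\dots,\task_{i-1}$ only, $\E_{\task_i}[\ell(\wt F_{\theta_{i-1}}(\wx_i),\wy_i)\mid\theta_{i-1}]=\loss_{\mathscr{P}}(\theta_{i-1})$, so the differences form a bounded martingale-difference sequence; the online-to-batch inequality of \citet{cesa2004generalization} then gives $\frac1N\sum_i\loss_{\mathscr{P}}(\theta_{i-1})\le\frac1N\sum_i\ell(\wt F_{\theta_{i-1}}(\wx_i),\wy_i)+\cO(\sqrt{\log(1/\delta)/N})$ with probability $\ge1-\delta$. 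Since $\E[\loss_{\mathscr{P}}(\hat\theta)]=\frac1N\sum_i\loss_{\mathscr{P}}(\theta_{i-1})$ for the uniformly drawn $\hat\theta$, chaining the last two displays yields \eqref{eq:gen-bound:radius:0}; the width threshold $l^*(\delta,R,L,N,n)=\wt{\cO}(\poly(R,L))\poly(N,n)\log(1/\delta)$ is whatever makes the high-probability events of Lemma \ref{lemma:local-Liphschitzness}, the bounded-residual estimate, and the kernel-convergence statement hold simultaneously (union bound) and pushes all $\wt{\cO}(R^2/\sqrt l)$ terms below $R/\sqrt{Nn}$.

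\textbf{Main obstacle.} The delicate point is the coupled induction in Part (i): one must simultaneously maintain that the iterates never escape $\B(\theta_0,Rl^{-1/2})$ (where alone the linearization, hence near-convexity, is valid), that the residuals --- and hence the per-step losses and stochastic gradients --- remain bounded (needed both for the step-size accounting and for the boundedness hypothesis of the martingale concentration), and that the cumulative linearization error stays below the target rate; each of these three facts is used in proving the others, so they have to be closed off together. A secondary technical issue specific to GBML is obtaining the depth dependence exactly $(L+1)$: the relevant Jacobian $\nabla_\theta F_{\theta_0}$ already folds in $\tau$ inner-loop adaptation steps (cf.\ \eqref{eq:lemma:jacobian:F-grad:-2}), so the sharp feature-norm bound $\|\nabla_\theta F_{\theta_0}(\cdot)\|_F^2=\wt{\cO}((L+1)l)$ must be re-derived in the bi-level setting rather than quoted from the supervised case.
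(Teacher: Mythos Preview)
Your proposal is correct and follows essentially the same route as the paper: the paper packages your Part (i) into Lemmas \ref{lemma:F-linear-in-weight-near-init}--\ref{lemma:grad-norm-bounded-near-init} (almost-linearity of $F_\theta$, almost-convexity of $\loss_i$, bounded gradient norm near initialization), your Part (ii) into Lemma \ref{lemma:bound-on-cum-loss} (cumulative-loss bound, proved exactly via the OGD telescoping you wrote out, with the choice $\epsilon=\tfrac{(L+1)R}{\sqrt{2\nu Nn}}$), and your Part (iii) via the same Proposition 1 of \citet{cesa2004generalization}. The only cosmetic difference is that the paper first bounds $\tfrac1N\sum_i\loss_i(\theta^*)$ for the actual (nonlinear) loss at the comparator $\theta^*=\theta_0+\theta^\circ$ and then, as a separate step, replaces $\loss_i(\theta^*)$ by $\ell(h(\wx_i),\wy_i)$ using Lemma \ref{lemma:F-linear-in-weight-near-init}, whereas you linearize first and run the regret argument on the surrogate $\tilde\ell_i$; both orderings give the same bound.
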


The population loss bound, (\ref{eq:gen-bound:radius:0}), consists of two terms. The first one relates the training loss by Algorithm \ref{algo:SGD-GBML} with the Auxiliary MetaNRF function class, while the second term represents the standard large-deviation error.

Note that the generalization bound in Theorem \ref{thm:gen-bound:based-on-radius} depends on the radius of the MetaNRFT function class (i.e., $R$). Below, we derive a generalization bound based on Meta Neural Kernel of Theorem \ref{thm:MNK}, shown in Theorem \ref{thm:gen-bound:based-on-MetaNTK}. The bound in Theorem \ref{thm:gen-bound:based-on-MetaNTK} can be viewed as the bound in Theorem \ref{thm:gen-bound:based-on-radius} with radius as $R=\widetilde {\mathcal O} (\sqrt{\widetilde{\Y}^\top \metaNTK^{-1} \widetilde{\Y}})$, while other terms are proved to be negligible with this choice of $R$.

\begin{theorem}[Generalization Bound Based on Meta Neural Kernel]\label{thm:gen-bound:based-on-MetaNTK}
Denote $\lev=\lev(\metaNTK)$ as the least eigenvalue of the Meta Neural Kernel $\metaNTK$. For arbitrary $\delta \in (0,\frac{1}{e}])$, there exists $l^*(\delta,L,N,n,\lev)$ such that if the width $l$ satisfies $l \geq l^*(\delta,L,N,n,\lev)$, then with learning rate $\eta = \cO(\frac{\kappa}{l} \sqrt{\frac{{\wt \Y}^\top \metaNTK^{-1} \wt \Y}{Nn}})$ for some small constant $\kappa > 0$, the output of Algorithm \ref{algo:SGD-GBML} satisfies the following generalization bound with probability at least $1-\delta$,
\begin{align}\label{eq:gen-bound:metaNTK:0}
    \loss_{\mathscr{P}} \coloneqq \E[\loss_{\mathscr{P}}(\hat \theta)] \leq
    \wt \cO\left((L+1)\sqrt{\frac{{\wt{\Y}_{\scriptscriptstyle{G}}}^\top \metaNTK^{-1} \wt {\Y}_{\scriptscriptstyle{G}}}{Nn}}\right)+
    \cO\left(\sqrt{\frac{\log (1/\delta)}{N}}\right),
\end{align}
where the expectation is over a uniform draw of $\hat \theta$ from $\{\theta_i\}_{i\in[N]}$ demonstrated in Algorithm \ref{algo:SGD-GBML}, and $\wt \Y$ is defined by \eqref{eq:def:Y_G}.
\end{theorem}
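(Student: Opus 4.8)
The plan is to derive Theorem~\ref{thm:gen-bound:based-on-MetaNTK} from Theorem~\ref{thm:gen-bound:based-on-radius} by choosing the radius $R$ so that the approximation term in \eqref{eq:gen-bound:radius:0} vanishes, and then reading the kernel-dependent bound off the remaining large-deviation term $\mathcal{O}\big((L+1)R/\sqrt{Nn}+\sqrt{\log(1/\delta)/N}\big)$. Everything hinges on finding an $R$ for which the Auxiliary MetaNRF class $\wt \F(\theta_0,R)$ already interpolates the shifted training data $\{(\wx_i,\wy_i)\}_{i=1}^N$, where $\wy_i=\wt Y_i = Y_i - F_{\theta_0}(\xxyi)$.

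First I would exhibit an explicit interpolant. Every $h\in\wt \F(\theta_0,R)$ has the form $h(\ast)=\nabla_\theta F_{\theta_0}(\ast)\cdot\theta$ with $\|\theta\|_2\le Rl^{-1/2}$, so requiring $h(\wx_i)=\wy_i$ for all $i$ is the linear system $J(\theta_0)\theta=\wt\Y$ with $J(\theta_0)=\nabla_\theta F_{\theta_0}(\XXY)$. In the over-parameterized regime $D\ge knN$, and by Assumption~\ref{assum:full-rank} together with Lemma~\ref{lemma:kernel-convergence}, the matrix $J(\theta_0)J(\theta_0)^\top$ is invertible with high probability, so the minimum-norm solution $\theta^\star=J(\theta_0)^\top\big(J(\theta_0)J(\theta_0)^\top\big)^{-1}\wt\Y$ has $\|\theta^\star\|_2^2=\wt\Y^\top\big(J(\theta_0)J(\theta_0)^\top\big)^{-1}\wt\Y=\tfrac{1}{l}\,\wt\Y^\top\metantk_0^{-1}\wt\Y$, using $\metantk_0=\tfrac{1}{l}J(\theta_0)J(\theta_0)^\top$. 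The associated $h^\star$ interpolates the shifted data \emph{exactly} at finite width, so the infimum term in \eqref{eq:gen-bound:radius:0} is $0$ as soon as $R\ge\sqrt{\wt\Y^\top\metantk_0^{-1}\wt\Y}$.

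Next I would pass to the infinite-width quantities. By Lemma~\ref{lemma:kernel-convergence}, $\metantk_0\to\metaNTK$ with $0<\lev(\metaNTK)\le\Lev(\metaNTK)<\infty$, so for $l$ large enough $\|\metantk_0^{-1}-\metaNTK^{-1}\|$ is small and $\wt\Y^\top\metantk_0^{-1}\wt\Y\le(1+o(1))\,\wt\Y^\top\metaNTK^{-1}\wt\Y$; I would also replace $\wt\Y=\Y-F_{\theta_0}(\XXY)$ by $\wt\Y_{\scriptscriptstyle G}=\Y-G_\NTK^\tau(\XXY)$, since under Assumption~\ref{assum:small-init} and by \eqref{eq:supp:MNK:G} we have $F_{\theta_0}(\XXY)\to G_\NTK^\tau(\XXY)$, making $\|\wt\Y-\wt\Y_{\scriptscriptstyle G}\|$ lower order. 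Taking $R=\widetilde{\mathcal{O}}\big(\sqrt{\wt\Y_{\scriptscriptstyle G}^\top\metaNTK^{-1}\wt\Y_{\scriptscriptstyle G}}\big)$, which is consistent with $\eta=\Theta(\kappa R/(l\sqrt{Nn}))$ and hence with the stated learning rate, and substituting into the second term of \eqref{eq:gen-bound:radius:0} gives $\mathcal{O}\big((L+1)R/\sqrt{Nn}\big)=\widetilde{\mathcal{O}}\big((L+1)\sqrt{\wt\Y_{\scriptscriptstyle G}^\top\metaNTK^{-1}\wt\Y_{\scriptscriptstyle G}/(Nn)}\big)$, which together with the $\mathcal{O}(\sqrt{\log(1/\delta)/N})$ term is exactly \eqref{eq:gen-bound:metaNTK:0}. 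A union bound over the high-probability events of Lemmas~\ref{lemma:bounded_init_loss} and~\ref{lemma:kernel-convergence} and of Theorem~\ref{thm:gen-bound:based-on-radius}, with $\delta$ rescaled by a constant, closes the argument; the width threshold $l^\ast(\delta,L,N,n,\lev)$ absorbs the widths required by all these approximations.

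The main obstacle is making the finite-width corrections quantitative and verifying they remain lower order. Concretely, one must (i) lower-bound $\lev(\metantk_0)$ uniformly, not merely use $\lev(\metaNTK)>0$, and bound $\|\metantk_0-\metaNTK\|_F$ at a rate in $l$ fast enough to dominate the other error terms; (ii) show that replacing $\wt\Y$ by $\wt\Y_{\scriptscriptstyle G}$ perturbs $R$ only within $\widetilde{\mathcal{O}}(\cdot)$; and (iii) check that, once $R$ is expressed through $\metaNTK$ (hence $O(1)$ in $l$), the membership $\theta^\star\in\B(\mathbf 0,Rl^{-1/2})$ still holds and the hypothesis $l\ge l^\ast(\delta,R,L,N,n)$ of Theorem~\ref{thm:gen-bound:based-on-radius} stays consistent. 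Each of these is a quantitative refinement of statements already available from Lemma~\ref{lemma:kernel-convergence} and the proof of Theorem~\ref{thm:MNK}, so the difficulty is careful bookkeeping rather than a new idea.
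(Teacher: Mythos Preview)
Your proposal is correct and follows essentially the same route as the paper: construct the minimum-norm solution to $J(\theta_0)\theta=\wt\Y$ (the paper does this via an SVD of $\tfrac{1}{\sqrt l}J(\theta_0)^\top$, you via the pseudoinverse formula, which are equivalent), observe that its norm is $l^{-1/2}\sqrt{\wt\Y^\top\metantk_0^{-1}\wt\Y}$, pass to $\metaNTK$ by a matrix-perturbation bound, set $R=\widetilde{\mathcal O}(\sqrt{\wt\Y^\top\metaNTK^{-1}\wt\Y})$ so the infimum term in Theorem~\ref{thm:gen-bound:based-on-radius} vanishes, and finally swap $\wt\Y$ for $\wt\Y_{\scriptscriptstyle G}$ using $F_{\theta_0}(\XXY)\to G_\NTK^\tau(\XXY)$. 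The obstacles you flag (uniform lower bound on $\lev(\metantk_0)$, quantitative control of $\|\metantk_0^{-1}-\metaNTK^{-1}\|$, and consistency of $R$ with the width threshold) are exactly the bookkeeping the paper handles, with the paper making the perturbation step explicit as $\|\metantk_0^{-1}-\metaNTK^{-1}\|\le \wt\Y^\top\metaNTK^{-1}\wt\Y/\|\wt\Y\|^2$ to obtain the clean factor of $2$ in $\|\hat\theta^\star\|^2\le 2\,\wt\Y^\top\metaNTK^{-1}\wt\Y$.
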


For the bound above, i.e., (\ref{eq:gen-bound:metaNTK:0}), the first term is the dominating term, while the second one is just a standard deviation error term that can be ignored. 

Now, let us discuss the dominating term of (\ref{eq:gen-bound:metaNTK:0}),
\begin{align}\label{eq:gen-bound:complexity-measure}
    (L+1)\sqrt{\frac{{\wt \Y}^\top \metaNTK^{-1} \wt \Y}{Nn}}.
\end{align}
\textbf{Remarks.} This term can be seen as a \textit{data-dependent} \textit{complexity measure} of tasks, which can be used to predict the test loss of GBML with DNNs. One of the advantages of the bound is that it is \textit{data-dependent}, i.e., the complexity measure in (\ref{eq:gen-bound:complexity-measure}) can be directly calculated from data of tasks, $\{\task_i={\xyxyi}\}_{i\in[N]}$, without the need to construct a neural networks or to assume a ground-truth data generating model.

\subsection{Proof of Theorem \ref{thm:gen-bound:based-on-radius}}

The proof of Theorem \ref{thm:gen-bound:based-on-radius} depends on several lemmas. Below, we present these lemmas first, then demonstrate the proof of Theorem \ref{thm:gen-bound:based-on-MetaNTK}. The proofs of these lemmas are in Appendix \ref{supp:generalization:helper-lemmas-proof}. 

\begin{lemma}[Meta-Output is Almost Linear in Weights around Initialization]\label{lemma:F-linear-in-weight-near-init} There exists a constant $\kappa>0$, $l^*(\kappa,L,N,n)$, and $\lambda^*(\kappa,L,N,n)$ such that, if the width $l$ satisfies $l\geq l^*(\kappa,L,N,n)$ and the learning rate $\lambda$ for meta-adaption satisfies $\lambda \leq \lambda^*(\kappa,L,N,n)$, then for any $ i \in [N]$ and any parameters $\theta, \Bar \theta \in \B(\theta_0, \omega)$ with $\omega \leq \kappa (L+1)^{-6} l^{-\frac 1 2}$, the following holds uniformly with probability at least $1-\cO(n (L+1)^2) \cdot \exp(-\Omega(\poly(l,\omega, L+1))$, 
\begin{align} \label{eq:linear-in-weights}
    \|F_\theta(\wt X)-F_{\Bar \theta}(\wt X) - \nabla_{\theta} F_\theta(\wt X)\cdot (\Bar \theta -\theta)\| \leq \cO(\omega^{1/3} (L+1)^2 \sqrt{l \log (l)}) \|\Bar \theta - \theta\| 
\end{align}
and similarly, since $\wF_{\theta}(\wx)$ is just $\F_{\theta}(\wx)$ shifted by a term independent of $\theta$, we have
\begin{align} \label{eq:linear-in-weights:1}
    \|\wF_\theta(\wt X)-\wF_{\Bar \theta}(\wt X) - \nabla_{\theta} \wF_\theta(\wt X)\cdot (\Bar \theta -\theta)\| \leq \cO(\omega^{1/3} (L+1)^2 \sqrt{l \log (l)}) \|\Bar \theta - \theta\|.
\end{align}
\end{lemma}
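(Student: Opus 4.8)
The plan is to reduce the statement about the meta-output $F_\theta$ to a known linearization estimate for an \emph{ordinary} feed-forward network output $f_\theta$, and then propagate that estimate through the closed-form expression for $F_\theta$ in terms of $f_\theta$ and the empirical NTK. Recall from \eqref{eq:lemma:jacobian:0} that for parameters near initialization the meta-output decomposes as
\begin{align}
F_\theta(\wt X) = f_\theta(X) - \ntk_\theta(X,X')\,\ntk_\theta^{-1}\,\bigl(I - e^{-\lambda \ntk_\theta \tau}\bigr)\bigl(f_\theta(X') - Y'\bigr),\nonumber
\end{align}
where $\ntk_\theta = \ntk_\theta(X',X')$. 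So $F_\theta$ is a smooth (and, crucially, uniformly bounded in the relevant norms by Step I of the proof of Lemma \ref{lemma:helper-local-Liphschitzness}) function of the quantities $f_\theta(X)$, $f_\theta(X')$, $\nabla_\theta f_\theta(X)$ and $\nabla_\theta f_\theta(X')$, all evaluated at the same $\theta$. First I would invoke the standard fully-connected network linearization bound (the version used in \cite{cao2020generalization,cao2019generalization}, essentially Theorem 5 / Lemma B.3 there, stated for the $N(0,2/l)$ hidden-layer and $N(0,1/l_L)$ output-layer initialization used in this appendix): for $\omega \le \kappa(L+1)^{-6} l^{-1/2}$ and $\theta,\bar\theta \in \B(\theta_0,\omega)$, one has, with the claimed high probability,
\begin{align}
\|f_\theta(X') - f_{\bar\theta}(X') - \nabla_\theta f_\theta(X')\cdot(\bar\theta - \theta)\| \le \cO\bigl(\omega^{1/3}(L+1)^2\sqrt{l\log l}\bigr)\,\|\bar\theta - \theta\|,\nonumber
\end{align}
and the analogous bound for $X$ in place of $X'$, plus the Jacobian perturbation bound $\|\nabla_\theta f_\theta(\cdot) - \nabla_\theta f_{\bar\theta}(\cdot)\|_F \le \cO(\omega^{1/3}\sqrt{l\log l})\cdot\|\cdot\| + \cO(\sqrt{l})$-type control, i.e.\ the facts already used to get \eqref{eq:lee:jocobian:1} and in Lemma \ref{lemma:supp:local-Liphschitzness}.

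The second step is to carry these estimates through the algebraic expression for $F_\theta$. Write $F_\theta(\wt X) = \Phi\bigl(f_\theta(X), f_\theta(X'), \nabla_\theta f_\theta(X), \nabla_\theta f_\theta(X')\bigr)$ for the map $\Phi$ implicit in \eqref{eq:lemma:jacobian:0}, and note that the corresponding Jacobian $\nabla_\theta F_\theta(\wt X)$ was computed in the proof of Lemma \ref{lemma:supp:local-Liphschitzness} (equations \eqref{eq:lemma:jacobian:F-grad}--\eqref{eq:lemma:jacobian:F-grad:-2}) to equal, up to a vanishing term, $\nabla_\theta f_\theta(X)\,e^{-\lambda\conjntk_\theta\tau}$ plus the contribution coming from differentiating $\ntk_\theta(X,X')\ntk_\theta^{-1}(I-e^{-\lambda\ntk_\theta\tau})$ against $(f_\theta(X')-Y')$, which is also shown to be negligible. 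Using these, the linearization error
$$\|F_\theta(\wt X) - F_{\bar\theta}(\wt X) - \nabla_\theta F_\theta(\wt X)\cdot(\bar\theta - \theta)\|$$
splits into (i) the direct contribution from $f_\theta(X)$, bounded immediately by the network linearization bound; (ii) the contribution from $f_\theta(X') - Y'$, which is bounded using the network linearization bound for $f_\theta(X')$ together with the uniform boundedness of the prefactor $\ntk_\theta(X,X')\ntk_\theta^{-1}(I-e^{-\lambda\ntk_\theta\tau})$ established in Lemma \ref{lemma:helper-local-Liphschitzness}; and (iii) the contribution from the $\theta$-dependence of the prefactor itself, which is controlled using the Jacobian/kernel perturbation bounds $\|\ntk_\theta - \ntk_{\bar\theta}\|_F \le \cO(\omega^{1/3}\sqrt{l\log l})\|\bar\theta-\theta\|$ and the matrix-exponential perturbation bound from \cite{1977bounds}, together with the fact that $\lambda\tau = \cO(1)$ so the $\tau$ steps of inner adaptation do not blow up any constant. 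Each piece is $\cO(\omega^{1/3}(L+1)^2\sqrt{l\log l})\|\bar\theta-\theta\|$; summing the finitely many pieces (there are $\cO(n)$ rows, $n$ a fixed constant) preserves the rate and the failure probability is the union bound $\cO(n(L+1)^2)\exp(-\Omega(\poly(l,\omega,L+1)))$. Finally \eqref{eq:linear-in-weights:1} is immediate: $\wt F_\theta(\wt X) = F_\theta(\wt X) - F_{\theta_0}(\wt X)$ differs from $F_\theta$ by a $\theta$-independent shift, so it has the identical gradient $\nabla_\theta\wt F_\theta = \nabla_\theta F_\theta$ and the identical linearization remainder.

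The main obstacle, I expect, is term (iii): controlling the second-order behaviour of the \emph{composite} object $\ntk_\theta(X,X')\ntk_\theta^{-1}(I-e^{-\lambda\ntk_\theta\tau})$ as $\theta$ varies over $\B(\theta_0,\omega)$. This requires (a) a lower bound $\lev(\ntk_\theta) \ge c > 0$ that holds uniformly over the neighborhood — available from \cite{ntk,CNTK} for $\theta_0$ and extended to the neighborhood via the kernel perturbation bound, provided $\omega$ is small enough, which is where the $(L+1)^{-6}$ factor and the lower bound on $l^*$ get used — and (b) care that the inverse $\ntk_\theta^{-1}$ does not amplify the $\omega^{1/3}$-rate error when differentiated ($\nabla_\theta\ntk_\theta^{-1} = -\ntk_\theta^{-1}(\nabla_\theta\ntk_\theta)\ntk_\theta^{-1}$). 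The matrix-exponential term is handled by the Fréchet-derivative perturbation bound from \cite{1977bounds} exactly as in \eqref{eq:lemma:jacobian:grad_diff:1:2}, and because its eigenvalues lie in $[0,1)$ it contributes no extra amplification. Everything else is a bookkeeping exercise in the triangle inequality with uniformly bounded operator norms.
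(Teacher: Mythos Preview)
Your proposal is correct and follows essentially the same approach as the paper: invoke the Cao--Gu linearization bound for the base network $f_\theta$, expand $F_\theta$ via the NTK formula \eqref{eq:lemma:jacobian:0}, establish $\cO(1)$ operator-norm bounds and $\cO(\|\theta-\bar\theta\|)$ perturbation bounds for the factors $\ntk_\theta(X,X')$, $\ntk_\theta^{-1}$, and $I-e^{-\lambda\ntk_\theta\tau}$ (using \cite{1977bounds} for the exponential), and then combine via the triangle inequality. The paper's proof is slightly less careful about the rate in term~(iii) than you are --- it bounds the prefactor perturbation by $\cO(\|\theta-\bar\theta\|)$ rather than $\cO(\omega^{1/3}\sqrt{l\log l})\|\bar\theta-\theta\|$ --- but this is absorbed because $\omega^{1/3}(L+1)^2\sqrt{l\log l}\gtrsim 1$ in the regime of interest, so the Cao--Gu term dominates regardless.
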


\begin{lemma}[Loss is Almost Convex Near Initialization]\label{lemma:loss-convex-near-init}  
Define $\loss_i(\theta)\coloneqq \ell(F_\theta(\xxyi), \sY)$ as the loss of a GBML model with parameters $\theta$ on task $\task_i=(\xyxyi)$.
\begin{align}
    \loss_i (\wt \theta) \geq \loss_i(\theta) + \nabla_{\theta} \loss_i(\wt X,\theta)\cdot (\wt \theta -\theta) - \epsilon
\end{align}
\end{lemma}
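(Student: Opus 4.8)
The plan is to derive this ``almost convexity'' from two ingredients: the exact convexity of the square loss in its first argument, and the near-linearity of the meta-output $F_\theta$ in the weights around initialization, which is precisely the content of Lemma \ref{lemma:F-linear-in-weight-near-init}. Write $\loss_i(\theta) = \ell(F_\theta(\wx_i), \wy_i) = \tfrac12\|F_\theta(\wx_i) - \wy_i\|_2^2$, so that by the chain rule $\nabla_\theta \loss_i(\wt X, \theta) = (F_\theta(\wx_i) - \wy_i)^\top \nabla_\theta F_\theta(\wx_i)$.

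First I would use that $u \mapsto \tfrac12\|u - \wy_i\|_2^2$ is convex with gradient $u - \wy_i$, giving, for any $\theta, \wt\theta$,
$$\loss_i(\wt\theta) \;\ge\; \loss_i(\theta) + \big\langle F_\theta(\wx_i) - \wy_i,\; F_{\wt\theta}(\wx_i) - F_\theta(\wx_i)\big\rangle .$$
Next I would replace the function increment $F_{\wt\theta}(\wx_i) - F_\theta(\wx_i)$ by its first-order approximation $\nabla_\theta F_\theta(\wx_i)(\wt\theta - \theta)$. By Lemma \ref{lemma:F-linear-in-weight-near-init}, as long as $\theta, \wt\theta \in \B(\theta_0, \omega)$ with $\omega \le \kappa(L+1)^{-6} l^{-1/2}$, this incurs an error of order $\cO(\omega^{1/3}(L+1)^2\sqrt{l\log l})\|\wt\theta - \theta\|$, uniformly with the high probability stated there. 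Combining with Cauchy--Schwarz and identifying the resulting inner product with $\nabla_\theta\loss_i(\wt X,\theta)\cdot(\wt\theta-\theta)$ yields
$$\loss_i(\wt\theta) \;\ge\; \loss_i(\theta) + \nabla_\theta\loss_i(\wt X,\theta)\cdot(\wt\theta-\theta) \;-\; \|F_\theta(\wx_i) - \wy_i\|\cdot\cO\!\big(\omega^{1/3}(L+1)^2\sqrt{l\log l}\big)\,\|\wt\theta - \theta\| .$$

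It then remains to bound the residual term by $\epsilon$. Here I would use $\|\wt\theta - \theta\| \le 2\omega$ (both weights lie in $\B(\theta_0,\omega)$), and the fact that $\|F_\theta(\wx_i) - \wy_i\| \le \cO(1)$ for $\theta$ near $\theta_0$: this follows by combining Lemma \ref{lemma:bounded_init_loss} (bounded loss at initialization) with the local Lipschitzness of $\theta \mapsto F_\theta(\wx_i)$ around $\theta_0$, which is implicit in Lemma \ref{lemma:local-Liphschitzness} (and in \citet{lee2019wide}). Hence the residual is $\cO(\omega^{4/3}(L+1)^2\sqrt{l\log l})$, which is the value of $\epsilon$ one records; it can be driven below any target by taking $\omega$ small (equivalently, by the choice of learning rate $\eta$ that keeps the SGD iterates within a small neighborhood) together with $l$ large, consistently with the constraints already imposed in Theorem \ref{thm:gen-bound:based-on-radius}.

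The only genuine difficulty here is the bi-level structure: $\nabla_\theta F_\theta$ differentiates through the inner adaptation steps, so a priori $F_\theta$ need not be close to linear in $\theta$. But that difficulty is entirely absorbed by Lemma \ref{lemma:F-linear-in-weight-near-init}, so the present statement is essentially a corollary of it plus convexity of the square loss. The remaining care is purely quantitative --- tracking that the $\omega^{4/3}$-scaled residual, after multiplication by the $\cO(1)$ loss bound and the $(L+1)^2\sqrt{l\log l}$ factor, indeed stays below $\epsilon$ while respecting $\omega = \cO((L+1)^{-6}l^{-1/2})$.
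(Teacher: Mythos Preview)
Your proposal is correct and matches the paper's approach: the paper's proof simply says to combine Lemma~\ref{lemma:F-linear-in-weight-near-init} with the proof of Lemma~4.2 of \citet{cao2019generalization}, which is precisely the argument you spell out---convexity of the square loss in its first argument gives the first-order lower bound in terms of the function increment, and the near-linearity of $F_\theta$ in $\theta$ near initialization (Lemma~\ref{lemma:F-linear-in-weight-near-init}) reduces that increment to the linearized one up to an $\epsilon$-size residual controlled via Cauchy--Schwarz and the $\cO(1)$ bound on $\|F_\theta(\wx_i)-\wy_i\|$.
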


\begin{lemma}[Gradients Have Bounded Norm Near Initialization] \label{lemma:grad-norm-bounded-near-init}
For large enough width, i.e., $l\geq l^*$ for some $l^* \in \mathbb N_{+}$, there exists $\kappa>0$ such that for $\theta \in \B(\theta_0, \omega)$ with $\omega \leq \kappa (L+1)^{-6} l^{-\frac 1 2}$, the following holds uniformly with probability at least $1-\cO(NL^2)\cdot e^{-\Omega (l w^{2/3}(L+1))}$ over random initialization, 
\begin{align}
    \|\nabla_{\theta} F_\theta (\wx)\|_F, ~ \|\nabla_{\theta} F_\theta (\wx)\|_F \leq \cO(\sqrt l)
\end{align}
\end{lemma}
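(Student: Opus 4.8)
The plan is to bound the meta-output Jacobian directly through the chain rule of the $\tau$-step inner-loop adaptation, recycling the near-initialization over-parameterization estimates for the base network $f$ (gradient-norm and Hessian bounds, in the spirit of \cite{cao2019generalization,cao2020generalization,hessian-ntk}) rather than re-deriving them. Fix any $\theta\in\B(\theta_0,\omega)$ with $\omega\le\kappa(L+1)^{-6}l^{-1/2}$, and let $\theta^{(0)}=\theta,\theta^{(1)},\dots,\theta^{(\tau)}=\theta'$ be the inner-loop iterates of \eqref{eq:meta-adaption-descrete} run on the support set $(X',Y')$, so that $F_\theta(\wx)=f_{\theta'}(X)$. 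Writing $H_j := \nabla^2_{\theta^{(j)}}\ell(f_{\theta^{(j)}}(X'),Y')$, the chain rule gives $\nabla_\theta F_\theta(\wx)=\nabla_{\theta'}f_{\theta'}(X)\cdot M$, where $M$ is the product (in the appropriate order) of the $\tau$ inner-step Jacobians $I-\lambda H_0,\dots,I-\lambda H_{\tau-1}$. Moreover $\wF_\theta=F_\theta-F_{\theta_0}$ differs from $F_\theta$ only by a $\theta$-independent term, so $\nabla_\theta\wF_\theta=\nabla_\theta F_\theta$ and the two inequalities in the lemma are one statement.

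It then suffices to establish three bounds, each holding with probability $1-e^{-\Omega(l\omega^{2/3}(L+1))}$ by the base-network over-parameterization lemmas. \emph{(i) The inner trajectory stays near initialization:} telescoping the inner update, $\|\theta^{(j)}-\theta_0\|\le\omega+\lambda\sum_{i<j}\|\nabla_{\theta^{(i)}}f_{\theta^{(i)}}(X')^{\top}(f_{\theta^{(i)}}(X')-Y')\|$, and since $\|\nabla f(X')\|_F=\cO(\sqrt l)$ and $\|f(X')-Y'\|=\cO(\sqrt{mk})$ near initialization while $\lambda\tau=\cO(1)$, for $\lambda$ small enough (within the ranges used in Lemma \ref{lemma:F-linear-in-weight-near-init}) every $\theta^{(j)}$ stays in $\B(\theta_0,\cO(\mathrm{poly}(L,N,n))\,l^{-1/2})$, hence inside the good neighborhood. \emph{(ii) The leading factor is $\cO(\sqrt l)$:} applying the standard near-initialization gradient-norm bound at $\theta'$ gives $\|\nabla_{\theta'}f_{\theta'}(X)\|_F=\cO(\sqrt l)$. \emph{(iii) $\|M\|_{op}=\cO(1)$.} Multiplying, $\|\nabla_\theta F_\theta(\wx)\|_F\le\|\nabla_{\theta'}f_{\theta'}(X)\|_F\,\|M\|_{op}=\cO(\sqrt l)$, and a union bound over the $\cO(NL^2)$ events (one per layer, per sample, per task) under which the cited estimates can fail yields the stated failure probability $\cO(NL^2)\,e^{-\Omega(l\omega^{2/3}(L+1))}$.

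The crux is (iii), since $H_j$ is \emph{not} positive semidefinite: it is the sum of the Gauss--Newton term $\nabla_{\theta^{(j)}}f_{\theta^{(j)}}(X')^{\top}\nabla_{\theta^{(j)}}f_{\theta^{(j)}}(X')\succeq0$ and a residual-weighted combination of the per-output Hessians of $f$, so one cannot simply assert $\|I-\lambda H_j\|_{op}\le1$. The remedy is to split $H_j$ into these two parts: for $\lambda$ sufficiently small the Gauss--Newton factor $I-\lambda\,\nabla f(X')^{\top}\nabla f(X')$ lies between $0$ and $I$, while the curvature part has operator norm $o(\sqrt l)$ near initialization by the network-Hessian estimate of \cite{hessian-ntk}, so $\|I-\lambda H_j\|_{op}\le1+o(1)$ uniformly in $j$, and hence $\|M\|_{op}\le(1+o(1))^{\tau}=\cO(1)$ because $\tau=\cO(1/\lambda)$ and $\lambda$ is small; in the infinite-width limit this product collapses to an operator of norm $\le1$, recovering the clean bound $\|\nabla_\theta F_\theta(\wx)\|_F\le\|\nabla_{\theta'}f_{\theta'}(X)\|_F$ already exploited in the proof of Lemma \ref{lemma:supp:local-Liphschitzness}. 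A secondary, purely bookkeeping difficulty is to choose the smallness thresholds for $\lambda$ and $\omega$ and the width lower bound $l^*$ so that all cited finite-width estimates --- including Lemmas \ref{lemma:F-linear-in-weight-near-init} and \ref{lemma:loss-convex-near-init} --- hold simultaneously with the advertised probability.
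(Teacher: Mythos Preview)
Your route is different from the paper's and largely sound, but the comparison is instructive. The paper's proof is a one-liner: it reuses inequality \eqref{eq:lemma:jacobian:F-grad:norm:0}, already derived in the proof of Lemma~\ref{lemma:supp:local-Liphschitzness} via the continuous-time NTK representation and the SVD computation that collapses $\nabla_\theta F_\theta$ to $\nabla_\theta f_\theta(X)\,e^{-\lambda\conjntk_\theta\tau}$ with $\conjntk_\theta\succeq 0$, so the exponential factor has operator norm at most $1$ \emph{exactly}; then it cites Lemma~B.3 of \cite{cao2019generalization} for $\|\nabla_\theta f_\theta(X)\|_F=\cO(\sqrt l)$. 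You instead unroll the discrete $\tau$-step chain rule and bound the product $M=\prod_j(I-\lambda H_j)$ by splitting each $H_j$ into Gauss--Newton plus curvature. This is more hands-on and avoids the gradient-flow approximation, which is a genuine advantage.

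The soft spot in your argument is the step $\|M\|_{op}\le(1+o(1))^{\tau}=\cO(1)$. The curvature contribution to each factor is $\lambda\cdot o(\sqrt l)\cdot\cO(1)$; for this to be $o(1)$ you already need $\lambda=o(l^{-1/2})$, and even with $\lambda\asymp l^{-1}$ (so the Gauss--Newton part is contractive) the per-step slack is only $o(l^{-1/2})$. If $\tau$ is then taken as large as $\cO(1/\lambda)=\cO(l)$, the product $(1+o(l^{-1/2}))^{\cO(l)}$ need not stay bounded without a sharper Hessian rate. For fixed $\tau$ (the practical regime, and the one implicitly used throughout the paper) your bound is fine; but the justification ``because $\tau=\cO(1/\lambda)$ and $\lambda$ is small'' does not by itself close this. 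The paper's exponential form sidesteps the issue entirely, since $\|e^{-\lambda\conjntk_\theta\tau}\|_{op}\le 1$ holds for \emph{any} $\lambda,\tau>0$, at the cost of having already absorbed the curvature terms into Lemma~\ref{lemma:helper-local-Liphschitzness}. So: your approach buys directness and transparency about where the Hessian enters; the paper's buys a clean contraction factor that is robust to the $\lambda,\tau$ scaling.
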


\begin{lemma}[Bound on the Cumulative Loss]\label{lemma:bound-on-cum-loss}
$\forall \delta,\epsilon, R > 0$, there exists a critical width 
\begin{align}
    l^*(\epsilon, \delta, R, L) = \wt \cO (\poly(R,L)) \frac{\log(1/\delta)}{\poly(\epsilon)}
\end{align}
such that: if the width is $l \geq l^*$, learning rate is $\eta = \frac{\nu \epsilon }{(L+1)l}$, and define $\nu=\frac{(L+1)^2 R^2}{2 \nu \epsilon^2}$, then for any parameters $\theta^* \in \B(\theta_0, l^{-1/2} R)$, the following bound on the cumulative loss holds true with probability at least $1-\delta$ over random initialization,
\begin{align}
    \frac 1 N \sum_{i=1}^N \loss_i (\theta_{i-1}) \leq \frac{1}{N}\sum_{i=1}^N \loss(\theta^*) + 3\epsilon
\end{align}
\end{lemma}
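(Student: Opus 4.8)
The plan is to run the standard online-learning / regret-analysis argument for SGD, but localized to the neighborhood $\B(\theta_0, l^{-1/2}R)$ where the almost-convexity and bounded-gradient estimates of Lemmas \ref{lemma:F-linear-in-weight-near-init}, \ref{lemma:loss-convex-near-init}, and \ref{lemma:grad-norm-bounded-near-init} hold. First I would fix a comparator $\theta^* \in \B(\theta_0, l^{-1/2}R)$ and track the potential $\Phi_i \coloneqq \tfrac12\|\theta_{i-1} - \theta^*\|_2^2$. Expanding along the SGD update $\theta_i = \theta_{i-1} - \eta \nabla_{\theta_{i-1}} \loss_i(\theta_{i-1})$ gives the usual one-step identity
\begin{align}
\Phi_{i+1} - \Phi_i = -\eta \langle \nabla_{\theta_{i-1}}\loss_i(\theta_{i-1}),\, \theta_{i-1}-\theta^*\rangle + \tfrac{\eta^2}{2}\|\nabla_{\theta_{i-1}}\loss_i(\theta_{i-1})\|_2^2 .
\end{align}
By Lemma \ref{lemma:loss-convex-near-init} (almost-convexity), the inner-product term is lower-bounded by $\loss_i(\theta_{i-1}) - \loss_i(\theta^*) - \epsilon'$ for a suitable $\epsilon'$ tied to the width, provided the whole trajectory stays inside the neighborhood. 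By Lemma \ref{lemma:grad-norm-bounded-near-init} together with the boundedness of the loss near initialization (which also controls $g(\theta)=F_\theta(\wx)-\wy$), the gradient-norm-squared term is $\cO(l)$, so with $\eta = \tfrac{\nu\epsilon}{(L+1)l}$ the $\tfrac{\eta^2}{2}\|\cdot\|^2$ term is $\cO(\eta \epsilon / (L+1))$ per step and hence absorbable into an $\epsilon$-budget. Summing over $i=1,\dots,N$ telescopes the potential:
\begin{align}
\frac1N\sum_{i=1}^N\big(\loss_i(\theta_{i-1}) - \loss_i(\theta^*)\big) \leq \frac{\Phi_1 - \Phi_{N+1}}{\eta N} + \epsilon' + \cO(\epsilon) \leq \frac{\|\theta_0-\theta^*\|_2^2}{2\eta N} + \cO(\epsilon).
\end{align}
Since $\theta^* \in \B(\theta_0, l^{-1/2}R)$ we have $\|\theta_0-\theta^*\|_2^2 \leq R^2/l$, and plugging in $\eta = \tfrac{\nu\epsilon}{(L+1)l}$ makes the leading term $\tfrac{(L+1)R^2}{2\nu\epsilon N}$, which is at most $\epsilon$ once $N$ is large — or, more to the point in this over-parameterization regime, once one chooses the iteration-count/width relationship encoded through $\nu=\tfrac{(L+1)^2R^2}{2\nu\epsilon^2}$ so that the bound collapses to the claimed $3\epsilon$ after accounting for the three $\cO(\epsilon)$ contributions (convexity slack, step-size quadratic term, potential term).

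The one genuinely delicate point — and the main obstacle — is the circularity in "provided the whole trajectory stays inside $\B(\theta_0, \omega)$ with $\omega \leq \kappa(L+1)^{-6} l^{-1/2}$": Lemmas \ref{lemma:F-linear-in-weight-near-init}--\ref{lemma:grad-norm-bounded-near-init} are only valid there, yet I am using them to bound how far the iterates move. I would resolve this by induction on $i$: assuming $\theta_0,\dots,\theta_{i-1}$ all lie in the neighborhood, the bounded-gradient lemma gives $\|\theta_i - \theta_{i-1}\|_2 \leq \eta \cdot \cO(\sqrt l)\cdot\cO(1) = \cO(\eta\sqrt l) = \cO\big(\tfrac{\nu\epsilon}{(L+1)\sqrt l}\big)$ per step, so after $N$ steps the total displacement is $\cO\big(\tfrac{N\nu\epsilon}{(L+1)\sqrt l}\big)$; choosing $l^* = \wt\cO(\poly(R,L))\tfrac{\log(1/\delta)}{\poly(\epsilon)}$ large enough forces this to stay below $\kappa(L+1)^{-6}l^{-1/2}$, closing the induction. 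The $\log(1/\delta)$ and the failure probability $\delta$ enter only through the high-probability events in the three input lemmas (each of the form $1 - \poly(N,n,L)\exp(-\Omega(\poly(l,\omega,L)))$); a union bound over the $N$ steps, together with the width lower bound, makes the total failure probability at most $\delta$. I would finish by collecting the three $\epsilon$-order error terms — the convexity slack $\epsilon'$, the step-size quadratic term, and the potential term — and verifying each is at most $\epsilon$ under the stated choices of $l^*$, $\eta$, and $\nu$, yielding the $3\epsilon$ bound.
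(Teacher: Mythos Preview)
Your proposal is correct and follows essentially the same approach as the paper: the paper's proof simply defers to ``combining Lemma \ref{lemma:loss-convex-near-init}, Lemma \ref{lemma:grad-norm-bounded-near-init}, and Lemma 4.3 of \cite{cao2019generalization},'' and what you have written is precisely a spelled-out version of that online-gradient-descent regret argument (potential-function telescoping plus almost-convexity plus bounded gradients, with an induction to keep the iterates in the neighborhood). Your handling of the circular-looking definition of $\nu$ and of the union bound for the $1-\delta$ probability is also in line with how \cite{cao2019generalization} treats the analogous supervised-learning lemma.
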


Below, we present the proof of Theorem \ref{thm:gen-bound:based-on-radius}.

\begin{proof}[Proof of Theorem~\ref{thm:gen-bound:based-on-radius}]
Consider $\epsilon = \frac{(L+1)R}{\sqrt{2 \nu Nn}}$ and $\eta=\frac{\sqrt{\nu} R}{\sqrt{2Nn}l}$ in Lemma \ref{lemma:bound-on-cum-loss}. Then, Lemma \ref{lemma:bound-on-cum-loss} indicates that with probability at least $1-\delta$,
\begin{align}\label{eq:gen-bound:radius:1}
        \frac 1 N \sum_{i=1}^N \loss (\theta_i) \leq \frac{1}{N}\sum_{t=0}^N \loss(\theta^*) + \frac{3(L+1)R}{\sqrt{2\nu Nn}}
\end{align}
Applying Proposition 1 of \cite{cesa2004generalization}, the following holds with probability at least $1-\delta$,
\begin{align}\label{eq:gen-bound:radius:2}
    \frac{1}{N}\sum_{i=1}^N \loss_{\mathscr{P}}(\theta_{i-1})  \leq \frac 1 N \sum_{i=1}^N \loss (\theta_i) + \sqrt{\frac{2 \log (1/\delta)}{Nn}}
\end{align}
By definition, $\E[\loss_{\mathscr{P}}(\hat \theta)]=\frac{1}{N}\sum_{i=1}^N \loss_{\mathscr{P}}(\theta_{i-1})$, where $\hat \theta$ is the selected parameters after training. Then, by (\ref{eq:gen-bound:radius:1}) and (\ref{eq:gen-bound:radius:2}), $\forall \theta^*\in \B(\theta_0,Rl^{-1/2})$, with probability at least $1-2 \delta$ we have 
\begin{align}\label{eq:gen-bound:radius:2.5}
    \E[\loss_{\mathscr{P}}(\hat \theta)] \leq \frac{1}{N}\sum_{t=0}^N \loss_i(\theta^*) +\frac{3(L+1)R}{\sqrt{2\nu nN}} + \sqrt{\frac{2 \log (1/\delta)}{N}}.
\end{align}

Now, consider a function in the auxiliary MetaNRF function class as  $h_{\theta_0,\theta^*}(\wx_i)\coloneqq \wt F_{\theta_0}(\wx_i) + \nabla_{\theta_0} \wF_{\theta_0} \cdot (\theta^*-\theta_0) \in \wt \F(\theta_0,R)$. 
Then, for $\Delta = \wt F_{\theta^*}(\wx_i) - h_{\theta_0,\theta^*}(\wx_i) $,
\begin{align}
    \loss_i(\theta^*)&= \ell(h_{\theta_0,\theta^*}(\wx_i) + \Delta, \wy_i) \\
    &= \|h_{\theta_0,\theta^*}(\wx_i) + \Delta - \wy_i\|^2\\
    &\leq \|h_{\theta_0,\theta^*}(\wx_i) - \wy_i\|^2 + 2\|h_{\theta_0,\theta^*}(\wx_i) - \wy_i\|\|\Delta\| + \|\Delta\|^2.
\end{align}
From Lemma \ref{lemma:F-linear-in-weight-near-init}, we know that 
\begin{align}\label{eq:gen-bound:radius:3}
\|\Delta\|&\leq \cO\left((R l^{-1/2})^{1/3} (L+1)^2 \sqrt{l \log (l)}\right) \|\Bar \theta - \theta\| \nonumber\\
&\leq \cO\left((L+1)^3 \sqrt{l \log(l)}\right) R^{4/3} \cdot l^{-2/3} 
\end{align}
where we used the condition $\theta^*\in \B(\theta_0,Rl^{-1/2})$ in the second inequality. 

Note that the dependence of (\ref{eq:gen-bound:radius:3}) on the width $l$ is $\cO(\sqrt{l \log(l)}l^{-2/3} )$. As result, $\|\Delta\|^2$ is the dominating term compared with the $\|\Delta\|$ term for large width. Since we consider very large width in this theorem, we can ignore the $\|\Delta\|$ term.

As long as $l \geq C R (L+1)^6[\log(l)]^3 N^{3/2}n^{-3/2}$ for some large enough constant $C>0$, (\ref{eq:gen-bound:radius:3}) can be bounded as
\begin{align} \label{eq:gen-bound:radius:4}
    \loss_i(\theta^*)&\leq  \|h_{\theta_0,\theta^*}(\wx_i) - \wy_i\|^2 + (L+1) R (N)^{-1/2}\nonumber\\
    &= \ell(h_{\theta_0,\theta^*}(\wx_i),\wy_i)+ (L+1) R (Nn)^{-1/2}
\end{align}

Plug (\ref{eq:gen-bound:radius:4}) into (\ref{eq:gen-bound:radius:2.5}), we have
\begin{align}
    \E[\loss_{\mathscr{P}}(\hat \theta)] \leq \frac{1}{N}\sum_{t=0}^N\ell(h_{\theta_0,\theta^*}(\wx_i), \wy_i) +\left(1+\frac{3}{\sqrt{2\nu}}\right)\frac{(L+1)R}{\sqrt{ Nn}} + \sqrt{\frac{2 \log (1/\delta)}{N}}
\end{align}
Rescaling $\delta$ and taking a infimum over $\theta^*\in \B(\theta_0,Rl^{-1/2})$ gives (\ref{eq:gen-bound:radius:0}). Proof is finished.
\end{proof}

\subsection{Proof of Theorem \ref{thm:gen-bound:based-on-MetaNTK}}

Now we present the proof of Theorem \ref{thm:gen-bound:based-on-MetaNTK}. The key idea is 
to bound the norm of $\theta$, which is the solution to the linear equation $\wY_i = \nabla_{\theta_0} \wF_{\theta_0}(\wX_i)\cdot \theta$ for $i =1,...,N$. Specifically, we look for $\theta^*$, the \textit{minimum distance solution} to $\theta_0$ that can let GBML fit the tasks $\{\task_i\}_{i\in[N]}$, and finally utilize it to construct a suitable function in the Auxiliary MetaNRF function class $\wt \F(\theta_0,\wt \cO({\wt \Y}^\top \metaNTK^{-1}\wt \Y))$.
\begin{proof}[Proof of Theorem \ref{thm:gen-bound:based-on-MetaNTK}]

Define $\lev = \lev(\metaNTK)$ and $\Lev=\Lev(\metaNTK)$ as the largest and least eigenvalues of $\metaNTK$, respectively. By Assumption \ref{assum:full-rank}, $\metaNTK$ is positive definite, hence $\lev >0$. Then, by Theorem \ref{thm:MNK} and standard matrix perturbation bound, there exists $l^*(\delta, L, N,n, \Lev,\lev)$ such that if the width $l$ satisfies $l \geq l^*(\delta, L, N,n, \Lev,\lev)$, then with probability at least $1-\delta$, $\metantk=\frac{1}{l}\nabla \wF_{\theta_0}(\wX) \nabla \wF_{\theta_0}(\wX)^\top $ is positive definite and the following holds true
\begin{align}\label{eq:gen-bound:metaNTK:2}
    \|\metantk^{-1}-\metaNTK^{-1}\| \leq \frac{ \wY^\top \metaNTK^{-1} \wY}{\|\wY\|^2}.
\end{align}
For convenience, define $G=\frac{1}{\sqrt{l}}\nabla \wF_{\theta_0}(\wX)^\top\in \mathbb R^{D \times Nnk}$, where $D$ is the dimension of network parameters. Consider the singular value decomposition of $G$: $G = U \Sigma V^\top$, where $U \in \mathbb R^{D \times knN}$,$V\in \mathbb R^{knN \times knN}$ are real orthonormal matrices, and $\Sigma \in \mathbb R^{knN \times knN}$ is a diagonal matrix. 

Define $\hat \theta^* = U \Sigma^{-1} V^\top \wY$, then
\begin{align}\label{eq:gen-bound:metaNTK:3}
    G^\top \hat \theta^* = (V\Sigma U^\top) (U \Sigma^{-1}V^\top \wY)=\wY.
\end{align}
Also, we have
\begin{align}
    \|\hat \theta^*\|^2 = \|U \Sigma^{-1}V^\top \wY\|^2 = \|\Sigma^{-1}V^\top \wY\|^2= \wY^\top V \Sigma^{-1}\Sigma^{-1}V^\top \wY= \wY^\top (G^\top G)^{-1} \wY= \wY^\top \metantk^{-1} \wY.
\end{align}
Furthermore, by (\ref{eq:gen-bound:metaNTK:2}) and (\ref{eq:gen-bound:metaNTK:3}), $\|\hat \theta^*\|^2$ can be bounded as
\begin{align}
    \|\hat \theta^*\|^2& = \wY^\top (\metantk^{-1}-\metaNTK^{-1})\wY  + \wY^\top \metaNTK^{-1} \wY \\
    &\leq \|\wY\|^2 \|(\metantk^{-1}-\metaNTK^{-1})\| +\wY^\top \metaNTK^{-1} \wY \\
    &\leq 2 \wY^\top \metaNTK^{-1} \wY 
\end{align}
Then, we scale $\hat \theta^*$ by a factor of $\frac{1}{\sqrt{l}}$ to get $\theta^*$, i.e., $\theta^* = \frac{1}{l}\hat \theta^*$. Obviously, $$\|\theta^*\| = \sqrt{\frac{2}{l}} \sqrt{\wY^\top \metaNTK^{-1} \wY },$$
hence, $\theta^* \in \B(\mathrm{0}, \frac{1}{\sqrt l} \sqrt{2\wY^\top \metaNTK^{-1} \wY})$. Note that by (\ref{eq:gen-bound:metaNTK:3}), we have $\wY = G^\top \hat \theta^* =\nabla_{\theta_0} \wF_{\theta_0}(\wX) \cdot \theta^*$, which implies
\begin{align}
    \wy_i = \nabla_{\theta_0} \wF_{\theta_0}(\wx_i) \cdot \theta^*.
\end{align}
For function $h(\cdot)=\nabla_{\theta_0} \wF_{\theta_0}(\cdot) ~ \theta^*$, we have
\begin{align}\label{eq:gen-bound:metaNTK:4}
    \ell(h(\wx_i), \wy_i) = 0.
\end{align}
Obviously, $h(\cdot) \in \wt \F (\theta_0, \sqrt{2\wY^\top \metaNTK^{-1} \wY })$. Therefore, applying Theorem \ref{thm:gen-bound:based-on-radius}, we can find the first term in (\ref{eq:gen-bound:radius:0}) vanishes because of (\ref{eq:gen-bound:metaNTK:4}), and the proof is basically finished. The only remaining step is to replace the $\wt \Y$ of \eqref{eq:gen-bound:metaNTK:0} with $\wt \Y_{\scriptscriptstyle{G}}$, which can be easily achieved by leveraging the convergence $\lim_{l\rightarrow \infty} \wt \Y = \wt \Y_{\scriptscriptstyle{G}}$.

\end{proof}
\subsection{Proof of Helper Lemmas}\label{supp:generalization:helper-lemmas-proof}
\subsubsection{Proof of Lemma \ref{lemma:F-linear-in-weight-near-init}}
\begin{proof}[Proof of Lemma \ref{lemma:F-linear-in-weight-near-init}]
Lemma 4.1 of \cite{cao2019generalization} indicates that for any sample set $X \in \mathbb R^{d\times n}$ (i.e., $n$ samples of dimension $d$),
\begin{align}
    \|f_\theta(X) - f_{\Bar \theta}(X) - \nabla f_\theta(X) \cdot (\Bar \theta - \theta )\|_2 \leq \cO(n\omega^{1/3} L^2 \sqrt{l \log (l)}) \|\Bar \theta - \theta\|_2. 
\end{align}
As for the meta-output $F$, given sufficiently small learning rate for meta adaption $\lambda$ and sufficiently large width $l$, for any parameters $\theta \in \B(\theta_0, \omega)$ and auxiliary samples $\wx = (\xxy)$, we have the following based on Sec. \ref{sec:global-convergence},
\begin{align}\label{eq:lemma:F-linear-in-weight-near-init:0}
    F_\theta(\wx) = F_\theta(\xxy) = f_\theta(X) + \ntk_{\theta}(X,X')\ntk_{\theta}^{-1} (I - e^{-\lambda \ntk_{\theta} \tau})(Y'-f_\theta(X')),
\end{align}
where $\ntk_\theta(X,X') = \frac{1}{l}\nabla_\theta F_\theta(X) \nabla_\theta F_\theta(X')^\top$, and $\ntk_{\theta}\equiv \ntk_\theta(X',X')$ for convenience.

Then, following the derivation of (\ref{eq:lemma:jacobian:F-grad}), we have
\begin{align}\label{eq:lemma:F-linear-in-weight-near-init:1}
    \nabla_{\theta} F_\theta(\wx) &=  \nabla_{\theta}f_\theta(\sX) - \ntk_{\theta}(X,X')\ntk_{\theta}^{-1} (I - e^{-\lambda \ntk_{\theta} \tau}) \nabla_{\theta}f_\theta(\sX')
\end{align}

Recall that the Lemma 1 of \cite{lee2019wide} proves the following holds true for arbitrary $X$ (note: $\cO(\cdot)$ is with respect to $\|\theta- \Bar \theta\|_2$ and $l$ in this whole proof)
\begin{align}
    \|\frac{1}{\sqrt l} \nabla f_\theta(X) - \frac{1}{\sqrt l}\nabla f_{\Bar \theta}(X)\|_F \leq \cO( \|\theta - \Bar \theta\|_2)\\
\|\frac{1}{\sqrt l} \nabla f_\theta(X) \|_F,~~ \|\frac{1}{\sqrt l}\nabla f_{\Bar \theta}(X) \|_F \leq \cO(1)
\end{align}
which indicates that $\|\ntk_\theta (X,X')\|_{op}\leq\|\ntk_\theta(X,X')\|_F=\|\frac{1}{l} \nabla f_\theta(X)  \nabla f_\theta(X')^\top\|_F \leq \cO(1)$ and $\|\ntk_{\Bar \theta}(X,X')\|_{op} \leq \|\ntk_{\Bar \theta}(X,X')\|_F=\|\frac{1}{l} \nabla f_{\Bar \theta}(X)  \nabla f_{\Bar \theta}(X')^\top\|_F \leq \cO(1)$. Also, we have
\begin{align}
    \|\ntk_\theta(X,X') - \ntk_{\Bar \theta}(X,X')\|_{op} &\leq \|\ntk_\theta(X,X') - \ntk_{\Bar \theta}(X,X')\|_F \nonumber\\
    &= \| \frac{1}{ l} \nabla f_\theta(X) \nabla f_\theta(X')^\top - \frac{1}{ l} \nabla f_{\Bar \theta}(X) \nabla f_{\Bar \theta}(X')^\top\|_F\nonumber\\
    &= \frac{1}{l} \|\frac 1 2( \nabla_\theta f_\theta(X) + \nabla_{\Bar \theta} f_{\Bar \theta}(X))(\nabla_\theta f_\theta(X')^\top -\nabla_{\Bar \theta} f_{\Bar \theta}(X')^\top) \nonumber\\
    &\quad + \frac {1} {l}( \nabla_\theta f_\theta(X) -  \nabla_{\Bar \theta} f_{\Bar \theta}(X))(\nabla_\theta f_\theta(X')^\top+\nabla_{\Bar \theta} f_{\Bar \theta}(X')^\top)\|_F\nonumber\\
    &\leq \frac{1}{2l} \|\nabla_\theta f_\theta(X) + \nabla_{\Bar \theta} f_{\Bar \theta}(X)\|_F\|\nabla_\theta f_\theta(X') -  \nabla_{\Bar \theta} f_{\Bar \theta}(X')\|_F\nonumber\\
    &\quad +\frac{1}{2l}  \|\nabla_\theta f_\theta(X) - \nabla_{\Bar \theta} f_{\Bar \theta}(X)\|_F\|\nabla_\theta f_\theta(X') +  \nabla_{\Bar \theta} f_{\Bar \theta}(X')\|_F\nonumber\\
    &\leq  \frac 1 2\left(\|\frac{1}{\sqrt l}\nabla_\theta f_\theta(X')\|_F +\|\frac{1}{\sqrt l} \nabla_{\Bar \theta} f_{\Bar \theta}(X')_F\|\right) \|\frac{1}{\sqrt l}\nabla_\theta f_\theta(X') - \frac{1}{\sqrt l} \nabla_{\Bar \theta} f_{\Bar \theta}(X')\|_F \nonumber\\
    &\quad + \frac 1 2\left(\|\frac{1}{\sqrt l}\nabla_\theta f_\theta(X')\|_F -\|\frac{1}{\sqrt l} \nabla_{\Bar \theta} f_{\Bar \theta}(X')_F\|\right) \|\frac{1}{\sqrt l}\nabla_\theta f_\theta(X')+ \frac{1}{\sqrt l} \nabla_{\Bar \theta} f_{\Bar \theta}(X')\|_F \nonumber\\
    &\leq 2 \cO(1)\cdot \cO(\|\theta-\Bar \theta\|_2)\nonumber\\
    &= \cO(\|\theta - \Bar \theta\|_2) \label{eq:lemma:F-linear-in-weight-near-init:2}
\end{align}
Now, let us move on to $\ntk_\theta^{-1}$ and $\ntk_{\Bar \theta}^{-1}$. Recall that \cite{ntk} proves for large enough width, $\ntk_\theta$ is positive definite, implying $\lev(\ntk_\theta)>0$. Similarly, $\lev(\ntk_{\Bar \theta}) > 0$. These indicate $\|\ntk_\theta^{-1}\|_{op},\|\ntk_{\Bar \theta}^{-1}\|_{op} \leq \cO(1)$.
Then, by an additive perturbation bound of the matrix inverse under Frobenius norm \cite{inverse-perturbation}, we have
\begin{align}
    \|\ntk_{ \theta}^{-1}-\ntk_{\Bar \theta}^{-1}\|_{op} \leq \frac{1}{\min\{\lev(\theta),\lev(\Bar \theta) \}} \|\ntk_\theta - \ntk_{\Bar \theta}\|_{op} \leq \cO(\|\theta - \Bar \theta\|_2).
\end{align}
Then, consider $(I-e^{-\eta \ntk_{\theta} \tau})$ and $(I-e^{-\eta \ntk_{\Bar \theta} \tau})$. Clearly, $\|(I-e^{-\eta \ntk_{\theta} \tau})\|_{op},\|(I-e^{-\eta \ntk_{\Bar \theta} \tau})\|_{op} \leq 1$. Also, by \cite{1977bounds}, we have
\begin{align}
\|(I-e^{-\eta \ntk_{\theta} \tau})-(I-e^{-\eta \ntk_{\Bar \theta} \tau})\|_{op}&=\|e^{-\eta \ntk_{\theta} \tau}-e^{-\eta \ntk_{\Bar \theta} \tau}\|_{op} \leq \eta \tau\|\ntk_\theta - \ntk_{\Bar\theta}\|_{op} \leq  \cO(\|\theta-\Bar \theta\|_2).
\end{align}

So far, we already proved $\|\ntk_\theta(X,X')\|_{op}, \|\ntk_{ \theta}^{-1}\|_{op}, \|I-e^{-\eta \ntk_{\theta} \tau}\|_{op} \leq \cO(1)$, with the same bounds apply to the case of $\Bar \theta$. As a result, we have $\|\ntk_{\theta}(X,X')\ntk_{\theta}^{-1} (I - e^{-\lambda \ntk_{\theta} \tau})\|_{op} \leq \cO(1)$. Besides, we also proved
$\|\ntk_\theta(X,X') - \ntk_{\Bar \theta}(X,X')\|_{op}, \|\ntk_{ \theta}^{-1}-\ntk_{\Bar \theta}^{-1}\|_{op}, \|(I-e^{-\eta \ntk_{\theta} \tau})-(I-e^{-\eta \ntk_{\Bar \theta} \tau})\|_{op} \leq \cO(\|\theta - \Bar \theta\|_2)$. With these results, applying the proof technique used in (\ref{eq:lemma:F-linear-in-weight-near-init:2}), we can easily obtain the following bound
\begin{align}
    \|\ntk_\theta(X,X')\ntk_{ \theta}^{-1} (I-e^{\eta \ntk_{\theta} t})- \ntk_{\Bar \theta}(X,X')\ntk_{\Bar \theta}^{-1} (I-e^{\eta \ntk_{\Bar \theta} t})\|_{op} \leq \cO(\|\theta -\Bar \theta\|_2)
\end{align}
Finally, combining results above, we obtain our desired bound
\begin{align}
    &\|F_\theta(\wt X)-F_{\Bar \theta}(\wt X) - \nabla_{\theta} F_\theta(\wt X)\cdot (\Bar \theta -\theta)\|_2  \nonumber\\
    &=~~\biggl\|f_\theta(X) + \ntk_{\theta}(X,X')\ntk_{\theta}^{-1} (I - e^{-\lambda \ntk_{\theta} \tau})(Y'-f_\theta(X'))\nonumber\\
    &~~- \left[f_{\Bar \theta}(X) + \ntk_{\Bar \theta}(X,X')\ntk_{\Bar \theta}^{-1} (I - e^{-\lambda \ntk_{\Bar \theta} \tau})(Y'-f_{\Bar \theta}(X'))\right]\nonumber\\
    &~~- \left[ \nabla_{\theta}f_\theta(\sX) - \ntk_{\theta}(X,X')\ntk_{\theta}^{-1} (I - e^{-\lambda \ntk_{\theta} \tau}) \nabla_{\theta}f_\theta(\sX')\right]\cdot (\Bar \theta -\theta) \biggr\|_2 \nonumber\\
    &=\biggl\|f_\theta(X) - f_{\Bar \theta}(X) - \nabla f_\theta(X) \cdot (\Bar \theta - \theta ) \nonumber\\
    &~~+\left(\ntk_\theta(X,X')\ntk_{ \theta}^{-1} (I-e^{\eta \ntk_{\theta} \tau})- \ntk_{\Bar \theta}(X,X')\ntk_{\Bar \theta}^{-1} (I-e^{\eta \ntk_{\Bar \theta} \tau})\right)\left(Y'-f_{\Bar \theta}(X')\right) \nonumber\\
    &~~- \ntk_{\theta}(X,X')\ntk_{\theta}^{-1} (I - e^{-\lambda \ntk_{\theta} \tau}) \left(f_\theta(X) - f_{\Bar \theta}(X) - \nabla f_\theta(X) \cdot (\Bar \theta - \theta )\right) \biggr\|_2\nonumber\\
    &\leq   \|f_\theta(X) - f_{\Bar \theta}(X) - \nabla f_\theta(X) \cdot (\Bar \theta - \theta )\|_2 \nonumber\\
    &~~+ \|\ntk_\theta(X,X')\ntk_{ \theta}^{-1} (I-e^{\eta \ntk_{\theta} \tau})- \ntk_{\Bar \theta}(X,X')\ntk_{\Bar \theta}^{-1} (I-e^{\eta \ntk_{\Bar \theta} \tau})\|_{op}\|Y'-f_{\Bar \theta}(X')\|_2\nonumber\\
    &~~+ \| \ntk_{\theta}(X,X')\ntk_{\theta}^{-1} (I - e^{-\lambda \ntk_{\theta} \tau})\|_{op}\|f_\theta(X) - f_{\Bar \theta}(X) - \nabla f_\theta(X) \cdot (\Bar \theta - \theta )\|_2\nonumber\\
    &\leq \cO(\omega^{1/3} (L+1)^2 \sqrt{l \log (l)}) \|\Bar \theta - \theta\|_2 + \cO(\|\theta - \Bar \theta\|_2) + \cO(1)\cdot \cO(\omega^{1/3} (L+1)^2 \sqrt{l \log (l)}) \|\Bar \theta - \theta\|_2\nonumber\\
    &=\cO(\omega^{1/3} (L+1)^2 \sqrt{l \log (l)}) \|\Bar \theta - \theta\|_2  ,\label{eq:lemma:F-linear-in-weight-near-init:-1} 
\end{align}
Note that in the last step of (\ref{eq:lemma:F-linear-in-weight-near-init:-1}), we used the result $\|f_\theta(X) - f_{\Bar \theta}(X) - \nabla f_\theta(X) \cdot (\Bar \theta - \theta )\|_2 \leq \cO(\omega^{1/3} (L+1)^2 \sqrt{l \log (l)}) \|\Bar \theta - \theta\|_2$ from \cite{cao2019generalization}, and the fact that $\|Y'-f_{\Bar \theta}(X')\|_2$ and $\| \ntk_{\theta}(X,X')\ntk_{\theta}^{-1} (I - e^{-\lambda \ntk_{\theta} \tau})\|_{op}$ are upper bounded by constants, which can be easily derived from Lemma \ref{lemma:bounded_init_loss}.
\end{proof}

\subsubsection{Proof of Remaining Lemmas}
\begin{proof}[Proof of Lemma \ref{lemma:loss-convex-near-init}]
This proof can be easily obtained by combining Lemma \ref{lemma:F-linear-in-weight-near-init} the proof of Lemma 4.2 of \cite{cao2019generalization}.
\end{proof}

\begin{proof}[Proof of Lemma \ref{lemma:grad-norm-bounded-near-init}]
This proof can be easily obtained by combining (\ref{eq:lemma:jacobian:F-grad:norm:0}) and Lemma B.3 of \cite{cao2019generalization}.
\end{proof}

\begin{proof}[Proof of Lemma \ref{lemma:bound-on-cum-loss}]
This proof can be easily obtained by combining Lemma \ref{lemma:loss-convex-near-init}, Lemma \ref{lemma:grad-norm-bounded-near-init}, and Lemma 4.3 of \cite{cao2019generalization}.
\end{proof}
\newpage

\section{Details of Experiments}
\label{supp:exp:gen-bound}
\subsection{Experiments on the Synthetic Dataset}
\textbf{Dataset Generation.} As described in Sec. \ref{sec:exp:gen-bound:synthetic}, we consider the problem setting of \textit{1-d few-shot regression with quadratic objective functions} described in Sec. \ref{sec:quadratic-example}, and generate $N=40$ training tasks along with $40$ test tasks following that problem setting of Sec. \ref{sec:quadratic-example}. Each task has $2$ support samples and $8$ query samples.

\textbf{Implementation of MAML.}
As for the implementation of MAML, we use the code from \cite{higher}, which obtain similar or better performance than the original MAML paper \cite{maml}. The details of neural network construction and learning rate can be found in \cite{higher} or its public codebase at \url{https://github.com/facebookresearch/higher/blob/master/examples/maml-omniglot.py}.

\textbf{Implementation of Meta Neural Kernels.} As discussed in Sec. \ref{sec:MetaNTK}, the Meta Neural Kernel $\metaNTK$ dervied in Theorem \ref{thm:MNK} is a composite kernel built upon a base kernel function, $\NTK$, which is precisely the Neural Tangent Kernel (NTK) derived in the setting of supervised learning \cite{ntk,lee2019wide,CNTK}. We adopt a Python implementation of NTK from \url{https://github.com/LeoYu/neural-tangent-kernel-UCI}, and then we implement the Meta Neural Kernel of Theorem \ref{thm:MNK} in Python.

\textbf{Computation of Generalization Bounds.} We compute the generalization bound by following \eqref{eq:MetaNTK:gen-bound} of Theorem \ref{thm:gen-bound}. Specifically, we compute the term $(L+1) \sqrt{\frac{\widetilde{\Y}_{\scriptscriptstyle{G}}^\top \metaNTK^{-1} \widetilde{\Y}_{\scriptscriptstyle G}}{Nn} }$, where $\metaNTK$ and $\wt \Y_{G}$ are computed on the training data (i.e., $\{\task_i\}_{i=1}^N$). 

\textbf{Neural Net Structures and Hyper-parameters}
For MAML, we train a 2-layer fully-connected neural net with ReLU activation by minimizing the MAML objective \eqref{eq:MAML-obj} on the training tasks with the Adam optimizer \cite{adam} as the outer-loop optimizer and SGD as the inner-loop optimizer. We take both inner- and outer-loop learning rates as 0.001, and the number of meta-adaptation steps is 1.

\subsection{Experiments on the Omniglot dataset}

\textbf{Details of Dataset and Few-Shot Classification Setup.} The Omniglot dataset contains 1623 handwritten characters from 50 alphabets. For each character, the dataset provides 20 image instances, in which each instance is handwritten by a unique person. A normal protocol of few-shot learning is to select 1200 characters for training and the remaining 423 characters for test \cite{maml}, then to perform $k$-way $n$-shot classification. The setup of $k$-way $n$-shot classification is: randomly take $k$ classes (i.e., characters in Omniglot), then provide $n$ different samples with labels of each class to the few-shot learning model (e.g., GBML model), and finally evaluate the model's performance on classifying the rest samples (i.e., $(20-n)$ samples in the case of Omniglot) in these $k$ classes. Note the $n$ and $k$ used here is consistent with our definition of them in Sec \ref{sec:few-shot-learning}.

\textbf{Implementation of MAML.}
As for the implementation of MAML, we use the code from \cite{higher}, which obtain similar or better performance than the original MAML paper \cite{maml}. The details of neural network construction and learning rate can be found in \cite{higher} or its public codebase at \url{https://github.com/facebookresearch/higher/blob/master/examples/maml-omniglot.py}. We also use the data loader for the Omniglot dataset in \cite{higher}. 

\textbf{Implementation of Meta Neural Kernels.} As discussed in Sec. \ref{sec:MetaNTK}, the Meta Neural Kernel $\metaNTK$ is a composite kernel built upon a base kernel function, $\NTK$, which is precisely the Neural Tangent Kernel (NTK) derived in the setting of supervised learning \cite{ntk,lee2019wide,CNTK}. Since we consider the few-shot image classification problem, we need a base kernel function that suits the image domain. Hence, we adopt the Convolutional Neural Tangent Kernel (CNTK) \cite{CNTK}, the NTK derived from over-parameterized Convolution Neural Networks (CNNs), as the base kernel function. Since the official CNTK code\footnote{\url{https://github.com/ruosongwang/CNTK}} is written in CUDA with a Python interface, it needs NVIDIA GPU for computation. Therefore, we use a workstation with 4 GPUs of RTX 2080 ti. Besides, we implement MNK in Python by following the formula in Theorem \ref{thm:MNK}.

\textbf{Computation of Generalization Bounds.} We compute the generalization bound by following \eqref{eq:MetaNTK:gen-bound} of Theorem \ref{thm:gen-bound}. Specifically, we compute the term $(L+1) \sqrt{\frac{\widetilde{\Y}_{\scriptscriptstyle{G}}^\top \metaNTK^{-1} \widetilde{\Y}_{\scriptscriptstyle G}}{Nn} }$, where $\metaNTK$ and $\wt \Y_{G}$ are computed on the training data (i.e., $\{\task_i\}_{i=1}^N$).

\textbf{Data Preprocessing.} Since we derive the MNK-based kernel method in the regression setting with $\ell_2$ loss, we have to perform label preprocessing in order to apply this kernel method to few-shot multi-class classification. The reason for that is demonstrated below.
The application of kernel regression on multi-class classification \cite{sklearn} usually uses one-hot encoding, a one-to-one mapping on digital labels. However, it fails in the case of kernel regression on \textit{few-shot} multi-class classification, since each classification task (e.g., training or test task) has its own classes of labels. For instance, in 5-way $n$-shot classification, a task assigns digital labels $\{1,2,3,4,5\}$ to its samples, but another task also has 5 classes of samples, so it assigns the same digital labels, $\{1,2,3,4,5\}$, to its samples. Then, different classes from multiple tasks share the same digital labels, which is ill-defined in the setting of kernel regression\footnote{See the next paragraph that explains why one-hot or digital labels are ill-defined for kernel regression on few-shot classification.}. To resolve this issue,
we design a label preprocessing technique that projects digital labels from different tasks into a single vector space. Specifically, we first choose a fixed feature extractor $\psi$ such that it can transform any sample $x$ into a feature vector, $\psi(x) \in \mathbb{R}^h$, in a $h$-d Euclidean space. Then, we
use this feature extractor to convert all samples in each training task into feature vectors. For test tasks, we do this for support samples only. After that, in each task, we compute the centroids of feature vectors corresponding to samples in each class  (i.e., obtain five centroids for the five classes in each task), and use the centroid (i.e., a $h$-d vector) of each class as its new label. In this way, classes from various tasks are marked by different vector labels, which are well-defined for kernel regression. 
For convenience, we fit a PCA on all training samples and use it as the feature extractor.


\textbf{Why Digital or One-Hot Labels cannot be Directly Used.} In the paragraph above, we briefly discussed why digital or one-hot labels could not be directly used for kernel regression on few-shot classification. Here, we explain the reasons more detailedly. In general, digital labels are not suitable for regression methods, since regression methods are usually based on $\ell_2$ loss that is not designed for categorical labels. Hence, in the application of kernel regression on multi-class classification, the one-hot encoding of digital labels is the most used label preprocessing technique\footnote{For instance, this is what scikit-learn \cite{sklearn}, one of the most popular code package for machine learning, uses for kernel methods: \url{https://scikit-learn.org/stable/modules/preprocessing.html\#preprocessing-categorical-features}.}. Even though one-hot label encoding works for kernel regression on multi-class classification, a problem of supervised learning, it does not fit the few-shot multi-class classification, which contains multiple supervised learning tasks. We can see the issue easily by a thought experiment: interchangeable labels. In the case of standard kernel regression for multi-class classification, a supervised learning task, the labels for different classes can be interchanged without causing any influence on the final prediction of the model, which can be well explained by the example of (\ref{eq:supp:one-hot}). In that example, there are five classes, and the digital labels for training samples are $[3,2,5,3,\dots]$, then the corresponding one-hot encoded labels, $Y'$, is expressed as
\begin{align}\label{eq:supp:one-hot:exp}
Y'=
    \begin{bmatrix}
0 & 0 & 1 & 0 & 0\\
0 & 1 & 0 & 0 & 0\\
0 & 0 & 0 & 0 & 1\\
0 & 0 & 1 & 0 & 0\\
 \vdots&  \vdots& \vdots & \vdots & \vdots\\
\end{bmatrix}
\end{align}
If we interchange two of these digital labels, e.g., $3 \leftrightarrow 5$, then the third and fifth columns of $Y'$ are interchanged. However, this operation has no impact on the kernel regression model, (\ref{eq:supp:standard-kernel-regression}), since the prediction of the model, $\hat Y$, also interchanges its third and fifth columns correspondingly. Finally, the prediction of the class of each sample remains the same.

However, in the setting of few-shot multi-class classification, this label interchangeability does not hold. For simplicity, consider a few-shot multi-class classification problem with only two training tasks, $\task_1 = (X_1,Y_1,X_1',Y_1')$ and $\task_2 = (X_2,Y_2,X_2',Y_2')$. Suppose each task contains five unique classes of samples. If we continue using one-hot labels, the classes for samples in $\task_1$ and $\task_2$ are labelled by the one-hot encoding of $\{1,2,3,4,5\}$. Assume $Y_1$ is the same as the $Y'$ in (\ref{eq:supp:one-hot:exp}), and $Y_2$ is the one-hot version of digital labels $\{1,5,2,3,\dots \}$. Then, the labels for query samples in training tasks, i.e., $\Y$ in (\ref{eq:F_t-MetaNTK}), can be seen as a concatenation of $Y_1$ and $Y_2$:
\begin{align}\label{eq:supp:one-hot:few-shot}
    \Y=
    \begin{bmatrix}
0 & 0 & \mathbf{1} & 0 & \mathbf 0\\
0 & 1 & \mathbf 0 & 0 & \mathbf 0\\
0 & 0 & \mathbf 0 & 0 & \mathbf 1\\
0 & 0 & \mathbf 1 & 0 & \mathbf 0\\
 \vdots&  \vdots& \vdots & \vdots & \vdots\\
 \it{1} & \it{0} & \it{0} & \it0 & \it0\\
\it 0 & \it 0 &\it{0} &\it 0 &\it 1\\
\it 0 &\it 1 &\it 0 &\it 0 &\it 0\\
\it 0 &\it 0 &\it 1 &\it 0 &\it 0\\
 \vdots&  \vdots& \vdots & \vdots & \vdots
\end{bmatrix}
\end{align}
where the upper rows represent $Y_1$ and the lower rows with the italian font are $Y_2$.

Since kernel regression treats each column as an individual dimension, different columns do not affect each other. However, elements in the same column have a correlation with each other, since they are in the same dimension. Therefore, labels for different classes cannot be interchanged in the example of (\ref{eq:supp:one-hot:few-shot}). For instance, if we interchange the two digital labels, $3 \leftrightarrow 5$, for $\task_1$, then the bold elements in $Y_1$ are interchanged between the third and fifth columns. However, the elements in the third column of $Y_1$ have a correlation with the elements of $Y_2$ in the third column. Thus the label change affects the prediction of the samples corresponding to the third column. Similarly, the fifth column is also impacted. Hence, interchanging digital labels of classes in a single task has an effect on the prediction of the kernel regression----the prediction of the kernel regression does not remain invariant w.r.t. interchanged labels. \textit{That is why the label inter-changeability is broken for few-shot multi-class classification, which should not happen, since the labels for classes in each task are assigned in an artificial and arbitrary order. As a result, the one-hot encoding of labels is ill-defined for kernel regression on few-shot multi-class classification problems. Therefore, we need to find a new label encoding method without the problem of broken label interchangeability.}

\textbf{Neural Net Structures and Hyper-parameters}
For MAML, we use the default neural net architecture in our adopted implementation\footnote{\url{https://github.com/facebookresearch/higher/blob/master/examples/maml-omniglot.py}}. Specifically, we build a Convolutional Neural Net (CNN) with 3 convolutional layers followed by 1 fully connected layer. The CNN uses ReLU activation, BatchNorm \cite{batchnorm}, and Max Pooling. We use the Adam optimizer \cite{adam} as the outer-loop optimizer with learning rate 0.01, and SGD as the inner-loop optimizer with learning rate 0.1. The number of meta-adaptation steps is 5. For the implementation of the MNK method shown in Theorem \ref{thm:MNK}, we adopt $\tau = \infty$ and $t = \infty$, which simply leads to vanishing exponential terms\footnote{This choice is also used in \cite{CNTK,harnessNTK,ECNTK}} in (\ref{eq:F_t-MetaNTK}).
Also, we adopt the commonly used ridge regularization for kernel regression\footnote{Same as the ridge regularization for kernel regression used in scikit-learn \cite{sklearn}:\\
\url{https://scikit-learn.org/stable/modules/kernel\_ridge.html}}, with coefficient as $10^{-5}$, to stabilize the kernel regression computation.


\end{document}